\DeclareMathOperator{\rank}{rank}
\DeclareMathOperator*{\argmin}{arg\,min}
\DeclareMathOperator{\diag}{diag}
\DeclareMathOperator{\supp}{supp}
\DeclareMathOperator{\sgn}{sgn}
\def\1{\bm{1}}
\def\<{\langle}
\def\>{\rangle}
\def\valpha{{\bm{\alpha}}}
\def\vtheta{{\bm{\theta}}}
\def\vnu{{\bm{\nu}}}
\def\vomega{{\bm{\omega}}}
\def\va{{\bm{a}}}
\def\vb{{\bm{b}}}
\def\vc{{\bm{c}}}
\def\vu{{\bm{u}}}
\def\vw{{\bm{w}}}
\def\vx{{\bm{x}}}
\def\vy{{\bm{y}}}
\def\vz{{\bm{z}}}
\def\mA{{\bm{A}}}
\def\mD{{\bm{D}}}
\def\mG{{\bm{G}}}
\def\mI{{\bm{I}}}
\def\mM{{\bm{M}}}
\def\mX{{\bm{X}}}
\DeclareMathAlphabet{\mathsfit}{\encodingdefault}{\sfdefault}{m}{sl}
\SetMathAlphabet{\mathsfit}{bold}{\encodingdefault}{\sfdefault}{bx}{n}
\def\cL{{\mathcal{L}}}
\def\cP{{\mathcal{P}}}
\def\bR{{\mathbb{R}}}
\newcommand{\R}{\mathbb{R}}
\pgfplotsset{compat=1.18}
\crefname{appendix}{Appendix}{Appendices}
\Crefname{appendix}{Appendix}{Appendices}
\newtheorem{theorem}{Theorem}[section]
\newtheorem{corollary}{Corollary}[theorem]
\newtheorem{proposition}{Proposition}[section]
\newtheorem{lemma}{Lemma}[section]
\newtheorem{remark}{Remark}
\newcommand{\si}{s_{\mathrm{in}}}
\newcommand{\so}{s_{\mathrm{out}}}
\title{\textbf{Global Minimizers of $\ell^p$-Regularized Objectives \\ Yield the Sparsest ReLU Neural Networks}}
\author{%
  Julia Nakhleh \\
  Department of Computer Science\\
  University of Wisconsin-Madison\\
  \texttt{jnakhleh@wisc.edu} \\
  \and
  Robert D. Nowak \\
  Department of Electrical \& Computer Engineering \\
  University of Wisconsin-Madison\\
  \texttt{rdnowak@wisc.edu}
}
\date{}
\begin{document}

\maketitle

\begin{abstract}
Overparameterized neural networks can interpolate a given dataset in many different ways, prompting the fundamental question: which among these solutions should we prefer, and what explicit regularization strategies will provably yield these solutions? This paper addresses the challenge of finding the sparsest interpolating ReLU network—i.e., the network with the fewest nonzero parameters or neurons—a goal with wide-ranging implications for efficiency, generalization, interpretability, theory, and model compression. Unlike post hoc pruning approaches, we propose a continuous, almost-everywhere differentiable training objective whose global minima are guaranteed to correspond to the sparsest single-hidden-layer ReLU networks that fit the data. This result marks a conceptual advance: it recasts the combinatorial problem of sparse interpolation as a smooth optimization task, potentially enabling the use of gradient-based training methods. Our objective is based on minimizing $\ell^p$ quasinorms of the weights for $0 < p < 1$, a classical sparsity-promoting strategy in finite-dimensional settings. However, applying these ideas to neural networks presents new challenges: the function class is infinite-dimensional, and the weights are learned using a highly nonconvex objective. We prove that, under our formulation, global minimizers correspond exactly to sparsest solutions.  Our work lays a foundation for understanding when and how continuous sparsity-inducing objectives can be leveraged to recover sparse networks through training.
\end{abstract}

\section{Introduction}
Highly overparameterized neural networks have become the workhorse of modern machine learning. Because these networks can interpolate a given dataset in many different ways (see e.g. \cref{fig:2d_ridge,fig:2d_cc}), explicit regularization is frequently incorporated into the training procedure to favor solutions that are, in some sense, more regular or desirable. In this work, we focus on explicit regularizers which yield \textit{sparse} single-hidden-layer ReLU interpolating networks, which for our purposes are those with the fewest nonzero input weight/bias parameters among the active neurons.\footnote{In the univariate-input case, this is equivalent to the count of active neurons.}  Sparse models are particularly desirable for computational efficiency purposes, as they have lower storage requirements and computational overhead when deployed at inference time, and may have other attractive properties in terms of generalization, interpretability, and robustness (\cite{mozer1988skeletonization, guo2018sparse, liao2022achieving, liu2022robust}, among many others).

Although a myriad of sparsity-inducing training schemes have been proposed in the neural network literature, almost none of them have actually been proven to yield true \textit{sparsest} solutions, and the justifications for their use remain almost entirely heuristic and/or empirical. Furthermore, many such strategies rely on complex pruning pipelines---composed of iterative magnitude thresholding, fine-tuning, and sensitivity analyses---which are computationally costly, difficult to implement, and offer no theoretical guarantees in terms of the resulting sparsity. In contrast, we propose a simple regularization objective, based on the $\ell^p$ quasinorm of the network weights for $0 < p < 1$, whose global minimizer is \textit{provably} a sparsest interpolating ReLU network for sufficiently small $p$. This objective is continuous and differentiable away from zero, making it compatible with gradient descent. Although $\ell^p$-norm minimization with $0 < p < 1$ has been studied in finite-dimensional linear problems (most extensively in the context of compressed sensing), where it is known to guarantee sparsity under certain assumptions on the data/measurements, its behavior in the context of neural networks---wherein the features themselves are continuously parameterized and data-adaptive---is challenging to characterize mathematically, and to our knowledge, we are the first to do so. Specifically, our contributions are the following:
\begin{enumerate}[leftmargin=*]
    \item \textbf{Sparsity, uniqueness, and width/parameter bounds for univariate $\ell^p$-regularized networks.} In \cref{sec:univariate}, we prove that, for single-hidden-layer ReLU networks of input dimension one, minimizing the network's \textit{$\ell^p$ path norm} (see \eqref{opt:min_p_NN}) implicitly minimizes both its $\ell^1$ path norm (i.e., the total variation of its derivative) and, for sufficiently small $p > 0$, its $\ell^0$ path norm (total knot/neuron count). We show that for \textit{any} $0 < p < 1$, a minimum $\ell^p$ path norm interpolant of $N$ data points has no more than $N-2$ active neurons. In contrast, $\ell^1$ path norm minimization alone is \textit{not} guaranteed to implicitly minimize sparsity, and may yield solutions with arbitrarily many neurons (\cref{fig:l1_sparse_nonsparse}).  Our result follows from reframing the network training problem as an optimization over continuous piecewise linear (CPWL) functions which interpolate a dataset with minimal $p$-variation \eqref{eq:V_p} of the derivative. Using this variational framework, we can explicitly describe the optimal functions' behavior based on the geometry of the data points. This characterization provides data-dependent bounds on the sparsity and weight magnitudes of such minimum-$\ell^p$ solutions, and highlights an easily-verifiable condition on the data under which $\ell^p$ minimization for \textit{any} $0 < p < 1$ yields a sparsest interpolant ($\ell^0$ solution). Additionally, our analysis shows that the solution to the univariate $\ell^p$ minimization problem is \textit{unique} for almost every $0 < p < 1$; in contrast, univariate $\ell^0$ and $\ell^1$ solutions are both known to be non-unique in general (\cite{debarre2022sparsest, hanin2022implicit}). 
    \item \textbf{Exact sparsity in arbitrary input dimensions.} In \cref{sec:multivariate}, we show for networks of arbitrary input dimension that the problem of minimizing the network's $\ell^p$ path norm can be recast as a finite-dimensional minimization of a continuous, strictly concave function over a polytope.\footnote{We use \textit{polyhedron} to refer to an intersection of finitely many closed halfspaces, and \textit{polytope} to refer to a bounded polyhedron. Both are necessarily convex.} Using this reformulation, we show that there always exists some data-dependent threshold $p^*$ below which $\ell^p$ minimization recovers an $\ell^0$ (sparsest) solution, in terms of the count of nonzero parameters of the active neurons in the network. We also show that no $\ell^p$ (for any $0 < p < 1$) or $\ell^0$ solutions has more than $N$ active neurons and, if the data is in general position, any $\ell^0$ solution has $O(N)$ active input weight/bias parameters among these active neurons (\cref{prop:width_invariance_multivar}).
    \item \textbf{A principled, differentiable objective for sparse ReLU networks.} Our theory provides the first rigorous justification for using a smooth $\ell^p$ penalty for $0 < p < 1$ to obtain truly sparsest interpolating ReLU networks via gradient-based methods---no pruning or complex post-hoc approaches required.
\end{enumerate}
\begin{figure}
    \centering
    \begin{subfigure}{0.37\linewidth}
          \centering
          \includegraphics[width=0.9\linewidth]{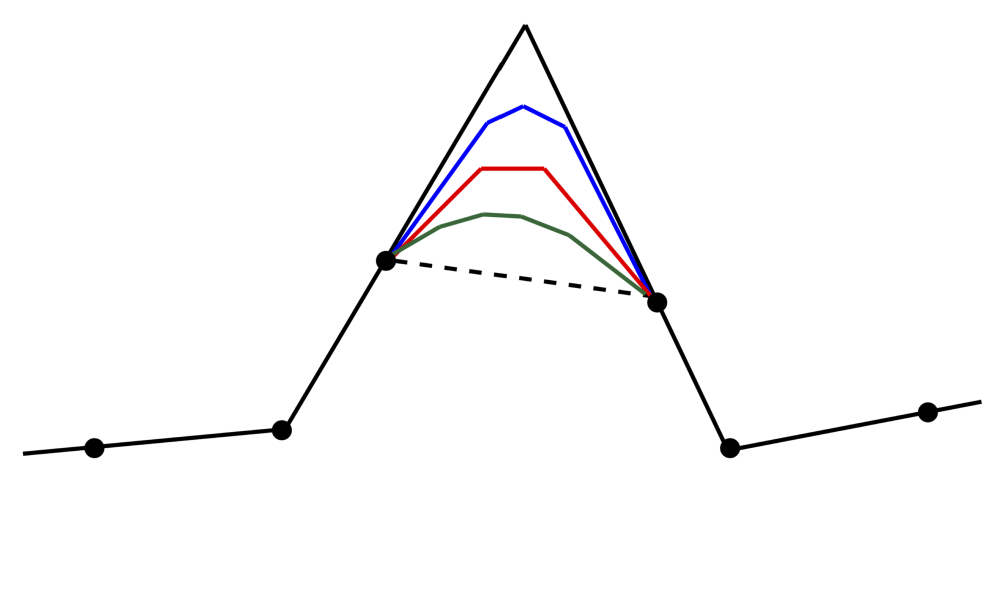}
          \caption{}
          \label{fig:l1_sparse_nonsparse}
    \end{subfigure}
    \begin{subfigure}{0.27\linewidth}
          \centering
          \includegraphics[width=0.9\linewidth]{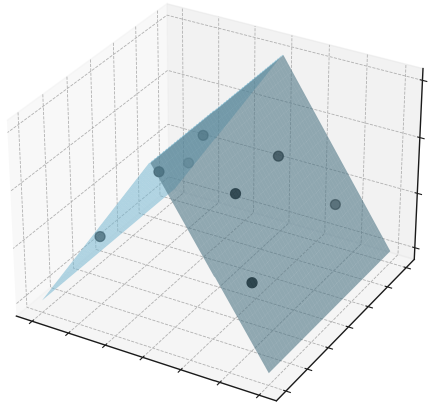}
          \caption{}
          \label{fig:2d_ridge}
    \end{subfigure}
    \begin{subfigure}{0.27\linewidth}
          \centering
          \includegraphics[width=0.9\linewidth]{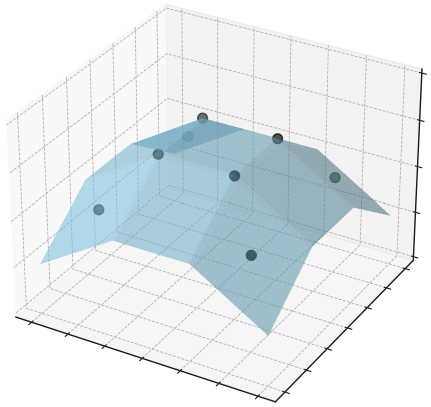}
          \caption{}
          \label{fig:2d_cc}
    \end{subfigure}
    \caption{\cref{fig:l1_sparse_nonsparse} shows several univariate min-$\ell^1$ path norm interpolants of a given dataset. Such solutions are generally non-unique, and always include at least one sparsest interpolant (black), but also include arbitrarily non-sparse interpolants (blue, red, green). \cref{fig:2d_ridge,fig:2d_cc}: two different ReLU network interpolants of the same 2D dataset with different numbers of active neurons and parameters. \cref{fig:2d_ridge} has 5 nonzero input weight/bias parameters (its $\ell^0$ path norm as in \eqref{opt:min_0_NN_multi}), while \cref{fig:2d_cc} has 16.}
    \label{fig:sparse_nonsparse}
\end{figure}

\section{Related work}
\paragraph{Sparsity via $\ell^p$ minimization in finite-dimensional linear models:} $\ell^p$ penalties with $0 < p \leq 1$ for linear constraint problems have been studied extensively in the compressed sensing literature, and have been shown to yield exact $\ell^0$ minimizers under certain conditions (typically involving restricted isometry and/or null-space constants) on the measurement matrix (\cite{candes2005decoding, chartrand2007exact, chartrand2008restricted, foucart2009sparsest}). Such penalties have also been studied in the statistics literature under the name \textit{bridge regression} (\cite{frank1993statistical, knight2000asymptotics, fan2001variable}). Existing theory in these areas is highly dependent on the fixed, finite-dimensional nature of the linear constraint, and is not readily adaptable to the neural network context, wherein the features themselves are adaptively learned.

\paragraph{$\ell^1$ path norm regularization in single-hidden-layer ReLU networks:} \cite{neyshabur2015norm} showed that the $\ell^1$ path norm of a single-hidden-layer ReLU network controls its Rademacher complexity and thus its generalization gap, but do not directly address the question of sparsity. In the context of \textit{infinite-width} ReLU networks, the problem of minimum-$\ell^1$ path norm interpolation is known to have solutions with no more active neurons than the number of data points (\cite{parhi2021banach,parhi2022kinds, shenouda2024variation}).\footnote{For input dimension greater than one, the $\ell^1$ path norm $\sum_{k=1}^K |v_k| \| \vw_k \|_2$ studied in those works differs from the one we consider in \eqref{opt:min_p_NN_multi}, which is equivalent to $\sum_{k=1}^K |v_k| \| \vw_k \|_1$ for $p = 1$.} However, solutions to that problem are known to be non-unique, and generally include interpolating ReLU networks with arbitrarily many active neurons (\cite{hanin2022implicit, debarre2022sparsest}). \cite{nakhleh2024new} show that a variant of $\ell^1$ path norm minimization applied to univariate-input, multi-output networks always yields a solution with no more than $N$ active neurons, but this solution rarely coincides with the sparsest solution unless the dataset is of a very particular form. Therefore, $\ell^1$ path norm regularization applied to single-hidden-layer ReLU networks is \textit{not} generally guaranteed to produce sparsest solutions.

\paragraph{Empirical methods for training sparse neural networks:} A large body of research has been dedicated to sparsity-promoting neural network training schemes. Here we briefly summarize some of the most well-known strategies as well as some which resemble our proposed regularization approach; our list is by no means comprehensive. Earlier works suggested using $\ell^1$ and $\ell^2$ penalties to encourage small network weights (\cite{ng2004feature, hinton1993keeping}) or applying post-training pruning approaches (\cite{lecun1989optimal, hassibi1993optimal}). More recent pruning schemes incorporate pruning iteratively into training (\cite{han2015learning, guo2016dynamic, frankle2018lottery, zhang2018systematic, zhou2019deconstructing}). Group lasso-type penalties to induce structured sparsity over neurons or channels have also been suggested (\cite{wen2016learning,scardapane2017group}). Other approaches include $\ell^0$ approximation using explicit gating mechanisms (\cite{louizos2017learning, srinivas2017training}) and variational dropout (\cite{molchanov2017variational}). Another line of research uses reparameterization tricks to replace non-smooth sparsifying objectives with smooth versions that share the same local and global minimizers (\cite{ziyin2023spred,kolb2023smoothing,kolb2025deep}). Finally, a number of different algorithms for $\ell^p$-type regularization ($p < 1$) in neural networks have been proposed and evaluated experimentally (\cite{wu2014batch,khan2018bridgeout,tang2023training,outmezguine2024decoupled,ji2025network}). While these methods have demonstrated empirical success in training sparse networks, existing theory does not guarantee that any of them will find sparsest solutions. Moreover, these approaches often require complex multi-stage pipelines and are computationally costly to implement.

\paragraph{Provable sparsest-recovery in specialized neural network settings:} In the 1D input case, \cite{boursier2023penalising} show that, under certain assumptions on the data---namely, that the data contains no more than three consecutive points on which the straight-line interpolant is strictly convex or concave---interpolation using a bias-penalized $\ell^1$ path norm regularizer will select a sparsest interpolant of the dataset. As we will see in \cref{sec:univariate}, this assumption on the data is rather restrictive, and our analysis does not require it. Their proof is also not readily extendable to multivariate inputs. \cite{debarre2022sparsest} characterize the \textit{sparsest} min-$\ell^1$ path norm interpolants in the univariate case and provide an algorithm for explicitly constructing one such solution. \cite{ergen2021convex} show that $\ell^1$ path norm minimization yields solutions with a minimal number of active neurons \textit{if} the data dimension is greater than the number of samples (precluding the univariate-input case) and the data satisfy special assumptions, such as whitened data. \cite{fridovich2025recovery} show that an iterative hard thresholding algorithm applied to shallow ReLU networks recovers sparsest solutions with high probability if the data is Gaussian. In contrast, our sparsity results do not require any assumptions on the data, and provide exact sparsity guarantees in arbitrary input dimension.

\section{Univariate $\ell^p$-regularized neural networks} \label{sec:univariate}
Here we consider single-hidden-layer $\R \to \R$ ReLU neural networks of the form
\begin{align} \label{eq:nn}
    f_{\vtheta}(x) := \sum_{k=1}^K v_k ( w_kx + b_k)_+ + ax + c
\end{align}
where $(\cdot)_+ := \max\{0,\cdot\}$ is the ReLU function, $\vtheta := \big\{ \{ w_k, b_k, v_k \}_{k=1}^K, a, c \big \}$ is the collection of network parameters, and all parameters are $\R$-valued. For a given dataset $(x_1, y_1), \dots, (x_N, y_N) \in\mathbb{R} \times \mathbb{R}$, a fixed $p \in (0,1]$, and a fixed width $K \geq N$,\footnote{Here and in \cref{sec:multivariate} we fix $K \geq N$ because interpolation in any dimension is possible with $K = N$ neurons (\cite{bubeck2020network}, Proposition 2). We will show that solution sets of the $\ell^p$ and $\ell^0$ path norm minimization problems for any input dimension are invariant to the selection of $K$ as long as $K \geq N$ (\cref{corr1,prop:width_invariance_multivar}).} consider the following problem:

\begin{equation} \label{opt:min_p_NN}
 \argmin_{\vtheta}  \sum_{k=1}^{K} |w_k v_k|^p \ , \ \mbox{subject to } f_{\vtheta}(x_i)=y_i, \,i=1,\dots,N
\end{equation}
We will refer to the quantity being minimized in \eqref{opt:min_p_NN} as the network's $\ell^p$ \textit{path norm}. Additionally, consider the ``sparsifying'' problem
\begin{equation} \label{opt:min_0_NN}
 \argmin_{\vtheta} \sum_{k=1}^K \mathbbm{1}_{w_k v_k \neq 0}  \ , \ \mbox{subject to } f_{\vtheta}(x_i)=y_i, \,i=1,\dots,N
\end{equation}
where the $\ell^0$ path norm $\sum_{k=1}^K \mathbbm{1}_{w_k v_k \neq 0}$---which is equivalent to the limit of the $\ell^p$ path norm as $p \downarrow 0$---counts the number of active neurons in the network. 

In this section, we will analyze the relationship between solutions of \eqref{opt:min_p_NN} and \eqref{opt:min_0_NN} in terms of their represented functions, and show that these functions can be explicitly described in terms of the geometry of the data points. This characterization (\cref{th:geom_char}) shows that solutions to \eqref{opt:min_p_NN} for any $0 < p < 1$ are necessarily also solutions for $p = 1$, immediately implying data-dependent bounds on the network's parameters and Lipschitz constant. This description also allows problem \eqref{opt:min_p_NN} to be reduced to a minimization of a continuous, strictly concave function over a polytope. From there, we show in \cref{th:main} that solutions to \eqref{opt:min_p_NN} are unique (in terms of their represented functions) for Lebesgue-almost every $0 < p < 1$ and that, for small enough $p$, this unique optimal function is also a \textit{sparsest} interpolant of the data (i.e., a solution to \eqref{opt:min_0_NN}). Furthermore, if the data meets certain easily-verifiable geometric assumptions, solutions to \eqref{opt:min_p_NN} for \textit{any} $0 < p < 1$ are solutions to the sparsest-interpolation problem \eqref{opt:min_0_NN}.

\subsection{Variational reformulation of \eqref{opt:min_p_NN} and \eqref{opt:min_0_NN}} \label{sec:uni_variational_reformulation}
We begin by showing that problems \eqref{opt:min_p_NN} and \eqref{opt:min_0_NN} can be equivalently expressed as a type of variational problem over the set of continuous piecewise linear (CPWL) functions which interpolate the data. This equivalence is critical for the analysis in this section, since it allows solutions to \eqref{opt:min_p_NN} and \eqref{opt:min_0_NN} to be characterized geometrically in terms of their represented functions and their local behavior around data points. Here, we let $S_{\vtheta,p}^*$ (resp. $S_{\vtheta,0}^*$)  denote the set of parameters of optimal neural networks which solve \eqref{opt:min_p_NN} (resp. \eqref{opt:min_0_NN}) for a given dataset, and let
\begin{align}
    S_p^* := \{ f: \R \to \R \ \rvert \ f = f_\vtheta, \ \vtheta \in S_{\vtheta,p}^* \}
\end{align}
be the set of functions represented by neural networks with optimal parameters in $S_{\vtheta,p}^*$, for any $0 \leq p \leq 1$. 
\begin{proposition} \label{prop:existence_and_variational}
    For any $0 \leq p \leq 1$, the set $S_p^*$ is exactly the solution set of
    \begin{equation} \label{opt:min_p_f}
     \argmin_{f} \   V_p(f) \ , \ \mbox{subject to } f(x_i)=y_i, \,i=1,\dots,N
    \end{equation}
    where the optimization in \eqref{opt:min_p_f} is taken over all  $f: \R \to \R$ which are continuous piecewise linear (CPWL) with at most $K$ knots. For such CPWL functions $f$, we define
    \begin{align} \label{eq:V_p}
        V_p(f) := \begin{cases} \sup_{\cP} \sum_{i=0}^{n_{\cP}-1} |Df(x_{i+1}) - Df(x_i)|^p =
            \sup_{\pi} \sum_{A \in \pi} |D^2 f (A)|^p, &\textrm{if $0 < p \leq 1$} \\
            \textrm{number of knots of $f$}, &\textrm{if $p = 0$}
        \end{cases}
    \end{align}
    with the first $\sup$ taken over all partitions $\cP = \{x_0 <  \dots < x_{n_{\cP}} \}$ of $\R$, and the second $\sup$ taken over partitions $\pi$ of $\R$ into countably many disjoint (Borel) measurable subsets. In particular, $S_0^*$ is non-empty.
\end{proposition}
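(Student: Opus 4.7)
The plan is to reduce the neural-network optimizations \eqref{opt:min_p_NN} and \eqref{opt:min_0_NN} to the variational problem \eqref{opt:min_p_f} via the explicit identity
\begin{align*}
V_p(f) \;=\; \sum_{j=1}^M |\alpha_j|^p \quad (0 < p \leq 1), \qquad V_0(f) \;=\; M,
\end{align*}
valid for any CPWL $f$ whose distributional second derivative is $D^2 f = \sum_{j=1}^M \alpha_j \delta_{t_j}$ with $M$ distinct knots and $\alpha_j \neq 0$. Once this identity is established, the equivalence reduces to two inequalities comparing a network's $\ell^p$ path norm to $V_p$ of its represented function.

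To prove the identity for $0 < p \leq 1$: for any interval partition $\cP = \{x_0 < \cdots < x_{n_\cP}\}$, the CPWL structure of $f$ gives $Df(x_{i+1}) - Df(x_i) = \sum_{t_j \in [x_i, x_{i+1})} \alpha_j$, so subadditivity of $t \mapsto |t|^p$ on $\R$ yields $\sum_i |Df(x_{i+1}) - Df(x_i)|^p \leq \sum_j |\alpha_j|^p$, with equality attained by refining $\cP$ to isolate each knot in its own subinterval. The Borel-partition supremum is similarly bounded and attained by isolating each $t_j$ in its own measurable set. This second half is the step that requires the most care, since a priori arbitrary Borel partitions could yield strictly larger sums; however, because there are only finitely many isolated knots, one can always split any measurable piece containing multiple $t_j$ into single-knot pieces without decreasing the sum, reducing the argument to the interval case.

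Equipped with this identity, I would then show two matching facts. First, for any network $f_\vtheta$, the mass of $D^2 f_\vtheta$ at a distinct knot $t$ equals $\alpha(t) = \sum_{k : -b_k/w_k = t} v_k |w_k|$, since a direct computation gives $\tfrac{d^2}{dx^2} (w_k x + b_k)_+ = |w_k|\, \delta_{-b_k/w_k}$; subadditivity of $|\cdot|^p$ then yields $V_p(f_\vtheta) \leq \sum_k |v_k w_k|^p$ for $0 < p \leq 1$, and the analogous $V_0(f_\vtheta) \leq \sum_k \mathbbm{1}_{w_k v_k \neq 0}$ follows by knot counting. Second, any CPWL $f$ with $M \leq K$ knots admits a canonical network representation---one neuron per knot with $w_k = 1$, $b_k = -t_k$, $v_k = \alpha_k$, the affine tail absorbed into $(a, c)$, and padded with $K - M$ zero neurons---that achieves equality $\sum_k |v_k w_k|^p = V_p(f)$.

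Chaining these facts gives both containment directions. If $\vtheta^* \in S_{\vtheta,p}^*$ and $g$ is any CPWL interpolant with at most $K$ knots, realizing $g$ by its canonical network $\vtheta_g$ and applying the two inequalities yields $V_p(f_{\vtheta^*}) \leq \sum_k |v_k^* w_k^*|^p \leq \sum_k |v_{g,k} w_{g,k}|^p = V_p(g)$, so $f_{\vtheta^*}$ minimizes $V_p$; conversely, if $f^*$ minimizes $V_p$, its canonical network is optimal for \eqref{opt:min_p_NN}. Non-emptiness of $S_0^*$ is then immediate: $V_0$ takes integer values in $\{0, \dots, K\}$ and the feasible set is non-empty (the straight-line-through-the-data interpolant has at most $N - 2 \leq K - 2$ knots), so the minimum is attained. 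The main obstacle is the Borel-partition portion of the identity, with everything else reducing to subadditivity of $|\cdot|^p$ and the explicit canonical construction.
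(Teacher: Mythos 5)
Your proposal is correct and follows essentially the same route as the paper's proof: both rest on the neuron--knot correspondence, the delta-mass computation $\tfrac{d^2}{dx^2}(w_kx+b_k)_+ = |w_k|\,\delta_{-b_k/w_k}$, subadditivity of $|\cdot|^p$ to handle coincident activation locations (the paper phrases this as merging neurons without increasing cost, you phrase it as the inequality $V_p(f_\vtheta)\leq\sum_k|v_kw_k|^p$), and the explicit one-neuron-per-knot representation achieving equality. The only cosmetic difference is that you verify the partition-supremum identity for $V_p$ in more detail than the paper, which relegates it to a remark.
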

\begin{remark}
    For $p \in (0,1]$, $V_p(f)$ is the $p$-variation (\cite{dudley2006product}, Part II.2) of the distributional derivative $Df$ (in the sense of functions), or equivalently of the second distributional derivative $D^2 f$ (in the sense of measures). In particular, for a CPWL function $f$ with knots at $u_1, \dots, u_K$ and corresponding slope changes $c_1, \dots, c_K$ at those knots, so that $D^2 f = \sum_{k=1}^K c_k \delta_{u_k}$, we have $$V_p(f) = \sum_{k=1}^K |c_k|^p$$ In the case $p = 1$, $V_1 (f)$ is exactly the total variation of $Df$ (in the sense of functions) and of $D^2 f$ (in the sense of measures), and the reformulation in \cref{prop:existence_and_variational} is equivalent to that of \cite{savarese2019infinite}. For a neural network where no two neurons ``activate'' at the same location (i.e., $b_k/w_k = b_{k'}/w_{k'}$ for $k \neq k'$), $V_p(f)$ is exactly the $\ell^p$ path norm of $f$ as defined above.
\end{remark}
The proof is in \cref{appendix:proof_prop_existence_and_variational}. \cref{prop:existence_and_variational} says that the set $S_p^*$ of functions represented by solutions to \eqref{opt:min_p_NN} is exactly the set of CPWL functions $f$ which interpolate the data with minimal sum of absolute slope changes, each taken to the $p^\textrm{th}$ power. In the case $p = 0$, solutions to \eqref{opt:min_0_NN} represent CPWL functions which interpolate the data with the fewest possible knots. This reformulation also shows that problem \eqref{opt:min_0_NN} is invariant to the choice of network width $K$, as long as $K$ is large enough to allow interpolation. As a consequence of \cref{th:geom_char}, we will see that this same width-invariance holds for problem \eqref{opt:min_p_NN}.

\subsection{Geometric characterization of solutions to \eqref{opt:min_p_f}} \label{sec:uni_geom_char}
Next, in \cref{th:geom_char}, we describe a set of geometric characteristics which any optimal network function $f \in S_p^*$ for $0 < p < 1$ must satisfy, and which at least one $f \in S_0^*$ satisfies. This characterization depends on the slopes $s_i := \frac{y_{i+1}-y_i}{x_{i+1}-x_i}$ of the straight lines $\ell_i$ connecting $(x_i, y_i)$ and $(x_{i+1}, y_{i+1})$. The \textit{discrete curvature} at a data point $x_i$ refers to $\epsilon_i := \sgn(s_i - s_{i-1})$, which is positive if the slope of the straight lines between consecutive data points increases at $x_i$, and negative if this slope decreases (with $\sgn(0) = 0$). 

In words, \cref{th:geom_char} says that the behavior of any $f \in S_p^*$ for $0 < p < 1$ is uniquely determined everywhere \textit{except} around sequences of more than three consecutive data points $x_i, \dots, x_{i+m}$ with the same discrete curvature. On these ``constant-curvature'' regions of potential ambiguity, solutions must be convex (resp. concave) if the curvature of the data is positive (resp. negative), and can have at most $m$ knots on any such region. Additionally, \cref{th:geom_char} says that solutions to \eqref{opt:min_p_f} for $0 < p < 1$ have at most $N-2$ knots. Therefore, as in the case $p = 0$, we see that problem \eqref{opt:min_p_NN} is invariant (in terms of represented functions) to the choice of network width $K$, as long as $K \geq N-2$.

\begin{theorem} \label{th:geom_char}
    For $0 < p < 1$, solutions exist to \eqref{opt:min_p_f} (hence to \eqref{opt:min_p_NN}). For any such solution, its represented function $f \in S_p^*$ is CPWL and obeys the following:
    \begin{enumerate}
        \item \label{th:geom_char_1} $f$ is linear before $x_2$ and after $x_N$; between any three or more consecutive collinear data points; and between any two consecutive points $x_i$ and $x_{i+1}$ with opposite discrete curvature $\epsilon_i \neq \epsilon_{i+1}$.
        \item \label{th:geom_char_2} On any maximal set of $m$ consecutive data points $x_i, \dots, x_{i+m}$ with the same discrete curvature (i.e., $\epsilon_{i-1} \neq \epsilon_i = \epsilon_{i+1} = \dots = \epsilon_{i+m} \neq \epsilon_{i+m+1}$):
        \begin{enumerate} 
            \item \label{th:geom_char_2a} If $m = 1$, then $f$ has a single knot between $x_i$ and $x_{i+1}$, with incoming/outgoing slopes $s_{i-1}$ at $x_{i}$ and $s_{i+1}$ at $x_{i+1}$.
            \item \label{th:geom_char_2b} If $m \geq 2$, then $f$ has incoming slope $s_{i-1}$ at $x_i$ and outgoing slope $s_{i+m}$ at $x_{i+m}$. Between $x_i$ and $x_{i+m}$, $f$ takes on at most $m-1$ slopes $u_1, \dots, u_{m-1}$ distinct from $s_{i-1}$ and $s_{i+m}$. Each $u_j$ is between $s_{i+j-1}$ and $s_{i+j}$, inclusive, and its corresponding segment passes through $(x_{i+j}, y_{i+j})$.
        \end{enumerate}
    \end{enumerate}
    Furthermore, there is always some $f \in S_0^*$ which obeys the above description. (See illustration in \cref{fig:geom_char}.)
\end{theorem}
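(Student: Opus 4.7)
The plan is to prove existence via a compactness argument and then derive the structural properties by a sequence of local perturbation arguments that exploit the strict subadditivity $|a+b|^p < |a|^p + |b|^p$ for nonzero $a,b$ of like sign and $0 < p < 1$. For existence, any CPWL interpolant (e.g., the data-connecting broken line) has finite $V_p$, so the infimum is finite. Using the merging lemma described below, a minimizing sequence can be reduced to interpolants with a bounded number of knots, all located in $[x_1, x_N]$ (knots outside this interval affect no data value but strictly increase $V_p$). The interpolation constraints together with the bound on $V_p$ then bound the slopes, so the reduced problem lives on a compact finite-dimensional parameter set on which $V_p$ is continuous, and the infimum is attained.

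The central local tool is a \textbf{merging lemma}: if a CPWL interpolant has two consecutive knots $u_1 < u_2$ strictly between consecutive data points with same-sign slope changes $c_1, c_2$, then replacing them by a single knot at $u^* := (c_1 u_1 + c_2 u_2)/(c_1+c_2) \in (u_1, u_2)$ carrying slope change $c_1+c_2$ preserves the piecewise-linear integral at every data point (by direct calculation) and strictly decreases $V_p$. Variants that absorb spurious internal slope changes into an adjacent knot at a data point---using the identity $f'(x_{i+1}^-) - s_i = \sum_k (1-w_k) c_k$ with $w_k = (x_{i+1}-u_k)/(x_{i+1}-x_i) \in (0,1]$ coming from the interpolation constraint, together with $|\sum_k (1-w_k) c_k|^p \leq \sum_k |c_k|^p$ by subadditivity---establish item \ref{th:geom_char_1}: any knot in $(-\infty, x_2)$ can be absorbed into a knot at $x_2$, linearizing $f$ on $(-\infty, x_2)$ with strict $V_p$ decrease; the symmetric argument yields linearity on $(x_N, \infty)$; a collinear triple is handled by the straight segment absorbing internal knots into endpoint knots; and the opposite-curvature case $\epsilon_i \neq \epsilon_{i+1}$ is handled analogously.

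For item \ref{th:geom_char_2}, on a maximal run $x_i,\dots,x_{i+m}$ with constant discrete curvature $\epsilon$, I would first show that all slope changes of an optimal $f$ on $[x_i, x_{i+m}]$ share the sign $\epsilon$: a slope change of opposite sign could be canceled against an adjacent slope change of sign $\epsilon$ via a subadditivity-strict perturbation, contradicting optimality. This forces $f$ convex (if $\epsilon=+1$) or concave (if $\epsilon=-1$) on the run. The boundary slopes $s_{i-1}$ at $x_i$ and $s_{i+m}$ at $x_{i+m}$ are pinned by a merging comparison against the slope changes just outside the run, which have sign $-\epsilon$ by maximality. With $f$ convex and interpolating, each intermediate slope $u_j$ taken by $f$ lies in $[s_{i+j-1}, s_{i+j}]$ by monotonicity, and each corresponding linear segment must pass through $(x_{i+j}, y_{i+j})$ at optimum---otherwise a knot-moving perturbation reduces $V_p$ via concavity of $|\cdot|^p$. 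Case (2a) with $m=1$ is then immediate. Counting at most $m$ knots per run of $m$ data points and summing across runs (noting that $x_1$ and $x_N$ contribute no slope changes) yields the overall $N-2$ bound.

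Finally, the existence of some $f \in S_0^*$ obeying the same description follows from applying the same merging and linearization operations---each of which weakly decreases the knot count---to any sparsest interpolant. The main obstacle is the analysis inside constant-curvature runs: establishing the interval constraints $u_j \in [s_{i+j-1}, s_{i+j}]$ and the pinning of each intermediate segment to a data point requires a delicate interplay between interpolation constraints, forced sign alignment of slope changes, and strict subadditivity, and must be argued without circularity with the merging lemma and boundary-pinning step.
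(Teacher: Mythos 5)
Your same-sign machinery is sound and matches the paper's: your merging lemma (replace consecutive same-sign slope changes $c_1,c_2$ at $u_1<u_2$ by $c_1+c_2$ at the first-moment-preserving point $u^*$) is exactly the mechanism behind the paper's \cref{lemma:same_sign}, and your absorption identity $f'(x_{i+1}^-)-s_i=\sum_k(1-w_k)c_k$ gives a clean route to linearity before $x_2$ and after $x_{N-1}$. The genuine gap is the opposite-sign case, which you dismiss as "handled analogously" but which cannot be reached by your toolbox. When the relevant slope changes have opposite signs, you need to show that the \emph{straight} segment $\ell_i$ beats every configuration with an interior knot. Your merging lemma does not apply (for opposite-sign $c_1,c_2$ the merge point $u^*=(c_1u_1+c_2u_2)/(c_1+c_2)$ need not lie in $(u_1,u_2)$, and $c_1+c_2$ may vanish), and subadditivity runs in the wrong direction: straightening a peak redistributes a slope change across the two data points, and for $w\in(0,1)$, $0<p<1$ one has $w^p+(1-w)^p>1$, so the naive splitting bound \emph{increases} the estimated cost. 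The same defect undermines your claim that an opposite-sign slope change inside a constant-curvature run "could be canceled against an adjacent slope change of sign $\epsilon$ via a subadditivity-strict perturbation," and your pinning of the boundary slopes $s_{i-1}$, $s_{i+m}$, which again pits changes of sign $\epsilon$ against changes of sign $-\epsilon$. The paper's \cref{lemma:opp_sign} fills this hole with a genuinely different argument: it writes the cost of a one-interior-knot configuration as a one-parameter function $C(\delta)$, observes that $C$ is concave between its kink points $\delta_1,\delta_2,\delta_3$ (so any minimizer is a kink point), and then ranks $C(\delta_1),C(\delta_2),C(\delta_3)$ by showing two auxiliary interpolating functions $h_1,h_2$ are monotone. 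That sign-sensitive monotonicity computation is precisely what distinguishes "one interior peak is optimal" (same signs) from "the straight line is optimal" (opposite signs); nothing in your proposal reproduces it.

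Two further points. First, item \ref{th:geom_char_1}'s opposite-curvature claim concerns the discrete curvature of the \emph{data} ($\epsilon_i\neq\epsilon_{i+1}$), not the signs of $s_i-\si(f,x_i)$ and $\so(f,x_{i+1})-s_i$ for the candidate $f$; bridging the two forces the paper to build a competitor modified on the \emph{union} $[x_i,x_{i+2}]$ of two consecutive intervals (adjusting the knot in $(x_{i+1},x_{i+2})$ to compensate for straightening $[x_i,x_{i+1}]$), which is not an instance of your single-interval absorption. Second, your existence argument bounds the knot count of a minimizing sequence "using the merging lemma," but that lemma only collapses same-sign runs of knots within one inter-data interval; alternating-sign knots are untouched, so compactness is not yet secured. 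The paper instead derives parameter bounds from the geometric characterization and applies Weierstrass to a restricted, compact parameterization. To repair your proposal you would need to prove an opposite-sign analogue of the merging lemma---essentially the paper's $C(\delta)$/$h_1,h_2$ computation or an equivalent---before any of items \ref{th:geom_char_1}, \ref{th:geom_char_2a}, \ref{th:geom_char_2b} can be closed.
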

\begin{corollary} \label{corr1}
    Any minimum $\ell^p$ path norm interpolant of the data for $0 < p < 1$ is also a minimum $\ell^1$ path norm interpolant, and can be represented by a network with no more than $N-2$ neurons.
\end{corollary}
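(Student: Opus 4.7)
The plan is to read both conclusions off \cref{th:geom_char} applied to any $f \in S_p^*$ with $0 < p < 1$, combined with the variational reformulation \cref{prop:existence_and_variational}. The theorem determines the slope profile of $f$ precisely enough to both compute $V_1(f)$ exactly and to bound its knot count.

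For the first claim, I would observe that on each maximal constant-curvature region $x_i, \dots, x_{i+m}$ (with $\epsilon_i = \dots = \epsilon_{i+m}$), part 2 of \cref{th:geom_char} forces the slopes of $f$ to form a monotone sequence with the sign of $\epsilon_i$: in case 2a the lone interior knot sends $s_{i-1}$ directly to $s_{i+1}$, and in case 2b each intermediate slope $u_j$ lies in the closed interval between $s_{i+j-1}$ and $s_{i+j}$, which together with the monotonicity of $s_{i-1}, s_i, \dots, s_{i+m}$ on a constant-curvature region makes $s_{i-1}, u_1, \dots, u_{m-1}, s_{i+m}$ monotone. Hence the total variation of $Df$ on $[x_i, x_{i+m}]$ telescopes to $|s_{i+m} - s_{i-1}| = \sum_{j=i}^{i+m}|s_j - s_{j-1}|$, and summing over all regions (with part 1 of the theorem ensuring linearity elsewhere) yields $V_1(f) = \sum_{i=2}^{N-1}|s_i - s_{i-1}|$. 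This value coincides with $V_1$ of the straight-line interpolant, and it is also a lower bound on $V_1$ over all CPWL interpolants by a standard localization argument: on each adjacent pair of slabs the derivative must essentially attain values near $s_{i-1}$ and $s_i$ (since those are its averages there), producing disjoint oscillations that sum to the claimed bound. Thus $f$ minimizes $V_1$ and, via the variational equivalence, $f \in S_1^*$.

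For the neuron count, I count knots region by region: case 2a contributes exactly one, and in case 2b the sequence $s_{i-1}, u_1, \dots, u_{m-1}, s_{i+m}$ has at most $m+1$ distinct values and hence at most $m$ knot-transitions. Since the maximal constant-curvature regions partition (a subset of) the $N-2$ middle data points $x_2, \dots, x_{N-1}$ with defined discrete curvature, and each region of $m+1$ such points contributes $\leq m$ knots, summation gives at most $N-2$ knots in total; part 1 of the theorem rules out knots on $(-\infty, x_2) \cup (x_N, \infty)$, between collinear triples, and between consecutive points of opposite curvature. Every CPWL function with $K$ knots is expressible as a single-hidden-layer ReLU network with $K$ active neurons plus an affine term, so $f$ admits a representation with at most $N-2$ active neurons.

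The main obstacle I anticipate is the careful bookkeeping at region boundaries --- verifying that a knot at the shared endpoint of two adjacent regions is not double-counted across them --- together with a clean rigorous derivation of the lower bound $V_1 \geq \sum_{i=2}^{N-1}|s_i - s_{i-1}|$ for CPWL interpolants, which is a standard mean-value argument on $Df$ but still requires attention for functions with discontinuous derivative. Edge cases with $\epsilon_i = 0$ (three collinear consecutive data points) also need to be tracked: such points sit outside any constant-curvature region, contribute zero to the sum, and the theorem's part 1 guarantees linearity across them, so all pieces remain consistent.
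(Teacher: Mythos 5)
Your argument is correct in substance but takes a more self-contained route than the paper for the first claim. The paper does not reprove $\ell^1$-minimality at all: it simply observes that \cref{th:geom_char} forces any $f \in S_p^*$ to obey the geometric description that \cite{hanin2022implicit} and \cite{debarre2022sparsest} already showed characterizes (a subset of) $S_1^*$, and cites those works. You instead compute $V_1(f)$ directly --- using the monotonicity of the slope sequence $s_{i-1}, u_1, \dots, u_{m-1}, s_{i+m}$ on each constant-curvature region to telescope to $|s_{i+m}-s_{i-1}|$ --- and then invoke the lower bound $V_1(g) \geq \sum_{i=2}^{N-1}|s_i - s_{i-1}|$ for all CPWL interpolants $g$. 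That lower bound is exactly the content of the cited prior work; your mean-value sketch of it is the standard argument and is fine, but it is the one step you would need to write out carefully (choosing, for each $i$, sample points of the piecewise-constant $Df$ on $[x_{i-1},x_i]$ and $[x_i,x_{i+1}]$ whose values straddle $s_{i-1}$ and $s_i$ respectively, and verifying the resulting variations do not overlap). What your approach buys is independence from the external characterization of $S_1^*$; what it costs is having to reprove that characterization's lower-bound half.

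On the knot count, one small correction to your bookkeeping: your rule ``a region of $m+1$ points contributes at most $m$ knots'' fails for singleton regions ($m=0$), i.e.\ isolated points $x_i$ with $\epsilon_{i-1}\neq\epsilon_i\neq\epsilon_{i+1}$, where part~\ref{th:geom_char_1} forces linearity on both adjacent intervals but a knot sits \emph{at} $x_i$ itself, contributing $1$ knot for $1$ point. The correct per-region bound is $\max(m,1)$ knots for $m+1$ points, which is still at most the number of points in the region, so summing over the disjoint maximal regions (which cover a subset of the $N-2$ points $x_2,\dots,x_{N-1}$) still gives at most $N-2$ knots, and your conclusion stands. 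The paper reads this bound directly off the statement of \cref{th:geom_char} without a separate count.
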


\begin{figure}
    \begin{subfigure}{\textwidth}
          \centering
          \includegraphics[width=0.9\linewidth]{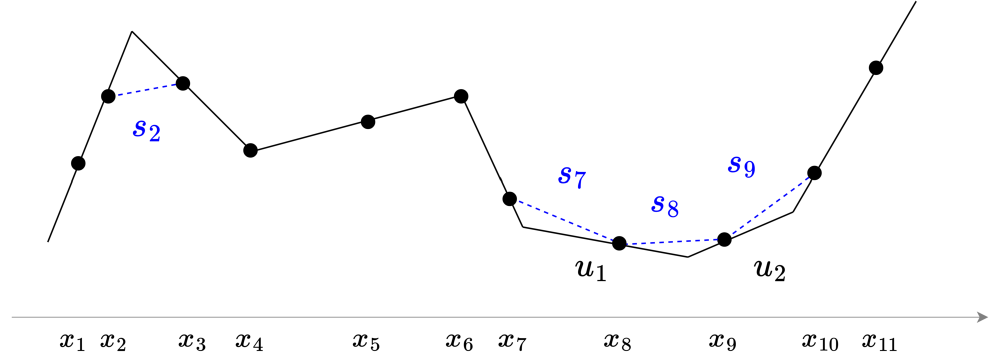}
          \caption{A function satisfying the description in \cref{th:geom_char}.}
          \label{fig:geom_char_sol0}
    \end{subfigure}
    \par\bigskip
    \begin{subfigure}{.5\textwidth}
          \centering
          \includegraphics[width=0.9\linewidth]{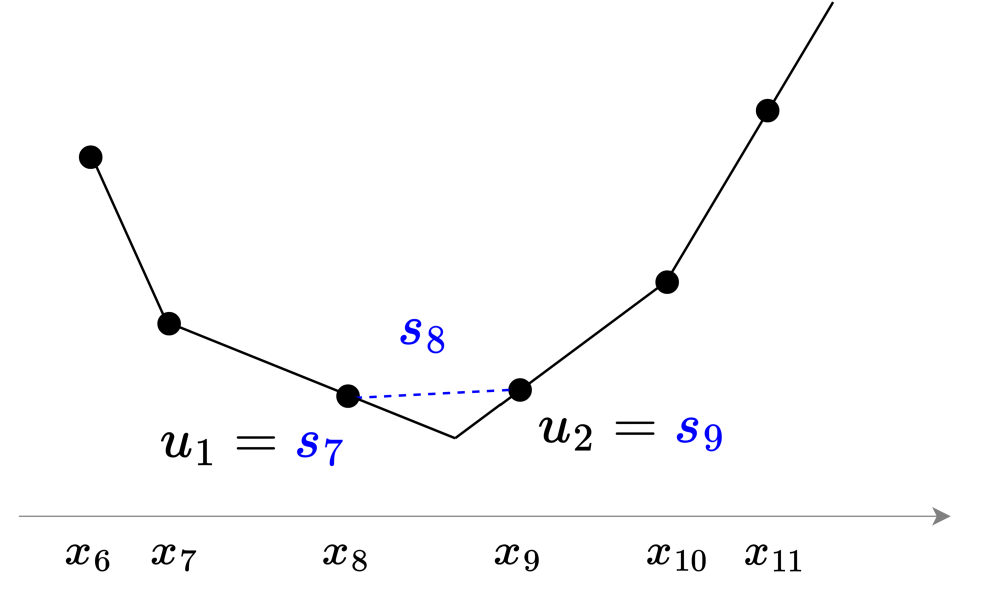}
          \caption{A second possible function on $[x_7, x_{10}]$.}
          \label{fig:geom_char_sol1}
    \end{subfigure}
    \begin{subfigure}{.5\textwidth}
          \centering
          \includegraphics[width=0.9\linewidth]{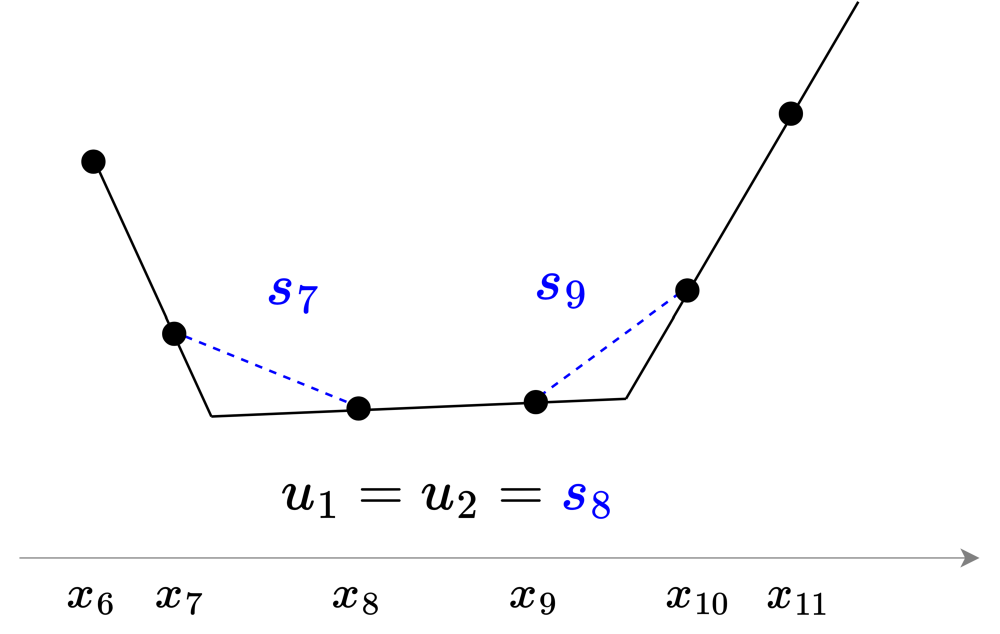}
          \caption{A third possible function on $[x_7, x_{10}]$.}
          \label{fig:geom_char_sol2}
    \end{subfigure}
    \caption{Illustration of \cref{th:geom_char}. By \cref{th:geom_char},\ref{th:geom_char_1}, any $f \in S_p^*$ for $0 < p < 1$ must agree with the function in \cref{fig:geom_char_sol0} on $(-\infty, x_7]$ and $[x_{10}, \infty)$. The only possible ambiguity occurs between $x_7$ and $x_{10}$, where all points have the same discrete curvature. Here the function behavior is described by \cref{th:geom_char},\ref{th:geom_char_2b}. \cref{fig:geom_char_sol1,fig:geom_char_sol2} show two other functions whose behavior on $[x_7, x_{10}]$ also concurs with \cref{th:geom_char},\ref{th:geom_char_2b}.}
    \label{fig:geom_char}
\end{figure}

The set $S_1^*$ of optimal neural network functions for $p = 1$ has been fully characterized in previous work (\cite{hanin2022implicit, debarre2022sparsest}), which showed that any interpolant $f$ obeying the description in \cref{th:geom_char} is in $S_1^*$. Therefore, \cref{th:geom_char} shows that any solution to \eqref{opt:min_p_f} (hence to \eqref{opt:min_p_NN}) for $0 < p < 1$ is also a solution for $p = 1$. This result is interesting because, as our proof of \cref{th:geom_char} shows, problem \eqref{opt:min_p_f} generally has multiple solutions for $p = 0$, many of which are \textit{not} solutions for $p = 1$ and may have arbitrarily large slope changes which cannot be bounded in terms of the data. Intuitively, the latter fact is unsurprising, since the objective $V_0(f)$ depends only on the number of knots of $f$, not on the magnitudes of the corresponding slope changes. One might therefore expect that penalizing $V_p$ for sufficiently small $p$ could also produce solutions with arbitrarily large slope changes (corresponding to networks with arbitrarily large weights), particularly in light of the equivalence between $V_p$ and $V_0$ penalization for sufficiently small $p$, as we demonstrate in \cref{sec:uni_uniqueness_sparsity}. However, \cref{th:geom_char} says that this is not the case. Therefore, in conjunction with \cref{th:main}, \cref{th:geom_char} says that for sufficiently small $p$, penalizing $V_p$ effectively penalizes both $V_0$ \textit{and} $V_1$ simultaneously: i.e., it selects a solution with the fewest possible knots (corresponding to a network with the fewest possible neurons), and whose weights are small in the sense that $\sum_{k=1}^K |v_k w_k|$ is minimal. In fact, \cref{th:geom_char} immediately implies the following data-dependent bounds on the parameters and on the network function's Lipschitz constant:
\begin{corollary}
    Any solution $\vtheta$ to \eqref{opt:min_p_NN} for $0 < p < 1$ has $\max_{k=1, \dots, K} |v_k w_k| \leq \sum_{k=1}^K |v_k w_k| = \sum_{i=1}^{N-2} |s_{i+1}-s_i|$, and Lipschitz constant $L \leq \max_{i=1, \dots, N-1} |s_i|$.
\end{corollary}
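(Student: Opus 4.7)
My plan is to reduce both bounds to explicit geometric quantities via \cref{corr1} and \cref{th:geom_char}. First, I would argue that at any minimizer $\vtheta$ of \eqref{opt:min_p_NN} with $p \in (0,1)$, no two active neurons share an activation location $-b_k/w_k$: if two of them did, then by strict subadditivity of $t \mapsto t^p$ on $[0,\infty)$ for $p < 1$, merging them (or deleting them, if their $v w$-values cancel) would strictly decrease $\sum_j |v_j w_j|^p$, contradicting optimality. Consequently the slope changes at the distinct knots of $f = f_\vtheta$ are exactly the nonzero elements of $\{v_k w_k\}_{k=1}^K$, and therefore $\sum_{k=1}^K |v_k w_k| = V_1(f)$. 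By \cref{corr1}, $f$ is also a minimum $\ell^1$ path norm interpolant, so $V_1(f)$ equals the universal minimum.

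Next, I would compute $V_1(f)$ directly using \cref{th:geom_char}. Linear stretches of $f$ --- those outside $[x_2, x_N]$ and between opposite-curvature consecutive points --- contribute nothing to $V_1(f)$. On any maximal constant-curvature region $\{x_i, \dots, x_{i+m}\}$, constant curvature forces $s_{i-1}, s_i, \dots, s_{i+m}$ to be a monotonic sequence, and each intermediate $u_j$ in the description of \cref{th:geom_char_2} lies between $s_{i+j-1}$ and $s_{i+j}$; hence the full list of slopes of $f$ across the region, $s_{i-1}, u_1, \dots, u_{m-1}, s_{i+m}$, is itself monotonic. The total variation of $Df$ over the region is therefore $|s_{i+m} - s_{i-1}| = \sum_{j=i}^{i+m} |s_j - s_{j-1}|$. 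Summing over all maximal constant-curvature regions covers each index $j \in \{2, \dots, N-1\}$ exactly once (with collinear points contributing zero), giving
$$ \sum_{k=1}^K |v_k w_k| \;=\; V_1(f) \;=\; \sum_{i=1}^{N-2} |s_{i+1} - s_i|. $$
The bound $\max_k |v_k w_k| \leq \sum_k |v_k w_k|$ is then immediate from non-negativity.

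For the Lipschitz bound, \cref{th:geom_char} directly describes every slope of $f$ as either one of the straight-line slopes $s_1, \dots, s_{N-1}$ (on linear stretches and at region boundaries) or an intermediate $u_j$ trapped between two consecutive $s_\ell$'s, so every slope lies in $[\min_i s_i, \max_i s_i]$, yielding $L = \sup_x |Df(x)| \leq \max_i |s_i|$. The main obstacle is the bookkeeping in the middle step: one must check that the partition of $\R$ implicitly defined by \cref{th:geom_char} (linear segments alternating with maximal constant-curvature regions) assigns each slope-difference $|s_j - s_{j-1}|$ to exactly one region, including the boundary cases around $x_2$ and $x_N$. Everything else is a mechanical consequence of \cref{corr1} and the structural description of \cref{th:geom_char}.
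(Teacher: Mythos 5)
Your argument is correct and follows exactly the route the paper intends: the paper states this corollary as an immediate consequence of \cref{th:geom_char} (via \cref{corr1}) without writing out a proof, and your reduction of $\sum_k |v_k w_k|$ to $V_1(f)$ after merging co-located neurons, the region-by-region telescoping of the total variation using monotonicity of the slopes on constant-curvature stretches, and the slope-trapping argument for the Lipschitz bound are precisely the omitted details. The only bookkeeping point worth being explicit about is that isolated nonzero-curvature points (where $\epsilon_{i-1}\neq\epsilon_i\neq\epsilon_{i+1}$) form their own singleton "regions" contributing $|s_i - s_{i-1}|$, so that every index in $\{2,\dots,N-1\}$ is indeed covered exactly once.
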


Regarding the $N-2$ neuron bound in \cref{corr1}, we note that this bound applies to \textit{any} minimum $\ell^p$ path norm solution for any $0 < p < 1$. In contrast, there exist minimum $\ell^1$ path norm solutions with $N-2$ knots, but also solutions with arbitrarily many knots (\cite{hanin2022implicit, debarre2022sparsest}); see \cref{fig:l1_sparse_nonsparse}. Solutions for $0 < p < 1$ are thus guaranteed a certain level of sparsity which is \textit{not} enforced by $p = 1$ minimization alone. Sparsest (minimum $\ell^0$) solutions---which we soon show will coincide with an $\ell^p$ path norm solution for small enough $p$---are known to have as many as $N-2$ active neurons and as few as $O(N/2)$ neurons, depending on the structure of data (\cite{debarre2022sparsest}).

The proof of \cref{th:geom_char} hinges mainly on two auxiliary results, detailed in \cref{appendix:aux_lemmas_univariate}, which describe the local behavior of any optimal $f \in S_p^*$ between consecutive data points in terms of $f$'s incoming and outgoing slopes at those points. This allows us to characterize when a knot can be removed from any interpolating function while maintaining interpolation and reducing its regularization cost $V_p$. The full proof is in \cref{appendix:proof_geom_char}.

\subsection{Uniqueness and sparsity of solutions to \eqref{opt:min_p_f} for $0 < p < 1$} \label{sec:uni_uniqueness_sparsity}

Using \cref{th:geom_char}, we show that solutions to \eqref{opt:min_p_f} are unique for almost every $0 < p < 1$, and for sufficiently small $0 < p < 1$, correspond with globally sparsest interpolants (i.e., interpolants with the fewest total knots). Additionally, \cref{th:geom_char} shows that under an easily-verifiable condition on the data, penalizing $V_p$ for \textit{any} $0 < p < 1$ yields a sparsest interpolant. In conjunction with \cref{th:geom_char}, this result tells us that for univariate data, $\ell^p$ path norm minimization for sufficiently small $p > 0$ simultaneously minimizes both the $\ell^1$ and $\ell^0$ path norms, producing a unique solution which is both maximally sparse and controlled in terms of its parameters' magnitudes. We note that almost-everywhere uniqueness of solutions to \eqref{opt:min_p_f} occurs \textit{only} in the $0 < p < 1$ case. In contrast, solutions for both $p = 0$ and $p = 1$ are non-unique in general, and for $p = 1$, they may have infinitely many knots/neurons (\cite{debarre2022sparsest}, \cite{hanin2022implicit}). 

\begin{theorem} \label{th:main}
    For all but finitely many $0 < p < 1$, the solution to \eqref{opt:min_p_f} is unique.\footnote{Uniqueness here and in the remainder of the discussion only in terms of functions which interpolate the data with the same set of absolute slope changes. If the data contains special symmetries, it may admit multiple distinct interpolating functions which have the same set of absolute slope changes (corresponding to interpolating networks with the same weights).} Furthermore, there is some data-dependent $p^*$ such that the unique solution to \eqref{opt:min_p_f} for any $0 < p < p^*$ is a solution for $p = 0$. If the data contains no more than two consecutive points with the same discrete curvature, then the solution to \eqref{opt:min_p_f} for any $0 < p < 1$ is also a solution for $p = 0$.
\end{theorem}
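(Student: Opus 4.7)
The third statement is the most direct: if the data contains no three consecutive points with the same discrete curvature, then every maximal constant-curvature block from \cref{th:geom_char} has at most two points, so case \ref{th:geom_char_2b} (the only source of ambiguity) never arises; only cases \ref{th:geom_char_1} and \ref{th:geom_char_2a} apply, both of which pin down $f$ completely. Since \cref{th:geom_char} also guarantees that at least one $f \in S_0^*$ obeys the geometric characterization, this unique $f \in S_p^*$ must coincide with it, giving part 3. For parts 1 and 2, the plan is to recast $V_p$ minimization as a finite-dimensional concave optimization. By \cref{th:geom_char}, any $f \in S_p^*$ is determined away from the ambiguous blocks (those with $m+1 \geq 3$ data points of the same discrete curvature), and on each such block $f$ is parameterized by slopes $(u_1,\dots,u_{m-1})$ in the box $P := \prod_{j=1}^{m-1}[s_{i+j-1}, s_{i+j}]$ (for positive curvature; the negative case is symmetric), with that block's contribution to $V_p(f)$ equal to $\sum_{j=0}^{m-1}|u_{j+1}-u_j|^p$ where $u_0 := s_{i-1}$ and $u_m := s_{i+m}$.

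Monotonicity of the $s_j$'s within a same-curvature block forces each $u_{j+1}-u_j$ to be non-negative throughout $P$, so each summand is a concave composition of $t \mapsto t^p$ (for $0<p<1$) with an affine function, and $V_p$ is concave on $P$. On any edge of $P$ (a single coordinate $u_j = t$ varying between $a$ and $b$, all others fixed), the restriction of $V_p$ takes the form $t \mapsto C + (t-\alpha)^p + (\beta - t)^p$ for constants $C$, $\alpha \leq a$, $\beta \geq b$, with second derivative $p(p-1)[(t-\alpha)^{p-2} + (\beta - t)^{p-2}] < 0$ on $(a,b)$; hence the restriction is strictly concave whenever $a < b$. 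Therefore the minimum of $V_p$ over $P$ is attained only at vertices of $P$, and each vertex is determined by independently choosing $u_j \in \{s_{i+j-1}, s_{i+j}\}$ in each ambiguous block, producing a finite collection $\cV$ of candidate minimizers.

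For part 1, let $v, v' \in \cV$ correspond to functions with distinct absolute-slope-change multisets. The difference $p \mapsto V_p(v) - V_p(v')$ is then a nonzero linear combination $\sum_k \alpha_k e^{p\lambda_k}$ of distinct real exponentials (by linear independence of $\{e^{p\lambda}\}_{\lambda \in \bR}$), hence is real-analytic on $\bR$ and not identically zero, and vanishes only at isolated points; in particular it has finitely many zeros in $(0,1)$. Taking the union over the finitely many vertex pairs yields a finite exceptional set outside of which the $V_p$-minimizing vertex is unique in function space. For part 2, $V_p(v) = \sum_{k: c_k(v) \neq 0}|c_k(v)|^p$ tends to $V_0(v)$ as $p \downarrow 0$, so for each ordered pair with $V_0(v) < V_0(v')$ there is a threshold $p_{v,v'} > 0$ below which $V_p(v) < V_p(v')$; setting $p^*$ to be the minimum of these finitely many thresholds ensures that any $V_p$-minimizing vertex for $p \in (0, p^*)$ attains the smallest $V_0$ among all of $\cV$. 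Since \cref{th:geom_char} provides at least one $\ell^0$-minimizer obeying the geometric characterization, which corresponds to a vertex, this smallest vertex $V_0$ equals the global $\ell^0$ minimum, and the solution to \eqref{opt:min_p_f} for $p \in (0, p^*)$ is also a solution to \eqref{opt:min_0_NN}. The main obstacle is the strict-concavity-along-edges argument in the previous paragraph: it is what forces minimizers onto the finite vertex set and prevents a non-vertex continuum of tied minimizers from undermining function-space uniqueness.
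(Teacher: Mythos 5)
Your proof follows the same skeleton as the paper's: use \cref{th:geom_char} to reduce the ambiguity to a product of boxes indexed by the constant-curvature blocks, show the per-block cost is concave so that minimizers must be vertices, compare the finitely many vertex cost curves as functions of $p$, and take $p\downarrow 0$ for the $\ell^0$ claim. Two of your tools differ cosmetically from the paper's and both work: you establish strict concavity only along edges via a direct second-derivative computation (which suffices, since a concave function minimized in the relative interior of a face is constant on that face and hence on its edges), whereas the paper proves strict concavity on the whole box via injectivity of the affine map $\valpha \mapsto \mA\valpha + \vc$; and you rule out infinitely many crossings of two cost curves by real-analyticity and linear independence of exponentials, where the paper invokes the generalized Descartes rule of signs for Dirichlet polynomials. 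Your handling of the third claim is also fine and is actually more self-contained than the paper's, which cites Theorem 4 of \cite{debarre2022sparsest} at that point.

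The one step that needs repair is the parenthetical ``which corresponds to a vertex.'' \cref{th:geom_char} guarantees that \emph{some} $f \in S_0^*$ obeys the geometric description, but such an $f$ need not sit at a vertex of your box $P$: for a block with $m=2$, \emph{every} $u_1 \in [s_i, s_{i+1}]$ yields the same knot count (two knots on the block), so there are non-vertex $\ell^0$-minimizers obeying the description. What you actually need is that the minimum of $V_0$ over $P$ is attained at some vertex, so that $\min_{v\in\cV} V_0(v)$ equals the global $\ell^0$ minimum. This is true but requires an extra observation: since $u_{j+1}=u_j$ is possible only when both equal the shared endpoint $s_{i+j}$, any slope $u_j$ lying in the open interval can be pushed to an endpoint without increasing (and possibly decreasing) the number of knots, so every knot-minimal configuration can be moved to a knot-minimal vertex. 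The paper sidesteps this by citing the explicit characterization of sparsest solutions in \cite{debarre2022sparsest} and exhibiting vertices attaining $\lceil \frac{m+1}{2}\rceil$ knots. With that patch (and the understanding, as in the paper's footnote, that ``uniqueness'' in part 1 is only up to vertices sharing the same multiset of absolute slope changes), your argument is complete.
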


The proof of \cref{th:main} is in \cref{appendix:proof_main_th}. It relies on \cref{th:geom_char} in combination with the \textit{Bauer maximum principle} (\cite{aliprantis2006infinite}, Theorem 4.104), which states that any continuous, strictly concave function over a closed, convex set attains a minimum at an extreme point of that set. The main idea is that, using \cref{th:geom_char}, we can recast the problem of finding the minimum-$V_p$ interpolant $f \in S$ (where $S$ denotes the set of functions which meet the description in \cref{th:geom_char}) as a minimization of a continuous, strictly concave function over the hypercube $[0,1]^{m-1}$. This reformulation is possible because, by \cref{th:geom_char}, the only place where these interpolants $f \in S$ may differ is around sequences of points $x_i, \dots, x_{i+m}$ (for $m \geq 2$) which all have the same nonzero discrete curvature. Using the description in \cref{th:geom_char},\ref{th:geom_char_2b}, the slopes $u_1, \dots, u_{m-1}$ of any $f \in S$ on such an interval $[x_i, x_{i+m}]$ can be expressed as convex combinations $u_j := (1-\alpha_j) s_{i+j-1} + \alpha_j s_{i+j}$, and any such solution $f \in S$ can be fully identified with its corresponding vector of the parameters $[\alpha_1, \dots, \alpha_{j-1}]^\top \in [0,1]^{m-1}$. Expressed in terms of these parameters $[\alpha_1, \dots, \alpha_{j-1}]^\top \in [0,1]^{m-1}$, the cost $V_p$ is strictly concave. Therefore, by the Bauer maximum principle, any $f \in S$ with minimal $V_p$ for $0 < p < 1$ must correspond to one of the finitely many vertices of the cube $[0,1]^{m-1}$. Having restricted the set of possible candidate solutions to this finite set (which can be shown to include at least one sparsest solution), the theorem statement follows from standard analysis arguments. 

In the next section, we will show that this general line of reasoning---recast the neural network optimization as a minimization of a strictly concave function over a polytope, and apply the Bauer maximum principle---can also be used to characterize the sparsity of $\ell^p$-regularized multivariate-input ReLU networks, although the machinery underlying the argument is very different.

\section{Multivariate $\ell^p$-regularized neural networks} \label{sec:multivariate}

Here we consider single-hidden-layer $\R^d \to \R$ ReLU neural networks of the form
\begin{align} \label{eq:multi_nn}
    f_{\vtheta}(\vx) := \sum_{k=1}^K v_k ( \vw_k^\top \overline{\vx})_+
\end{align}
with output weights $v_k \in \R$, input weights $\vw_k \in \R^{d+1}$, and $\overline{\vx} := [\vx^\top, 1]^\top$ augments the dimension of the input $\vx$ to account for a bias term. As before, $\vtheta := \{ v_k, \vw_k \}_{k=1}^K$ is the collection of network parameters. For a given dataset $(\vx_1, y_1), \dots, (\vx_N, y_N) \in\mathbb{R}^d \times \mathbb{R}$, and fixed constants $K \geq N$ and $0 < p < 1$, consider the minimum $\ell^p$ path norm interpolation problem
\begin{align} \label{opt:min_p_NN_multi}
 \argmin_{\vtheta}  \sum_{k=1}^{K} \| v_k \vw_k \|_p^p  \ , \ \mbox{subject to } f_{\vtheta}(\vx_i)=y_i, \,i=1,\dots,N
\end{align}
We will prove that, for small enough $p$, any solution to \eqref{opt:min_p_NN_multi} also solves the  ``sparsifying'' problem
\begin{equation} \label{opt:min_0_NN_multi}
 \argmin_{\vtheta}  \sum_{k=1}^{K}  \| v_k \vw_k \|_0 \ , \ \mbox{subject to } f_{\vtheta}(\vx_i)=y_i, \,i=1,\dots,N
\end{equation}

The multivariate $\ell^0$ path norm objective in \eqref{opt:min_0_NN_multi} counts the number of nonzero input weight/bias parameters of the \textit{active} neurons\footnote{A neuron $\vx \mapsto v_k(\vw_k^\top \overline{\vx})_+$ is \textit{active} if $v_k \vw_k \neq \bm{0}$; i.e., that neuron has a nonzero contribution to the network function.} in the network. We begin by upper bounding the sparsity of solutions to \eqref{opt:min_0_NN_multi} and showing that, as in the univariate case, problems \eqref{opt:min_p_NN_multi} and \eqref{opt:min_0_NN_multi} are invariant to the selection of width $K$ as long as $K \geq N$.
\begin{proposition} \label{prop:width_invariance_multivar}
    For any $K \geq N$ and any $0 < p < 1$, solutions to \eqref{opt:min_p_NN_multi} exist, and any such solution has at most $N$ active neurons. The same holds for \eqref{opt:min_0_NN_multi}. Furthermore, if the data $\vx_1, \dots, \vx_N$ are in general position,\footnote{A set of points $\vx_1, \dots, \vx_N \in \R^d$ are in \textit{general (linear) position} if no $k$ of them lie in a $k-2$ dimensional affine subspace, for $k = 2, 3, \dots, d+1$. If $N \geq d+1$, this is equivalent to the statement that no hyperplane contains more than $d$ points.} then any solution to \eqref{opt:min_0_NN_multi} has $ \sum_{k=1}^K \| v_k \vw_k \|_0 = O(N)$.
\end{proposition}

See proof in \cref{appendix:proof_width_invariance_multivar} for explicit constants in various cases.

To show the equivalence of problems \eqref{opt:min_p_NN_multi} and \eqref{opt:min_0_NN_multi} for sufficiently small $p$, we first show that both problems can be recast as finite- (albeit high-) dimensional optimizations over a linear constraint set. This reformulation is heavily inspired by Theorem 1 in \cite{pilanci2020neural}. Here we denote element-wise inequality for vectors $\va, \vb$ as $\va \leq \vb$. Let $\overline{\mX} := [\overline{\vx}_1, \dots, \overline{\vx}_N]^\top \in \R^{N \times (d+1)}$ be the matrix of augmented data points $\overline{\vx}_i := [\vx_i^\top, 1]^\top$, $\vy := [y_1, \dots, y_N]^\top \in \R^N$ be the vector of labels, and $\{ \mD_j \}_{j=1}^J$ be the collection of all $N \times N$ binary matrices of the form $\diag(\mathbbm{1}[\overline{\mX} \vu \geq \bm{0}])$ for some $\vu \in \R^{d+1}$. It is known that $J \leq2  \sum_{k=0}^{d} \binom{N-1}{k} $ (\cite{cover2006geometrical}).

\begin{lemma} \label{lemma:multivar_reparam} For any $0 < p < 1$ and any $K \geq N$, let $\vtheta = \{ v_k, \vw_k \}_{k=1}^K$ be a solution to \eqref{opt:min_p_NN_multi}. Then there is another solution $\vtheta' = \{ v_k', \vw_k' \}_{k=1}^K$ to \eqref{opt:min_p_NN_multi}, which is reconstructed from a solution $\{ \vnu_j', \vomega_j' \}_{j=1}^J$---which necessarily satisfies $|\{ j \ \rvert \  \vnu_j \neq 0 \}| + |\{ j \ \rvert \  \vomega_j \neq 0 \}| \leq N$---to the problem
    \begin{align} \label{opt:min_p_NN_reparam}
      \argmin_{\{ \vnu_j, \vomega_j \}_{j=1}^J \subset \R^{d+1}}  \sum_{j=1}^J \| \vnu_j \|_p^p + \| \vomega_j \|_p^p  \ , \ &\mbox{subject to } \sum_{j=1}^J \mD_j \overline{\mX} (\vnu_j - \vomega_j) = \vy, \\ &(2 \mD_j - \mI) \overline{\mX} \vnu_j \geq \bm{0}, \  (2 \mD_j - \mI) \overline{\mX} \vomega_j \geq \bm{0}, \ j = 1, \dots, J
    \end{align}
    as
    \begin{align} \label{eq:block_to_NN_formula}
        \{ \vw_k' \}_{k=1}^K &= \left\{ \vnu_{j}'/\alpha_j \ \rvert \  \vnu_j' \neq 0 \right\} \cup \left\{ \vomega_j'/\beta_j \ \rvert \  \vomega_j' \neq 0 \right\} \\
        \{ v_k' \}_{k=1}^K &= \left\{ \alpha_j \  \rvert \ \vnu_j' \neq 0 \right\} \cup \left\{ -\beta_j \ \rvert \  \vomega_j' \neq 0 \right\}
    \end{align}
    for any choice of $\alpha_1, \beta_1, \dots, \alpha_J, \beta_J > 0$. Both solutions satisfy $\sum_{k=1}^K \| v_k \vw_k \|_0 = \sum_{k=1}^K \| v_k' \vw_k' \|_0$ as well as $\sum_{k=1}^K  \| v_k \vw_k \|_q^q = \sum_{k=1}^K  \| v_k' \vw_k' \|_q^q$ for any $0 < q < 1$. The same statement holds for solutions $\vtheta$ to \eqref{opt:min_0_NN_multi}, with the objective in \eqref{opt:min_p_NN_reparam} replaced by $\sum_{j=1}^J \| \vnu_j \|_0 + \| \vomega_j \|_0$. 
\end{lemma}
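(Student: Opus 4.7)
The plan is to adapt the convex-reformulation argument of \cite{pilanci2020neural} to the non-convex $\ell^p$ setting. The key observation is that each ReLU neuron, evaluated on the $N$ data points, realizes one of at most $2^N$ binary activation patterns $\vs_j \in \{0,1\}^N$. Grouping neurons by their pattern and by the sign of $v_k$ naturally aggregates the network (on the data) into $2 \cdot 2^N$ vectors: $\vnu_j$ collecting positive-$v_k$ neurons with pattern $\vs_j$, and $\vomega_j$ collecting negative-$v_k$ neurons with pattern $\vs_j$. These are precisely the variables of \eqref{opt:min_p_NN_reparam}.

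I would first verify the easy direction: any feasible $\{\vnu_j, \vomega_j\}$ for \eqref{opt:min_p_NN_reparam} yields, via \eqref{eq:block_to_NN_formula}, a feasible point $\vtheta'$ for \eqref{opt:min_p_NN_multi}. The cone constraint $(2\mD_j - \mI)\overline{\mX}\vnu_j \geq \bm{0}$ forces the sign of $\vnu_j^\top \overline{\vx}_i$ to agree with $s_{j,i}$, so $(\vnu_j^\top \overline{\vx}_i)_+ = s_{j,i}\,\vnu_j^\top \overline{\vx}_i = [\mD_j \overline{\mX}\vnu_j]_i$; summing over $j$ and subtracting the analogous $\vomega_j$ terms reduces the interpolation requirement to the reparam equality constraint $\sum_j \mD_j \overline{\mX}(\vnu_j - \vomega_j) = \vy$. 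Since $v_k' \vw_k'$ equals $\vnu_j$ or $-\vomega_j$ by construction, the per-neuron $\ell^\infty$ bounds transfer directly, and the identities $\sum_k \|v_k' \vw_k'\|_q^q = \sum_j (\|\vnu_j\|_q^q + \|\vomega_j\|_q^q)$ for every $q \in [0,1)$ (as well as the analogous $\ell^0$ identity) follow term-by-term. Existence of a reparam minimizer follows from compactness of the feasible set (closed, bounded by the $\ell^\infty$-box) and continuity of the objective, and an extreme-point argument like that underlying \cref{prop:width_invariance_multivar} caps the number of nonzero blocks by $O(N) \leq K$, so that padding with zero neurons reaches width $K$.

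The reverse direction is more delicate. Given any $\vtheta$ feasible for \eqref{opt:min_p_NN_multi}, I would associate each active neuron $k$ with the pattern index $j(k)$ matching $\mathbf{1}\{\overline{\mX}\vw_k \geq \bm{0}\}$ and aggregate $\vnu_j := \sum_{k:\, j(k) = j,\, v_k > 0} v_k \vw_k$ and $\vomega_j := \sum_{k:\, j(k) = j,\, v_k < 0} (-v_k)\vw_k$. Each summand lies in the convex cone $\{\vu : (2\mD_j - \mI)\overline{\mX}\vu \geq \bm{0}\}$, so the aggregates do too, and the NN interpolation condition rearranges into the reparam equality. The main technical obstacle is the $\ell^\infty$ bound on these aggregates, which is not automatic from the per-neuron bounds. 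I plan to resolve this via strict subadditivity $|a+b|^p < |a|^p + |b|^p$ for $ab \neq 0$ and $p \in (0,1)$: whenever two neurons share a pattern and a sign of $v_k$ and their merged weight $\vv = v_{k_1}\vw_{k_1} + v_{k_2}\vw_{k_2}$ still satisfies $\|\vv\|_\infty \leq R$, replacing them by a single neuron with $v \vw = \vv$ strictly decreases $\sum_k \|v_k \vw_k\|_p^p$, contradicting optimality of $\vtheta$. Hence at an NN optimum each $(j, \text{sign})$ block is already a single neuron (or zero), aggregation is trivial, and the $\ell^\infty$ constraint automatically passes to $\vnu_j$ and $\vomega_j$. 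Combining both directions equates the optimal values of \eqref{opt:min_p_NN_multi} and \eqref{opt:min_p_NN_reparam}; in particular, the aggregate $\{\vnu_j, \vomega_j\}$ constructed from $\vtheta$ is itself a reparam optimum, so reconstructing $\vtheta'$ from it via \eqref{eq:block_to_NN_formula} yields an NN optimum whose parameter multiset $\{v_k' \vw_k'\}$ coincides (up to zero padding) with the unaggregated $\{v_k \vw_k\}$, giving the claimed matching $\ell^q$ and $\ell^0$ path norms. The analogous reasoning for \eqref{opt:min_0_NN_multi} uses the trivial subadditivity $\|\va + \vb\|_0 \leq \|\va\|_0 + \|\vb\|_0$ in place of the strict $\ell^p$ inequality.
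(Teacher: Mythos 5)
Your proposal follows essentially the same route as the paper's proof: group the neurons by binary activation pattern and by the sign of $v_k$, aggregate each group into $\vnu_j$ or $\vomega_j$, use (strict) subadditivity of $t \mapsto t^p$ to show the aggregation is lossless at a global optimum, encode pattern realizability via the cone constraints $(2\mD_j - \mI)\overline{\mX}\vnu_j \geq \bm{0}$, and reconstruct via \eqref{eq:block_to_NN_formula}. One minor overreach: the strict-subadditivity contradiction only rules out \emph{overlapping supports} among same-block neurons (which is the conclusion the paper draws), not multiple neurons per block outright, since two disjointly-supported neurons in one block incur no $\ell^p$ penalty when merged---but this is immaterial, because disjoint supports already give $\|\vnu_j\|_\infty \leq \max_k \|v_k \vw_k\|_\infty \leq R$ and preservation of the $\ell^q$ and $\ell^0$ path norms, so your argument goes through unchanged.
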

The proof of \cref{lemma:multivar_reparam} is in \cref{appendix:proof_multivar_concave_convex}. The main idea is that although there are uncountably many ways to choose the neurons' parameters, there are only $J$ possible binary activation patterns, i.e., vectors representing whether a given neuron is active on each data point. By combining all neurons which induce the same activation pattern into two neurons (one with positive output weight and one with negative output weight), the network's output and $\ell^p$ path norm can be expressed as a sum over these $J$ activation patterns. The equality constraint in \eqref{opt:min_p_NN_reparam} reflects the data-fitting requirement, and the inequality constraints ensure that each $\vnu_j, \vomega_j$ correspond appropriately to the activation pattern $\mD_j$ in order for the reconstruction formula \eqref{eq:block_to_NN_formula} to hold. With this reformulation in hand, we are ready for the main result of this section: 
\begin{theorem} \label{th:multivar}
    For any dataset, there is some data-dependent $p^*$ such that any solution to \eqref{opt:min_p_NN_multi} for any $0 < p < p^*$ is a solution to \eqref{opt:min_0_NN_multi}.
\end{theorem}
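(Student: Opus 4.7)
The plan is to apply \cref{lemma:multivar_reparam} to reduce \eqref{opt:min_p_NN_multi} and \eqref{opt:min_0_NN_multi} to the finite-dimensional polytope-constrained problem \eqref{opt:min_p_NN_reparam_mat_paper},
\[ \min_{\vz \in P} \|\vz\|_p^p, \qquad P := \{ \vz \in \R^n : \mA \vz = \vy,\ \mG \vz \geq \vzero,\ \|\vz\|_\infty \leq R \}, \]
with $n = 2^{N+1}(d+1)$, and then to apply a Bauer Maximum Principle argument in the spirit of \cref{th:main}, with a polytope-orthant decomposition playing the role of the univariate parameter cube $[0,1]^{m-1}$. Since the reconstruction formula \eqref{eq:block_to_NN_formula} preserves both $\sum_k \|v_k\vw_k\|_0 = \|\vz\|_0$ and the $\ell^q$ path norms, it suffices to show that for some data-dependent $p^* > 0$, every minimizer $\vz^*$ of the reformulated $\ell^p$ problem satisfies $\|\vz^*\|_0 = s_0 := \min_{\vz \in P}\|\vz\|_0$.

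First, I would decompose $P = \bigcup_\sigma P_\sigma$ into the $2^n$ sign-restricted sub-polytopes $P_\sigma := P \cap \{ \vz : \sigma_i z_i \geq 0 \text{ for all } i \}$ indexed by $\sigma \in \{-1, +1\}^n$. On each $P_\sigma$, $|z_i| = \sigma_i z_i$, so $\phi(\vz) := \|\vz\|_p^p = \sum_i (\sigma_i z_i)^p$ is strictly concave (even across axis boundaries $z_i = 0$, by direct verification exploiting strict concavity of $t \mapsto t^p$ on $[0,\infty)$ for $0 < p < 1$). By the Bauer Maximum Principle, every minimizer of $\phi$ on $P_\sigma$ is an extreme point of $P_\sigma$, and since any global minimizer of $\phi$ over $P$ lies in at least one $P_\sigma$, it belongs to the finite set $V := \bigcup_\sigma \mathrm{ext}(P_\sigma)$.

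Next, I would argue that $V$ contains at least one $\ell^0$-minimizer of the reformulated problem. Taking any $\vz_0^* \in P$ with $\|\vz_0^*\|_0 = s_0$, set $S := \mathrm{supp}(\vz_0^*)$ and $\sigma_i = \sgn(z_{0,i}^*)$ for $i \in S$ (arbitrary otherwise). The set $F := P_\sigma \cap \{ z_i = 0 : i \notin S \}$ is a nonempty face of $P_\sigma$ containing $\vz_0^*$; hence any extreme point of $F$ is also an extreme point of $P_\sigma$ (and thus lies in $V$) and has support $\subseteq S$, which by $\ell^0$-minimality of $\vz_0^*$ must equal $S$. So some $\vz^{**} \in V$ satisfies $\|\vz^{**}\|_0 = s_0$.

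Finally, finiteness of $V$ yields a strictly positive lower bound $m_V := \min\{ |z_i| : \vz \in V,\ z_i \neq 0 \}$, and since $|z_i|^p \to 1$ uniformly on $[m_V, R]$ as $p \to 0$, we have $\phi(\vz) \to \|\vz\|_0$ uniformly over $\vz \in V$. Hence there exists $p^* > 0$ (one may take $p^* = \log((s_0+1)/s_0)/\log(R/m_V)$) such that for all $0 < p < p^*$, $\phi(\vz) < \phi(\vz')$ whenever $\vz \in V$ has $\|\vz\|_0 = s_0$ and $\vz' \in V$ has $\|\vz'\|_0 \geq s_0 + 1$. This forces every $\ell^p$-minimizer of the reformulated problem to be an $\ell^0$-minimizer, and translating back via \cref{lemma:multivar_reparam} completes the argument. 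The main obstacle I expect is the orthant-decomposition step: specifically, (i) verifying strict concavity of $\phi$ on each $P_\sigma$ at points on coordinate hyperplanes where $|z_i|^p$ has unbounded derivative, and (ii) ensuring the face-restriction construction actually contributes an extreme point of $P_\sigma$ of support size exactly $s_0$ to $V$. Once $V$ is established as a finite set containing an $\ell^0$-minimizer, the remaining $p \to 0$ comparison is a routine finite-dimensional argument.
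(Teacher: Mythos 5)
Your proposal is correct and follows essentially the same route as the paper's proof: reduce to the finite-dimensional polytope problem via \cref{lemma:multivar_reparam}, use the Bauer Maximum Principle and strict concavity of $\|\cdot\|_p^p$ to confine all $\ell^p$ minimizers to a finite set of extreme points containing an $\ell^0$ minimizer, and then compare costs for small $p$ using the smallest nonzero extreme-point coordinate and the bound $R$, arriving at a threshold of the same form as \eqref{eq:p_ineq}. The only substantive difference is the technical device for handling non-concavity across orthants: you decompose the feasible polytope into $2^n$ sign-restricted sub-polytopes (with a clean face argument placing an $\ell^0$ minimizer in the candidate set), whereas the paper folds the polytope into the nonnegative orthant via the projected ``module vector'' polytope $\Omega'$; both devices are sound.
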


The proof, presented in \cref{appendix:proof_multivar_th}, follows the approaches of \cite{yang2022sparse,peng2015np} in analyzing sparsity of solutions to $\min_{\vz: \mA \vz = \vy} \| \vz \|_p^p$ for an underdetermined linear system $\mA \vz = \vy$, with modifications to account for the linear inequality constraints in \eqref{opt:min_p_NN_reparam}. The fundamental observation is that the linear constraints in \eqref{opt:min_p_NN_reparam} determine a polytope, and the map $\vz \mapsto \| \vz \|_p^p$ is strictly concave on each individual orthant and invariant to absolute values of vector elements. By projecting the constraint set of \eqref{opt:min_p_NN_reparam} into the nonnegative orthant, the problem turns into a minimization of a continuous, strictly concave function over a polytope. By the Bauer maximum principle, any solution to this problem occurs at one of the finitely many vertices of that polytope, and by appropriately normalizing the vertices of this polytope, we are able to demonstrate the desired result.

Although \cref{th:multivar} applies to any input dimension, thus recovering part of the result of \cref{th:main}, our multivariate analysis does not immediately recover the univariate results on functional structure or uniqueness of solutions; nor does it demonstrate that solutions for $0 < p < 1$ are always solutions for $p = 1$, as was shown in the univariate case (\cref{corr1}). Thus, although \cref{th:multivar} guarantees exact sparsest recovery for sufficiently small $p$ in arbitrary input dimensions, the multivariate problem leaves many interesting open questions, which we save for future work.

\section{Experiments}
We perform several simple experiments on synthetic data which suggest that our proposed $\ell^p$ path norm lends itself to practical application, recovering far sparser solutions more quickly than unregularized or weight decay-regularized gradient-based training. To implement our regularizer, we use a proximal gradient algorithm based on the iteratively reweighted $\ell^1$ method of \cite{candes2008enhancing,figueiredo2007majorization}, the details of which are summarized in \cref{appendix:rw_l1}. \cref{fig:sparsity_over_time_main_text} shows the sparsity over time of networks trained with our reweighted $\ell^1$ algorithm for three different values of $p \in \{0.4, 0.7, 1 \}$, as well as with unregularized Adam and AdamW weight decay, on two different synthetic datasets. For all values of $p$, the $\ell^p$-regularized networks are much sparser much earlier in training than the unregularized or weight decay regularized networks, with the $p = 0.4$ networks being the sparsest. For the univariate synthetic dataset, the $p = 0.4$ regularized network recovers the true sparsest solution, and for the multivariate synthetic dataset, all $\ell^p$ regularized networks recover solutions which obey the sparsity upper bound guaranteed by \cref{prop:width_invariance_multivar}. For further details, results, and discussion, see \cref{appendix:experimental_setup_results}. Code for these experiments is available at \url{https://github.com/julianakhleh/sparse_nns_lp}.

\begin{figure}
    \centering
    \begin{subfigure}{0.5\textwidth}
          \centering
          \includegraphics[width=\linewidth]{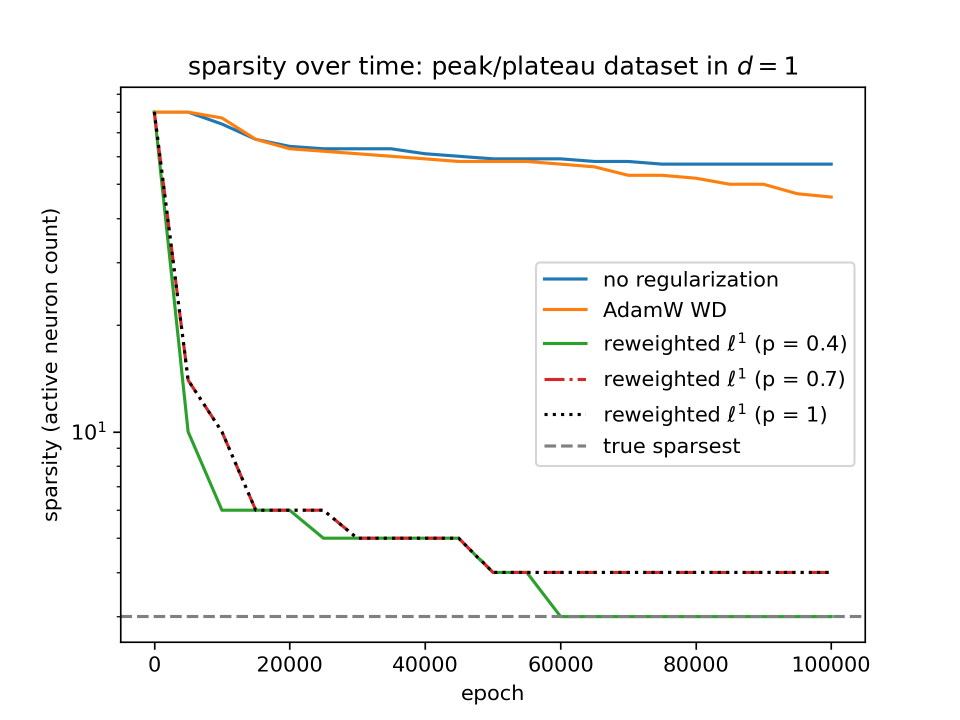}
          \label{fig:sparsity_over_time_univariate_main_text}
    \end{subfigure}%
    \begin{subfigure}{0.5\textwidth}
          \centering
          \includegraphics[width=\linewidth]{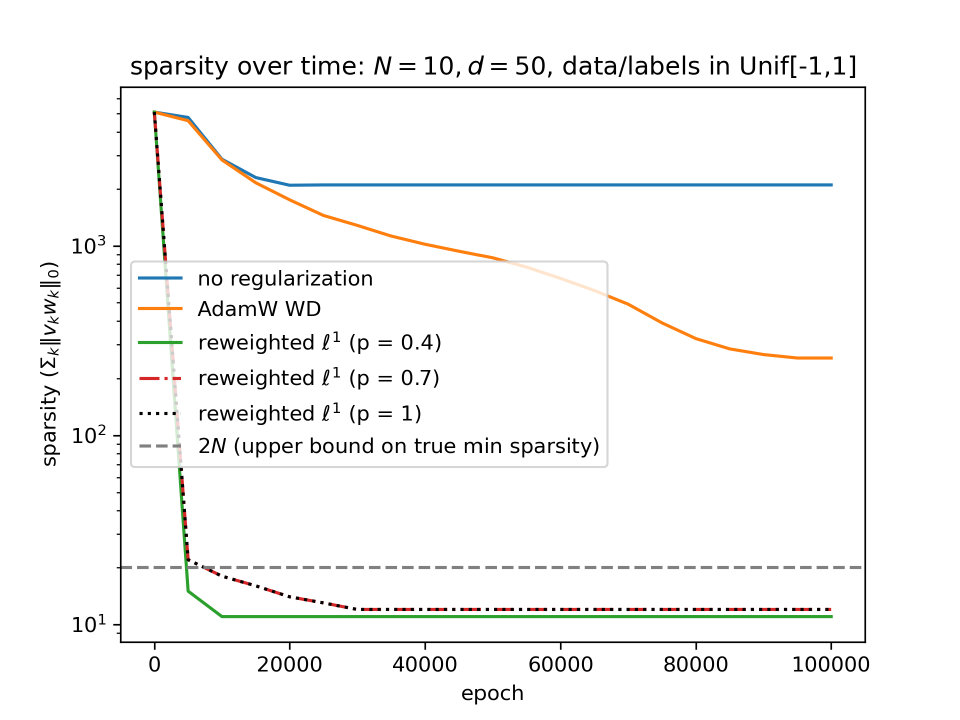}
          \label{fig:sparsity_over_time_high_d_main_text}
    \end{subfigure}
    \caption{Sparsity over time of networks trained to interpolation with a reweighted $\ell^1$ algorithm (see \cref{appendix:rw_l1}) for $\ell^p$ path norm regularization, $p \in \{0.4, 0.7, 1 \}$, and of unregularized and weight decay-regularized networks. Results on the left are for a synthetic univariate ``peak/plateau'' dataset, and results on the right are for a high-dimensional set of random data and labels. The gray dashed lines reflect the true minimal sparsity (in the univariate case, left) and the upper bound on the minimal sparsity guaranteed by \cref{prop:width_invariance_multivar} in the multivariate case (right). For further details, results, and discussion, see \cref{appendix:experimental_setup_results}.}
    \label{fig:sparsity_over_time_main_text}
\end{figure}

\section{Conclusion and Discussion} \label{sec:conclusion_discussion}
We have introduced a smooth, $\ell^p$ path norm ($0 < p < 1$) regularization framework whose global minimizers provably coincide with the sparsest ReLU network interpolants for sufficiently small $p$, thus recasting the combinatorial $\ell^0$ minimization problem as a differentiable objective compatible with gradient descent. In the univariate case, we showed minimum $\ell^p$ path norm interpolants are  unique for almost every $0 < p < 1$; never require more than $N-2$ neurons; and are also $\ell^1$ minimizers, yielding explicit data-dependent parameter and Lipschitz bounds. In arbitrary dimensions, we demonstrate a similar $\ell^p$-$\ell^0$ equivalence for sufficiently small $p$. Our proposed regularization objective  offers a principled, gradient-based alternative to heuristic pruning methods for training truly sparse neural networks.

While we demonstrate the existence of $p$ small enough for $\ell^p$/$\ell^0$ minimization equivalence, our proofs do not yield an efficient way to compute the ``critical threshold'' $p^*$, although they do demonstrate that estimating this $p^*$ is in theory possible by enumerating an exponential number of vertices of a data-dependent polytope. Whether or not $p^*$ can be computed or estimated \textit{efficiently} is an open question of interest for future work. Other possible directions of interest are to extend our results here to multi-output and deep architectures and to other notions of sparsity (such as sparsity over entire neurons vs. parameters in the multi-dimensional case).

\bibliography{refs}

@article{scardapane2017group,
  title={Group sparse regularization for deep neural networks},
  author={Scardapane, Simone and Comminiello, Danilo and Hussain, Amir and Uncini, Aurelio},
  journal={Neurocomputing},
  volume={241},
  pages={81--89},
  year={2017},
  publisher={Elsevier}
}

@article{wen2016learning,
  title={Learning structured sparsity in deep neural networks},
  author={Wen, Wei and Wu, Chunpeng and Wang, Yandan and Chen, Yiran and Li, Hai},
  journal={Advances in neural information processing systems},
  volume={29},
  year={2016}
}

@inproceedings{srinivas2017training,
  title={Training sparse neural networks},
  author={Srinivas, Suraj and Babu, R. Venkatesh},
  booktitle={IEEE Conference on Computer Vision and Pattern Recognition Workshops (CVPRW)},
  pages={138--145},
  year={2017},
  organization={IEEE}
}

@inproceedings{louizos2017learning,
  title={Learning sparse neural networks through $L_0$ regularization},
  author={Louizos, Christos and Welling, Max and Kingma, Diederik P},
  booktitle={International Conference on Learning Representations (ICLR)},
  year={2018}
}

@inproceedings{zhang2018systematic,
  title={A systematic dnn weight pruning framework using alternating direction method of multipliers},
  author={Zhang, Tianyun and Ye, Shaokai and Zhang, Kaiqi and Tang, Jian and Wen, Wujie and Fardad, Makan and Wang, Yanzhi},
  booktitle={Proceedings of the European conference on computer vision (ECCV)},
  pages={184--199},
  year={2018}
}

@article{outmezguine2024decoupled,
  title={Decoupled Weight Decay for Any $p$ Norm},
  author={Outmezguine, Nadav Joseph and Levi, Noam},
  journal={arXiv preprint arXiv:2404.10824},
  year={2024}
}

@article{fridovich2025recovery,
  title={A Recovery Guarantee for Sparse Neural Networks},
  author={Fridovich-Keil, Sara and Pilanci, Mert},
  journal={arXiv preprint arXiv:2509.20323},
  year={2025}
}

@article{yang2022sparse,
  title={Sparse solutions of a class of constrained optimization problems},
  author={Yang, Lei and Chen, Xiaojun and Xiang, Shuhuang},
  journal={Mathematics of Operations Research},
  volume={47},
  number={3},
  pages={1932--1956},
  year={2022},
  publisher={INFORMS}
}

@article{cover2006geometrical,
  title={Geometrical and statistical properties of systems of linear inequalities with applications in pattern recognition},
  author={Cover, Thomas M},
  journal={IEEE transactions on electronic computers},
  number={3},
  pages={326--334},
  year={2006},
  publisher={IEEE}
}

@inproceedings{ng2004feature,
  title={Feature selection, L 1 vs. L 2 regularization, and rotational invariance},
  author={Ng, Andrew Y},
  booktitle={Proceedings of the twenty-first international conference on Machine learning},
  pages={78},
  year={2004}
}

@article{hanin2022implicit,
  title={On The Implicit Bias of Weight Decay in Shallow Univariate {ReLU} Networks},
  author={Hanin, Boris},
  year={2022}
}

@article{debarre2022sparsest,
  title={Sparsest piecewise-linear regression of one-dimensional data},
  author={Debarre, Thomas and Denoyelle, Quentin and Unser, Michael and Fageot, Julien},
  journal={Journal of Computational and Applied Mathematics},
  volume={406},
  pages={114044},
  year={2022},
  publisher={Elsevier}
}

@article{boursier2023penalising,
  title={Penalising the biases in norm regularisation enforces sparsity},
  author={Boursier, Etienne and Flammarion, Nicolas},
  journal={Advances in Neural Information Processing Systems},
  volume={36},
  pages={57795--57824},
  year={2023}
}

@article{jameson2006counting,
  title={Counting zeros of generalised polynomials: {D}escartes’ rule of signs and {L}aguerre’s extensions},
  author={Jameson, Graham JO},
  journal={The Mathematical Gazette},
  volume={90},
  number={518},
  pages={223--234},
  year={2006},
  publisher={Cambridge University Press}
}

@article{khan2018bridgeout,
  title={Bridgeout: stochastic bridge regularization for deep neural networks},
  author={Khan, Najeeb and Shah, Jawad and Stavness, Ian},
  journal={IEEE Access},
  volume={6},
  pages={42961--42970},
  year={2018},
  publisher={IEEE}
}

@inproceedings{pilanci2020neural,
  title={Neural networks are convex regularizers: Exact polynomial-time convex optimization formulations for two-layer networks},
  author={Pilanci, Mert and Ergen, Tolga},
  booktitle={International Conference on Machine Learning},
  pages={7695--7705},
  year={2020},
  organization={PMLR}
}

@article{bubeck2020network,
  title={Network size and size of the weights in memorization with two-layers neural networks},
  author={Bubeck, S{\'e}bastien and Eldan, Ronen and Lee, Yin Tat and Mikulincer, Dan},
  journal={Advances in Neural Information Processing Systems},
  volume={33},
  pages={4977--4986},
  year={2020}
}

@book{aliprantis2006infinite,
  title={Infinite dimensional analysis: a hitchhiker's guide},
  author={Aliprantis, Charalambos D and Border, Kim C},
  year={2006},
  publisher={Springer Science \& Business Media}
}

@article{goemans20093,
  title={3. Linear programming and polyhedral combinatorics},
  author={Goemans, Michel X},
  journal={Massachusetts Inst. Technol., Cambridge, MA, USA},
  year={2009}
}

@article{peng2015np,
  title={{$NP/CMP$} Equivalence: A Phenomenon Hidden Among Sparsity Models $l_0$  Minimization and $l_p$  Minimization for Information Processing},
  author={Peng, Jigen and Yue, Shigang and Li, Haiyang},
  journal={IEEE Transactions on Information Theory},
  volume={61},
  number={7},
  pages={4028--4033},
  year={2015},
  publisher={IEEE}
}

@article{dudley2006product,
  title={Product integrals, Young integrals and $p$-variation},
  author={Dudley, Richard M and Norvai{\v{s}}a, Rimas},
  journal={Differentiability of Six Operators on Nonsmooth Functions and p-Variation},
  pages={73--208},
  year={2006},
  publisher={Springer}
}

@article{candes2008enhancing,
  title={Enhancing sparsity by reweighted $\ell^1$ minimization},
  author={Candes, Emmanuel J and Wakin, Michael B and Boyd, Stephen P},
  journal={Journal of Fourier analysis and applications},
  volume={14},
  number={5},
  pages={877--905},
  year={2008},
  publisher={Springer}
}

@article{yang2022pathprox,
  title={PathProx: A Proximal Gradient Algorithm for Weight Decay Regularized Deep Neural Networks},
  author={Yang, Liu and Zhang, Jifan and Shenouda, Joseph and Papailiopoulos, Dimitris and Lee, Kangwook and Nowak, Robert D},
  journal={arXiv preprint arXiv:2210.03069},
  year={2022}
}

@article{figueiredo2007majorization,
  title={Majorization--minimization algorithms for wavelet-based image restoration},
  author={Figueiredo, M{\'a}rio AT and Bioucas-Dias, Jos{\'e} M and Nowak, Robert D},
  journal={IEEE Transactions on Image processing},
  volume={16},
  number={12},
  pages={2980--2991},
  year={2007},
  publisher={IEEE}
}

@inproceedings{ziyin2023spred,
  title={spred: Solving L1 Penalty with SGD},
  author={Ziyin, Liu and Wang, Zihao},
  booktitle={International Conference on Machine Learning},
  pages={43407--43422},
  year={2023},
  organization={PMLR}
}

@article{kolb2023smoothing,
  title={Smoothing the Edges: Smooth Optimization for Sparse Regularization using Hadamard Overparametrization},
  author={Kolb, Chris and M{\"u}ller, Christian L and Bischl, Bernd and R{\"u}gamer, David},
  journal={arXiv preprint arXiv:2307.03571},
  year={2023}
}

@article{kolb2025deep,
  title={Deep weight factorization: Sparse learning through the lens of artificial symmetries},
  author={Kolb, Chris and Weber, Tobias and Bischl, Bernd and R{\"u}gamer, David},
  journal={arXiv preprint arXiv:2502.02496},
  year={2025}
}

@article{nakhleh2024new,
  title={A New Neural Kernel Regime: The Inductive Bias of Multi-Task Learning},
  author={Nakhleh, Julia and Shenouda, Joseph and Nowak, Robert},
  journal={Advances in Neural Information Processing Systems},
  volume={37},
  pages={140451--140477},
  year={2024}
}

@article{wu2014batch,
  title={Batch gradient method with smoothing $L_{1/2}$ regularization for training of feedforward neural networks},
  author={Wu, Wei and Fan, Qinwei and Zurada, Jacek M and Wang, Jian and Yang, Dakun and Liu, Yan},
  journal={Neural Networks},
  volume={50},
  pages={72--78},
  year={2014},
  publisher={Elsevier}
}

@article{tang2023training,
  title={Training Compact DNNs with $l_{1/2}$ Regularization},
  author={Tang, Anda and Niu, Lingfeng and Miao, Jianyu and Zhang, Peng},
  journal={Pattern Recognition},
  volume={136},
  pages={109206},
  year={2023},
  publisher={Elsevier}
}

@article{ergen2021convex,
  title={Convex geometry and duality of over-parameterized neural networks},
  author={Ergen, Tolga and Pilanci, Mert},
  journal={Journal of machine learning research},
  volume={22},
  number={212},
  pages={1--63},
  year={2021}
}

@article{fan2001variable,
  title={Variable selection via nonconcave penalized likelihood and its oracle properties},
  author={Fan, Jianqing and Li, Runze},
  journal={Journal of the American statistical Association},
  volume={96},
  number={456},
  pages={1348--1360},
  year={2001},
  publisher={Taylor \& Francis}
}

@article{foucart2009sparsest,
  title={Sparsest solutions of underdetermined linear systems via $\ell^q$-minimization for $0 < q \leq 1$},
  author={Foucart, Simon and Lai, Ming-Jun},
  journal={Applied and Computational Harmonic Analysis},
  volume={26},
  number={3},
  pages={395--407},
  year={2009},
  publisher={Elsevier}
}

@article{chartrand2007exact,
  title={Exact reconstruction of sparse signals via nonconvex minimization},
  author={Chartrand, Rick},
  journal={IEEE Signal Processing Letters},
  volume={14},
  number={10},
  pages={707--710},
  year={2007},
  publisher={IEEE}
}

@article{frank1993statistical,
  title={A statistical view of some chemometrics regression tools},
  author={Frank, LLdiko E and Friedman, Jerome H},
  journal={Technometrics},
  volume={35},
  number={2},
  pages={109--135},
  year={1993},
  publisher={Taylor \& Francis}
}

@article{knight2000asymptotics,
  title={Asymptotics for lasso-type estimators},
  author={Knight, Keith and Fu, Wenjiang},
  journal={Annals of statistics},
  pages={1356--1378},
  year={2000},
  publisher={JSTOR}
}

@article{chartrand2008restricted,
  title={Restricted isometry properties and nonconvex compressive sensing},
  author={Chartrand, Rick and Staneva, Valentina},
  journal={Inverse Problems},
  volume={24},
  number={3},
  pages={035020},
  year={2008},
  publisher={IOP Publishing}
}

@inproceedings{savarese2019infinite,
  title={How do infinite width bounded norm networks look in function space?},
  author={Savarese, Pedro and Evron, Itay and Soudry, Daniel and Srebro, Nathan},
  booktitle={Conference on Learning Theory},
  pages={2667--2690},
  year={2019},
  organization={PMLR}
}

@article{candes2005decoding,
  title={Decoding by linear programming},
  author={Candes, Emmanuel J and Tao, Terence},
  journal={IEEE transactions on information theory},
  volume={51},
  number={12},
  pages={4203--4215},
  year={2005},
  publisher={IEEE}
}

@inproceedings{neyshabur2015norm,
  title={Norm-based capacity control in neural networks},
  author={Neyshabur, Behnam and Tomioka, Ryota and Srebro, Nathan},
  booktitle={Conference on learning theory},
  pages={1376--1401},
  year={2015},
  organization={PMLR}
}

@article{parhi2021banach,
  title={Banach space representer theorems for neural networks and ridge splines},
  author={Parhi, Rahul and Nowak, Robert D},
  journal={Journal of Machine Learning Research},
  volume={22},
  number={43},
  pages={1--40},
  year={2021}
}

@article{parhi2022kinds,
  title={What kinds of functions do deep neural networks learn? Insights from variational spline theory},
  author={Parhi, Rahul and Nowak, Robert D},
  journal={SIAM Journal on Mathematics of Data Science},
  volume={4},
  number={2},
  pages={464--489},
  year={2022},
  publisher={SIAM}
}

@article{shenouda2024variation,
  title={Variation spaces for multi-output neural networks: Insights on multi-task learning and network compression},
  author={Shenouda, Joseph and Parhi, Rahul and Lee, Kangwook and Nowak, Robert D},
  journal={Journal of Machine Learning Research},
  volume={25},
  number={231},
  pages={1--40},
  year={2024}
}

@inproceedings{frankle2018lottery,
  title={The Lottery Ticket Hypothesis: Finding Sparse, Trainable Neural Networks},
  author={Frankle, Jonathan and Carbin, Michael},
  booktitle={International Conference on Learning Representations},
  year={2018}
}

@article{zhou2019deconstructing,
  title={Deconstructing lottery tickets: Zeros, signs, and the supermask},
  author={Zhou, Hattie and Lan, Janice and Liu, Rosanne and Yosinski, Jason},
  journal={Advances in neural information processing systems},
  volume={32},
  year={2019}
}

@article{han2015learning,
  title={Learning both weights and connections for efficient neural network},
  author={Han, Song and Pool, Jeff and Tran, John and Dally, William},
  journal={Advances in neural information processing systems},
  volume={28},
  year={2015}
}

@article{lecun1989optimal,
  title={Optimal brain damage},
  author={LeCun, Yann and Denker, John and Solla, Sara},
  journal={Advances in neural information processing systems},
  volume={2},
  year={1989}
}

@inproceedings{hassibi1993optimal,
  title={Optimal brain surgeon and general network pruning},
  author={Hassibi, Babak and Stork, David G and Wolff, Gregory J},
  booktitle={IEEE international conference on neural networks},
  pages={293--299},
  year={1993},
  organization={IEEE}
}

@inproceedings{hinton1993keeping,
  title={Keeping the neural networks simple by minimizing the description length of the weights},
  author={Hinton, Geoffrey E and Van Camp, Drew},
  booktitle={Proceedings of the sixth annual conference on Computational learning theory},
  pages={5--13},
  year={1993}
}

@inproceedings{molchanov2017variational,
  title={Variational dropout sparsifies deep neural networks},
  author={Molchanov, Dmitry and Ashukha, Arsenii and Vetrov, Dmitry},
  booktitle={International conference on machine learning},
  pages={2498--2507},
  year={2017},
  organization={PMLR}
}

@article{ji2025network,
  title={Network slimming using Lp (p< 1) regularization},
  author={Ji, Fengrui and Chen, Xinwang and Chu, Renxin and Liu, Baolin},
  journal={Pattern Recognition},
  pages={111711},
  year={2025},
  publisher={Elsevier}
}

@article{guo2016dynamic,
  title={Dynamic network surgery for efficient dnns},
  author={Guo, Yiwen and Yao, Anbang and Chen, Yurong},
  journal={Advances in neural information processing systems},
  volume={29},
  year={2016}
}

@article{guo2018sparse,
  title={Sparse dnns with improved adversarial robustness},
  author={Guo, Yiwen and Zhang, Chao and Zhang, Changshui and Chen, Yurong},
  journal={Advances in neural information processing systems},
  volume={31},
  year={2018}
}

@article{liao2022achieving,
  title={Achieving adversarial robustness via sparsity},
  author={Liao, Ningyi and Wang, Shufan and Xiang, Liyao and Ye, Nanyang and Shao, Shuo and Chu, Pengzhi},
  journal={Machine Learning},
  pages={1--27},
  year={2022},
  publisher={Springer}
}

@inproceedings{liu2022robust,
  title={Robust training under label noise by over-parameterization},
  author={Liu, Sheng and Zhu, Zhihui and Qu, Qing and You, Chong},
  booktitle={International Conference on Machine Learning},
  pages={14153--14172},
  year={2022},
  organization={PMLR}
}

@article{mozer1988skeletonization,
  title={Skeletonization: A technique for trimming the fat from a network via relevance assessment},
  author={Mozer, Michael C and Smolensky, Paul},
  journal={Advances in neural information processing systems},
  volume={1},
  year={1988}
}
\bibliographystyle{abbrvnat}

\newpage
\begin{appendices}
\section{Proofs of main results}
\subsection{Univariate results}
\subsubsection{Proof of \cref{prop:existence_and_variational}} \label{appendix:proof_prop_existence_and_variational}
\begin{proof}
    By homogeneity of the ReLU---meaning that $(\alpha x)_+ = \alpha (x)_+$ for any $\alpha > 0$---any ReLU neural network of the form \eqref{eq:nn} can have its parameters rescaled as $v_k \mapsto |w_k| v_k$, $(w_k, b_k) \mapsto |w_k|^{-1} (w_k,b_k)$ without changing the network's represented function or its $\ell^p$ path norm. Therefore, any $f \in S_p^*$ can be expressed as a neural network of the form \eqref{eq:nn} with $|w_k| = 1$ for all $k = 1, \dots, K$. Additionally, any $f \in S_p^*$ can be expressed as a network where no two neurons ``activate'' at the same location, i.e., $b_k/w_k \neq b_{k'}/w_{k'}$ whenever $k \neq k'$. To see this, consider a neural network $f_\vtheta$ with unit-norm input weights which contains two distinct neurons $k, k'$ with $b_k/w_k = b_{k'}/w_{k'}$. The sum of these neurons can be rewritten as
    \begin{align}
        v_k(w_k x + b_k)_+ + v_{k'}(w_{k'} x + b_{k'})_+ = (v_k + v_{k'})(w_kx + b_k)_+
    \end{align}
    if $w_k = w_{k'}$, or as
    \begin{align}
        v_k(w_k x + b_k)_+ + v_{k'}(w_{k'} x + b_{k'})_+ = (v_k + v_{k'})(w_kx+b_k)_+ - v_{k'} (w_k x + b_k)
    \end{align}
    if $w_k = - w_{k'}$. (The latter uses the identity $x = (x)_+ - (-x)_+$.) In either case, we see that the original two neurons $k, k'$ can be replaced with a single neuron and, in the latter case, an additive affine term. Because the affine term does not contribute to $\ell^p$ path norm, and because $|v_k + v_{k'}|^p \leq |v_k|^p + |v_{k'}|^p$ for $p \in (0,1]$, the resulting network represents the same function as the original one with no greater regularization cost.
    
    Furthermore, any neural network of the form \eqref{eq:nn} with unit-norm input weights and $K$ active neurons, where no two active neurons activate at the same location, is a CPWL function with $K$ knots, where knot $k$ is located at $-b_k/w_k$, and the slope change of the function at knot $k$ is $v_k$. Conversely, any $\bR \to \bR$ CPWL function $f$ with $K$ knots at locations $u_1 <  \dots < u_K$ and corresponding slope changes $v_1, \dots, v_K$ can be expressed as
    \begin{align}
        f(x) = f(u_0) + f'(u_0) (x-u_0) + \sum_{k=1}^K v_k (x - u_k)_+
    \end{align}
    for some arbitrary point $u_0 < u_1$. Any such $f$ has $D^2 f = \sum_{k=1}^K v_k \delta_{u_k}$, so that $V_p(f) = \sum_{k=1}^K |v_k|^p$, and $V_0(f) = \sum_{k=1}^K \mathbbm{1}_{v_k \neq 0} = K$.
    
    These facts are sufficient to establish the equivalence of problems \eqref{opt:min_p_NN} and \eqref{opt:min_p_f}. Indeed, let $\overline{S}_{\vtheta, p}^*$ denote the set of optimal parameters for a modified version of problem \eqref{opt:min_p_NN} which imposes the additional constraints that each $|w_k| = 1$ and that $b_k/w_k \neq b_{k'}/w_{k'}$ whenever $k \neq k'$. For some $\vtheta^* \in \overline{S}_{\vtheta, p}^*$, let $C^*$ denote its $\ell^p$ path norm. We have shown that $S_p^*$ can be equivalently expressed as
    \begin{align}
        S_p^* &= \{ f: \R \to \R \ \rvert \ f = f_\vtheta, \ \vtheta \in \overline{S}_{\vtheta,p}^* \}\\
        &= \{ f: \R \to \R \ \rvert \  f \mbox{ is CPWL with $\leq K$ knots},\ V_p(f) = C^*, \,   f(x_i) = y_i, \, i = 1, \dots, N \}
    \end{align}
    which is exactly the set of minimizers of \eqref{opt:min_p_f}. Non-emptiness of $S_{0, \vtheta}^*$ (and thus of $S_0^*$) follows from non-emptiness of the feasible set $\Theta$ of \eqref{opt:min_0_NN} when $K \geq N$, and the fact that the objective values of members of the feasible set lie in $\{1, \dots, K\}$, on which a minimum is achieved.
\end{proof}

\subsubsection{Auxiliary lemmas: local behavior of $f$ around same/opposite sign slope changes} \label{appendix:aux_lemmas_univariate}
Our proof of \cref{th:geom_char} relies strongly on the following two auxiliary lemmas, which describe the local behavior of any $f \in S_p^*$ for $0 \leq p < 1$ between consecutive data points. Here we denote the incoming and outgoing slopes of any interpolant $f$ at a data point $x_i$ as $\si(f, x_i)$ and $\so(f, x_i)$, respectively (sometimes dropping the explicit reference to $f$ if it is clear from context). First, we show in \cref{lemma:same_sign} that for any optimal network function $f \in S_p^*$, $0 \leq p < 1$, if the signs of $s_i - \si(f, x_i)$ and $\so(f, x_{i+1}) - s_i$ agree, then $f$ connects $(x_i, y_i)$ and $(x_{i+1}, y_{i+1})$ in a single ``peak'' (see \cref{fig:same_sign}).
\begin{figure}
    \begin{subfigure}{.5\textwidth}
          \centering
          \includegraphics[width=\linewidth]{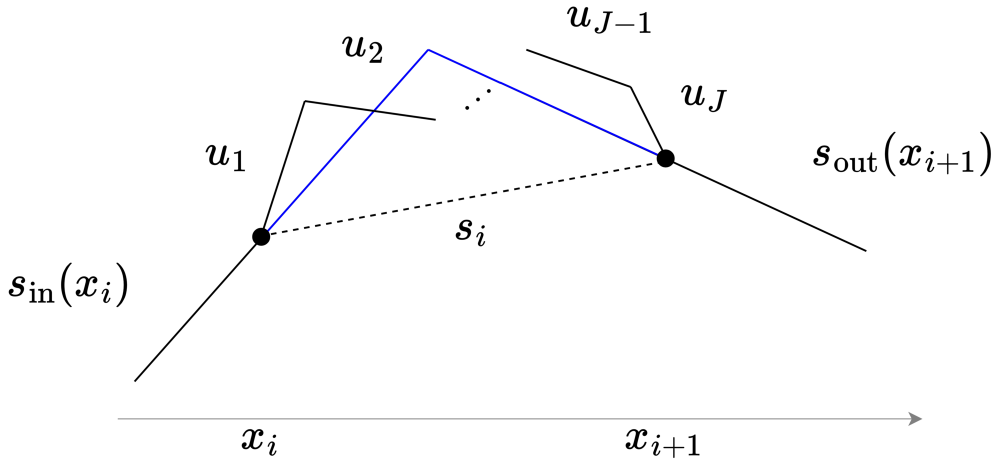}
          \caption{Illustration of \cref{lemma:same_sign}.}
          \label{fig:same_sign}
    \end{subfigure}
    \begin{subfigure}{.5\textwidth}
          \centering
          \includegraphics[width=0.85\linewidth]{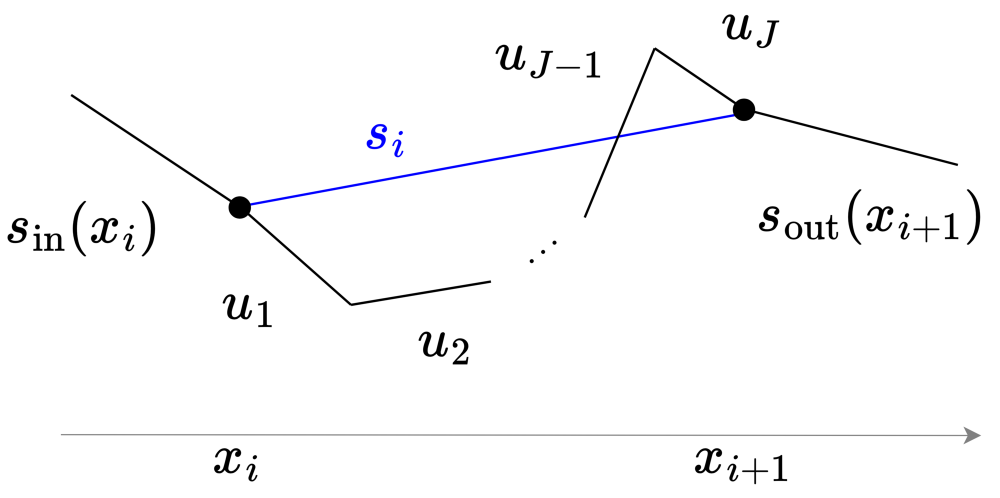}
          \caption{Illustration of \cref{lemma:opp_sign}.}
          \label{fig:opp_sign}
    \end{subfigure}
    \caption{Left: Illustration of the case $\sgn \left( s_i - \si(f, x_i) \right) = \sgn \left(\so(f, x_{i+1}) -s_i\right)$ addressed in \cref{lemma:same_sign}. Right: illustration of the case $\sgn \left( s_i - \si(f, x_i) \right) \neq \sgn \left(\so(f, x_{i+1}) -s_i\right)$ addressed in \cref{lemma:opp_sign}. In both cases, the functions in black have strictly greater $V_p$ for $0 \leq p < 1$ than the functions in blue.}
    \label{fig:same_sign_opp_sign}
\end{figure}

\begin{lemma}[Behavior of $f \in S_p^*$ around same-sign slope changes] \label{lemma:same_sign}
    For $0 \leq p < 1$, suppose that $f \in S_p^*$ has $\sgn \left( s_i - \si(f, x_i) \right) = \sgn \left(\so(f, x_{i+1}) -s_i\right)$ at consecutive data points $x_i$, $x_{i+1}$. If both signs are zero, then $f$ is linear on the interval $I := [x_i - \delta, x_{i+1} + \delta]$ surrounding $[x_i, x_{i+1}]$, for small $\delta > 0$. Otherwise, $f$ has a single knot on $I$, between $x_i$ and $x_{i+1}$. (See \cref{fig:same_sign}.)
\end{lemma}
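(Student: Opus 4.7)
The plan is to prove both parts by a local modification argument. Starting from $f \in S_p^*$ satisfying the sign hypothesis at $(x_i, x_{i+1})$, I would construct a competitor $\tilde f$ that agrees with $f$ outside $[x_i, x_{i+1}]$, continues to interpolate the data at all $N$ points, and has strictly smaller $V_p$-cost whenever $f$ lacks the claimed local structure. Since $f \in S_p^*$, this forces the claimed structure to hold.

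For the zero-sign case (so $\si(f, x_i) = \so(f, x_{i+1}) = s_i$), I would let $\tilde f$ equal $f$ outside $[x_i, x_{i+1}]$ and equal to the straight-line segment through $(x_i, y_i)$ and $(x_{i+1}, y_{i+1})$ (which has slope $s_i$) inside; interpolation at those two endpoints is automatic, and interpolation at the remaining data points is unchanged. The key observation is that, since $f$'s incoming slope at $x_i$ and outgoing slope at $x_{i+1}$ both already equal $s_i$ by hypothesis, $\tilde f$ has no knot at either $x_i$ or $x_{i+1}$, so the knot set of $\tilde f$ is exactly the knot set of $f$ restricted to the complement of $[x_i, x_{i+1}]$. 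If $f$ had any knot at all on $[x_i, x_{i+1}]$, then $V_p(\tilde f) < V_p(f)$, contradicting optimality. Hence $f$ has slope $s_i$ throughout $[x_i, x_{i+1}]$, and since by definition of the incoming and outgoing slopes $f$ also has slope $s_i$ on small one-sided neighborhoods of $x_i$ and $x_{i+1}$, $f$ is linear on some enlarged interval $I = [x_i - \delta, x_{i+1} + \delta]$.

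For the nonzero-sign case, I would take without loss of generality $s_- := \si(f, x_i) < s_i < \so(f, x_{i+1}) =: s_+$ (the negative-negative case is symmetric). I would define $\tilde f$ to agree with $f$ outside $[x_i, x_{i+1}]$ and, inside, to be the unique CPWL ``peak'' that starts at $(x_i, y_i)$ with slope $s_-$ and ends at $(x_{i+1}, y_{i+1})$ with slope $s_+$; an elementary computation using $s_- < s_i < s_+$ places its single knot at some $x^* \in (x_i, x_{i+1})$ with slope change $s_+ - s_-$. As in the first case, matching endpoint slopes guarantee that $\tilde f$ has no knots at $x_i$ or $x_{i+1}$, so its $V_p$-contribution on $[x_i, x_{i+1}]$ is $|s_+ - s_-|^p$ (respectively $1$ for $p = 0$). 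Writing the slope changes of all knots of $f$ on $[x_i, x_{i+1}]$ (including any at the endpoints) as $c_1, \dots, c_m$, the endpoint slopes force $\sum_k c_k = s_+ - s_-$, and strict subadditivity of $t \mapsto t^p$ on $[0, \infty)$ for $0 < p < 1$ (combined with $|\sum_k c_k| \leq \sum_k |c_k|$), together with the analogous strict count inequality for $p = 0$, yield $|s_+ - s_-|^p \leq \sum_k |c_k|^p$ with equality if and only if $f$ has exactly one nonzero-slope-change knot on $[x_i, x_{i+1}]$. Optimality therefore pins $f$ to this single-knot structure.

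The last bookkeeping step, which I expect to be the most delicate part of the argument, is ruling out $x_i$ and $x_{i+1}$ themselves as the location of this unique knot: a knot of $f$ at $x_i$ with slope change $s_+ - s_-$ would force $f$ to have constant slope $s_+$ throughout $(x_i, x_{i+1})$, so interpolation would demand $s_+ = s_i$, contradicting $s_+ > s_i$; the symmetric argument at $x_{i+1}$ yields $s_- = s_i$, also a contradiction. Hence the unique knot lies strictly in $(x_i, x_{i+1})$, and since $f$ has slopes $s_-$ and $s_+$ on small one-sided neighborhoods of $x_i$ and $x_{i+1}$ respectively, $f$ has exactly one knot on the enlarged interval $I$, as the lemma claims.
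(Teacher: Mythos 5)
Your proposal is correct and follows essentially the same route as the paper: a local surgery replacing $f$ on $[x_i,x_{i+1}]$ by the straight line (zero-sign case) or the single-knot interpolant matching the endpoint slopes (nonzero case), with the cost comparison driven by strict subadditivity of $t \mapsto |t|^p$ applied to the telescoping slope changes. Your explicit bookkeeping step ruling out a knot at $x_i$ or $x_{i+1}$ itself is a detail the paper leaves implicit, and it is argued correctly.
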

\begin{proof}
    If both signs are zero, then $f$ must be linear on $I$, since anything else would have nonzero $V_p(f \rvert_I )$ for $0 \leq p < 1$. If both signs are nonzero, observe that
    $$ |\so(f, x_{i+1}) - \si(f, x_i)|^p < |\so(f, x_{i+1}) - u_J|^p + |u_J - u_{J-1}|^p + \dots + |u_2 - u_1|^p + |u_1 - \si(f, x_i)|^p$$
    for any $u_1, \dots, u_J$ which are all distinct from each other and from $\si(f, x_i)$ and $\so(f, x_{i+1})$. This is a simple consequence of the inequality $|a+b|^p \leq |a|^p + |b|^p$, which holds for any $a,b \in \R$ and any $0 < p < 1$ and is strict unless $a = 0$ or $b = 0$. Since any interpolant with more than one knot on $I$ has one or more intermediate slopes $u_1, \dots, u_J$ between $x_i$ and $x_{i+1}$, the result follows.
\end{proof}

Next, \cref{lemma:opp_sign} says that if the signs of $s_i- \si(f, x_i)$ and $\so(f, x_{i+1}) - s_i$ of an optimal $f \in S_p^*$, $0 < p < 1$ disagree, then $f$ is linear between $x_i$ and $x_{i+1}$.
\begin{lemma}[Behavior of $f \in S_p^*$ around opposite-sign slope changes]  \label{lemma:opp_sign}
    For $0 \leq p < 1$, suppose that $f \in S_p^*$ has $\sgn \left( s_i - \si(f, x_i) \right) \neq \sgn \left(\so(f, x_{i+1}) -s_i\right)$ at consecutive data points $x_i$, $x_{i+1}$. If $0 < p < 1$, then $f$ is linear between $x_i$ and $x_{i+1}$. If $p = 0$, then either $f$ is linear between $x_i$ and $x_{i+1}$, or it agrees outside of $[x_i, x_{i+1}]$ with some $g \in S_0^*$ which is linear between $x_i$ and $x_{i+1}$. (See illustration in \cref{fig:opp_sign}.)
\end{lemma}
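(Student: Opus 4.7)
The plan is to construct a ``straight-line replacement'' $g$ of $f$ on $[x_i, x_{i+1}]$ and exploit its $V_p$ cost. Define $g$ to agree with $f$ outside $[x_i, x_{i+1}]$ and to equal the straight line through $(x_i, y_i)$ and $(x_{i+1}, y_{i+1})$ (slope $s_i$) on this interval. Then $g$ is a valid CPWL interpolant whose $V_p$ differs from $V_p(f)$ only in the cost contribution on $[x_i, x_{i+1}]$, which for $g$ is exactly $|s_i - \si|^p + |\so - s_i|^p$ arising from potential knots at the two endpoints. Write $\si := \si(f, x_i)$ and $\so := \so(f, x_{i+1})$ for brevity.

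The core step is to show that, for $0 < p < 1$, this replacement strictly decreases $V_p$ whenever $f$ is not already linear on $[x_i, x_{i+1}]$. By reflecting $f \mapsto -f$ if necessary, assume $\si \le s_i$ and $\so \le s_i$ with at least one inequality strict (the opposite-sign hypothesis makes this WLOG). Write the slopes of $f$ on the consecutive sub-segments of $[x_i, x_{i+1}]$ as $u_0, \dots, u_J$; the interpolation constraint forces their length-weighted average to equal $s_i$. If $f$ is not linear on the interval, not all $u_k$ equal $s_i$, and since $u_k \le s_i$ for all $k$ combined with weighted mean $s_i$ would force all $u_k = s_i$, there must exist $u_{k^*} > s_i$. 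Iterated application of the subadditivity $|a+b|^p \le |a|^p + |b|^p$ (valid for $0 < p \le 1$) then telescopes the decomposition $\si \to u_0 \to \dots \to u_J \to \so$ to yield
\begin{equation*}
    |u_{k^*} - \si|^p + |\so - u_{k^*}|^p \le \text{($V_p$ contribution of $f$ on $[x_i, x_{i+1}]$)}.
\end{equation*}
Since $u_{k^*} > s_i \ge \si, \so$, monotonicity of $t \mapsto t^p$ on $[0,\infty)$ gives $|u_{k^*} - \si|^p \ge |s_i - \si|^p$ and $|\so - u_{k^*}|^p \ge |\so - s_i|^p$, strict in at least one component. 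Hence $f$'s cost on $[x_i, x_{i+1}]$ strictly exceeds $g$'s, contradicting $f \in S_p^*$ and forcing $f$ to be linear on $[x_i, x_{i+1}]$.

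For $p = 0$, the argument reduces to a knot-counting exercise: the straight-line $g$ contributes at most $\mathbbm{1}[\si \ne s_i] + \mathbbm{1}[\so \ne s_i]$ knots in $[x_i, x_{i+1}]$, while a case analysis (ruling out the borderline sub-case $J = 1$ with $u_0 = \si$ and $u_1 = \so$ via the interpolation constraint together with the opposite-sign hypothesis) shows $f$ has at least as many knots there. Thus $V_0(g) \le V_0(f)$, and by $f$'s optimality $g \in S_0^*$, giving the desired linear-on-$[x_i, x_{i+1}]$ member of $S_0^*$ agreeing with $f$ outside. The main obstacle is securing the strict inequality in the $0 < p < 1$ step; this hinges on the observation that the opposite-sign hypothesis precludes the ``single-peak'' shortcut of \cref{lemma:same_sign}, forcing any non-linear candidate $f$ to overshoot past $s_i$ on at least one sub-segment, which is exactly what makes the combined triangle-inequality-plus-monotonicity estimate strict.
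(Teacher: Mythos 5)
Your proof is correct, but it takes a genuinely different and notably more economical route than the paper's. The paper first treats a base case with a single interior knot by parameterizing the perturbation with a scalar $\delta$, observing that the cost $C(\delta)$ is concave between its three non-differentiable points, and then running explicit derivative computations on auxiliary functions $h_1,h_2$ to show the unperturbed configuration strictly wins; the general (multi-knot) case is then handled by a separate reduction with several geometric sub-cases (segments on the same or opposite sides of $\ell_i$). Your argument dispenses with all of this: the interpolation constraint forces the length-weighted mean of the interior slopes $u_0,\dots,u_J$ to equal $s_i$, so (after the reflection WLOG giving $\si\le s_i$, $\so\le s_i$, one strict) any non-affine $f$ must overshoot with some $u_{k^*}>s_i$; telescoping subadditivity collapses $f$'s cost to $|u_{k^*}-\si|^p+|\so-u_{k^*}|^p$ from below, and strict monotonicity of $t\mapsto t^p$ on $[0,\infty)$ then beats the straight-line cost $|s_i-\si|^p+|\so-s_i|^p$ strictly (in fact both comparison terms are strict, not just one, since $u_{k^*}>s_i$). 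This handles arbitrarily many interior knots in one stroke, requires no calculus, and makes transparent exactly where the opposite-sign hypothesis is used. Your $p=0$ knot-counting case, including the exclusion of the borderline one-interior-knot configuration via the weighted-mean constraint, matches what is needed. Two small points worth making explicit if you write this up: the replacement $g$ never has more knots than $f$ on $[x_i,x_{i+1}]$ (by the same counting as your $p=0$ case), so $g$ stays within the width budget $K$ and is genuinely feasible; and the enlarged interval on which you compare costs must include the endpoints $x_i,x_{i+1}$ as potential knot locations, which your endpoint terms $|u_0-\si|^p$ and $|\so-u_J|^p$ already do.
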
  

\begin{proof}
    \begin{figure}
        \centering
        \includegraphics[width=0.5\linewidth]{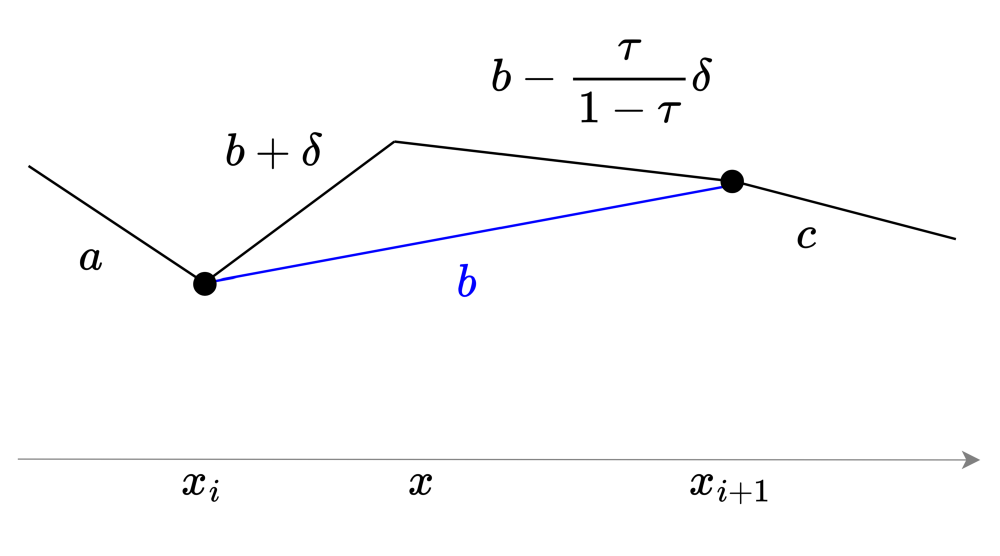}
        \caption{Base case of \cref{lemma:opp_sign}, where we consider the possibility that $f \in S_p^*$ for some $0 \leq p < 1$ has a single knot at some $x \in (x_i, x_{i+1})$ where $\sgn(a-b) \neq \sgn(b-c)$. Here $\tau := \frac{x - x_i}{x_{i+1}-x_i}$.}
        \label{fig:opp_sign_base_case}
    \end{figure}
    First consider the base case illustrated in \cref{fig:opp_sign_base_case}, where we suppose that $f \in S_p^*$ for some $0 \leq p < 1$ has a single knot at some $x \in (x_i, x_{i+1})$. To simplify the notation, we denote $a := \si(f, x_i)$, $b := s_i$, $c := \so(f, x_{i+1})$ and $\tau := \frac{x - x_i}{x_{i+1}-x_i}$ and assume that $\sgn(a-b) \neq \sgn(b-c)$. The intermediate slopes $u_1$ and $u_2$ can be parameterized as $u_1 = b+\delta$ and $u_2 = b - \frac{\tau}{1-\tau} \delta$ for some $\delta \in \R$. Consider the cost $V_p(f \big\rvert_I)$ of $f$ on the interval $I := (x_i - \epsilon, x_{i+1} + \epsilon)$ (for some arbitrary $\epsilon > 0$) as a function $C(\delta)$ of the parameter $\delta$. If $p = 0$, then clearly $C(0) = 2 \leq C(\delta) \in \{2, 3\}$ for $\delta \neq 0$. This shows that the function $g$ whose slope is $b$ on $[x_i, x_{i+1}]$ has no greater cost than $f$, and thus $g \in S_0^*$. In the case $0 < p < 1$, we have
    \begin{align}
        C(\delta) := |\delta + b - a|^p + \frac{1}{(1-\tau)^p} |\delta|^p + \big| c - b + \frac{\tau}{1-\tau} \delta \big|^p
    \end{align}
    and we will show that $C(0) < C(\delta)$ for $\delta \neq 0$, contradicting the assumption that $f \in S_p^*$. 
    
    Note that $C$ is coercive and continuous on $\delta \in \R$, so it attains a minimizer (this follows from the Weierstrass extreme value theorem as applied to the compact sublevel sets of $C$). By Fermat's theorem, any minimizer of $C$ must occur at critical points, i.e., points where the derivative  $C'$ is zero or undefined. The three points where $C'$ is undefined are $\delta_1 = a-b$, $\delta_2 = 0$, and $\delta_3 = \frac{1-\tau}{\tau}(b-c)$. Assuming without loss of generality that $\delta_1 < \delta_2 < \delta_3$, note that $C$ is strictly concave on the intervals $(-\infty, \delta_1)$, $(\delta_1, \delta_2)$, $(\delta_2, \delta_3)$, and $(\delta_3, \infty)$. This is because compositions of concave and affine functions are concave, and the function $x \mapsto |x|^p$ for $p \in (0,1]$ is concave on any subinterval of $\R$ over which $x$ does not change sign. Therefore, any point at which $C' = 0$ will be a local maximum rather than a minimum, and hence any minimum of $C$ can only occur at the critical points $\delta_1, \delta_2, \delta_3$. We have
    \begin{align}
        C(\delta_1) &= \frac{1}{(1-\tau)^p}|a-b|^p + \big| c + \frac{\tau}{1-\tau} a - \frac{1}{1-\tau} b \big|^p \\
        C(\delta_2) &= |b-a|^p + |c-b|^p\\
        C(\delta_3) &= \big| \frac{1}{\tau} b - \frac{1-\tau}{\tau} c -a \big|^p + \frac{1}{\tau^p} |b-c|^p
    \end{align}
    Now, for the variable $t \in [0,1)$, define
    \begin{align}
        h_1(t) := \frac{1}{(1-t)^p} |a-b|^p + \big|c + \frac{t}{1-t} a - \frac{1}{1-t} b \big|^p
    \end{align}
    and observe that $h_1(0) = C(\delta_2)$ and $h_1(\tau) = C(\delta_1)$. Its derivative is
    \begin{align}
        h_1'(t) &= \frac{p}{(1-t)^{p+1}} |a-b|^p + p \big|c + \frac{t}{1-t} a-\frac{1}{1-t}b \big|^{p-1} \textrm{sgn}\left(c + \frac{t}{1-t} a-\frac{1}{1-t}b \right) \frac{a-b }{(1-t)^2} \\
        &= \frac{p}{(1-t)^{p+1}} |a-b|^p + p \big|c + \frac{t}{1-t} a-\frac{1}{1-t}b \big|^{p-1} \textrm{sgn} \left(\frac{(1-t)(c-b)+t(a-b)}{1-t} \right) \frac{a-b}{(1-t)^2} 
    \end{align}
    Assuming that $\sgn(a-b) \neq \sgn(b-c)$ with  $a \neq b$ (and thus $\delta_1 \neq \delta_2$), we see that $h_1'(t) > 0$ for all $t \in [0,1)$. This is because the term inside the $\sgn$ above is positive if $a > b$ (so that $b \leq c$) and negative if $a < b$ (so that $b \geq c$). This shows that $h_1(0) = C(\delta_2) < h_1(\tau) = C(\delta_1)$. Similarly, define 
    \begin{align}
        h_2(t) := \big| \frac{1}{t} b - \frac{1-t}{t} c -a \big|^p + \frac{1}{t^p} |b-c|^p
    \end{align}
    for $t \in (0,1]$, so that $h_2(\tau) = C(\delta_3)$ and $h_2(1) = C(\delta_2)$. Its derivative is
    \begin{align}
        h_2'(t) &= p \big| \frac{1}{t} b - \frac{1-t}{t} c -a \big|^{p-1} \sgn \left( \frac{1}{t} b - \frac{1-t}{t} c -a \right) \frac{c-b}{t^2} - \frac{p}{t^{p+1}} |b-c|^p \\
        &= p \big| \frac{1}{t} b - \frac{1-t}{t} c -a \big|^{p-1} \sgn \left( \frac{t(b-a) + (1-t)(b-c)}{t} \right) \frac{c-b}{t^2} - \frac{p}{t^{p+1}} |b-c|^p
    \end{align}
    Assuming that $\sgn(a-b) \neq \sgn(b-c)$ with $b \neq c$ (and thus $\delta_2 \neq \delta_3$), we see that $h_2'(t) > 0$ for all $t \in (0,1]$. This is because the term inside the $\sgn$ above is positive if $b > c$ (so that $a \leq b$) and negative if $b < c$ (so that $a \geq b$). This shows that $h_2(\tau) = C(\delta_3) < h_2(1) = C(\delta_2)$. Therefore, $C(0) < C(\delta)$ for $\delta \neq 0$, as desired.

    Next, consider the general case, where we assume by contradiction that $f \in S_p^*$ for $0 \leq p < 1$ may have multiple knots inside $(x_i, x_{i+1})$. As before, in the case $p = 0$, $f$ can't have fewer knots than the function $g$ whose slope is $b$ on $[x_i, x_{i+1}]$; the only way for $f$ to be in $S_0^*$ is if it has a single knot inside $(x_{i}, x_{i+1})$ and a single knot at either $x_i$ or $x_{i+1}$, in which case we also have $g \in S_0^*$. In the case $0 < p < 1$, let $u_1, \dots, u_J$ denote the slopes of $f$ on $[x_i, x_{i+1}]$. If the line segments with slopes $u_1$ and $u_J$ lie on the same side of the line segment with slope $s_i$, then we can apply the argument in the proof of \cref{lemma:same_sign} to remove the segments with slopes $u_2, \dots, u_{J-1}$ and connect the segments with $u_1$ and $u_J$ in a single knot inside $(x_i, x_{i+1})$; this strictly reduces $V_p(f)$, contradicting $f \in S_p^*$. (See \cref{fig:opp_sign_same_side}.) If the line segments with slopes $u_1$ and $u_J$ lie on opposite sides of the line segment with slope $s_i$, then either one of the intermediate segments, whose slope we call $u_{j_0}$, crosses the segment with slope $s_i$, or else one of the intermediate segments (again call its slope $u_{j_0}$) lies on one side of $s_i$, and $u_{j_0 + 1}$ lies on the other side. In either case, the segments $u_1$ and $u_{j_0}$ can be connected and the segments between them removed, as can the segments $u_{j_0}$ (or $u_{j_0+1}$) and $u_J$. (See \cref{fig:opp_sign_opp_side}.) Again, by the logic in the proof of \cref{lemma:same_sign}, this strictly reduces $V_p(f)$, contradicting $f \in S_p^*$.  If $f$ is already of the form in \cref{fig:opp_sign_opp_s_i_red}, with only two knots inside $(x_i, x_{i+1})$ on opposite sides of the line $s_i$, then the second knot can be removed by directly connecting $u_1$ and $\so(f, x_{i+1})$ (see \cref{fig:opp_sign_opp_s_i_red_2}). By the same logic, this strictly reduces $V_p(f)$, contradicting $f \in S_p^*$.
\end{proof}
\begin{figure}
    \centering
    \begin{subfigure}{0.5\textwidth}
          \centering
          \includegraphics[width=0.9\linewidth]{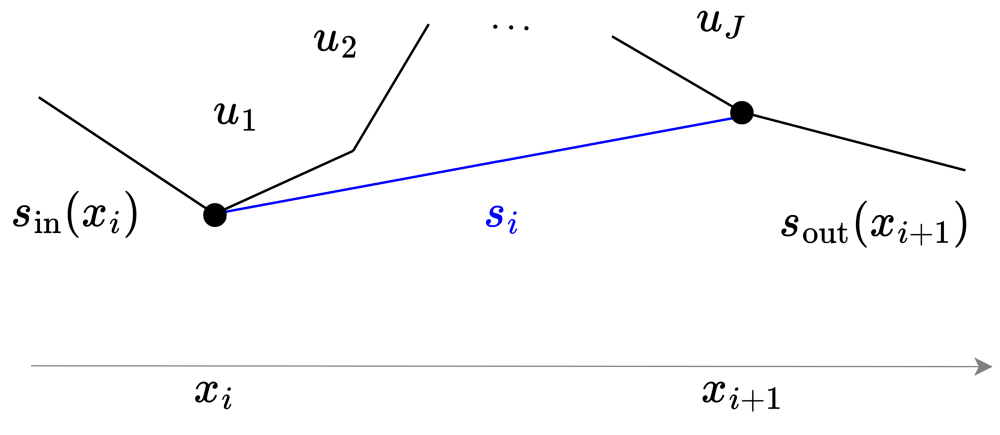}
          \caption{$u_1$, $u_J$ are on the same side of $s_i$.}
          \label{fig:opp_sign_same_side_s_i}
    \end{subfigure}%
    \begin{subfigure}{0.5\textwidth}
          \centering
          \includegraphics[width=0.9\linewidth]{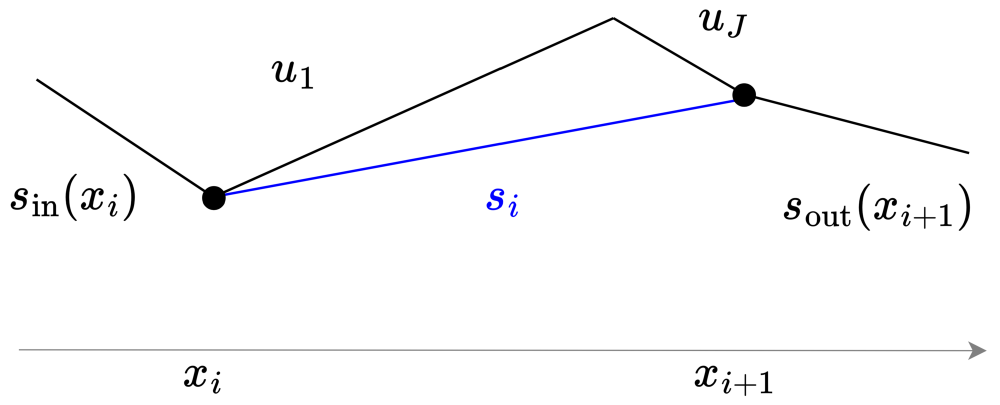}
          \caption{$u_1$ and $u_J$ can be connected, reducing $V_p(f)$.}
          \label{fig:opp_sign_same_side_s_i_red}
    \end{subfigure}
    \caption{General case of \cref{lemma:opp_sign}, where the outgoing line segment at $x_i$ and the incoming line segment at $x_{i+1}$ both lie on the same side of the straight line between $(x_i, y_i)$ and $(x_{i+1}, y_{i+1})$. We can apply the argument in the proof of \cref{lemma:same_sign} to connect these two segments in a single knot inside $(x_i, x_{i+1})$ and strictly reduce $V_p(f)$.}
    \label{fig:opp_sign_same_side}
\end{figure}
\begin{figure}
    \centering
    \begin{subfigure}{0.5\textwidth}
          \centering
          \includegraphics[width=0.9\linewidth]{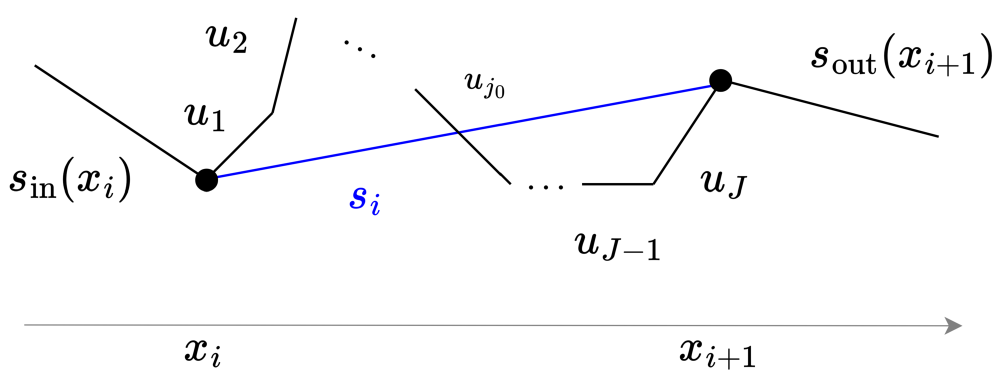}
          \caption{$u_1$, $u_J$ are on opposite sides of $s_i$.}
          \label{fig:opp_sign_opp_s_i}
    \end{subfigure}%
    \begin{subfigure}{0.5\textwidth}
          \centering
          \includegraphics[width=0.9\linewidth]{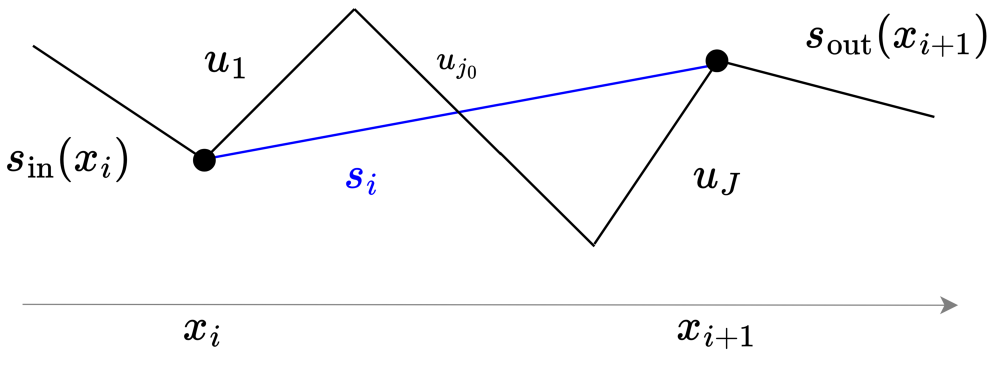}
          \caption{$u_1$, $u_{j_0}$ and $u_{j_0}, u_J$ can be connected, reducing $V_p(f)$.}
          \label{fig:opp_sign_opp_s_i_red}
    \end{subfigure}
    \par\bigskip
    \begin{subfigure}{0.5\textwidth}
          \centering
          \includegraphics[width=0.9\linewidth]{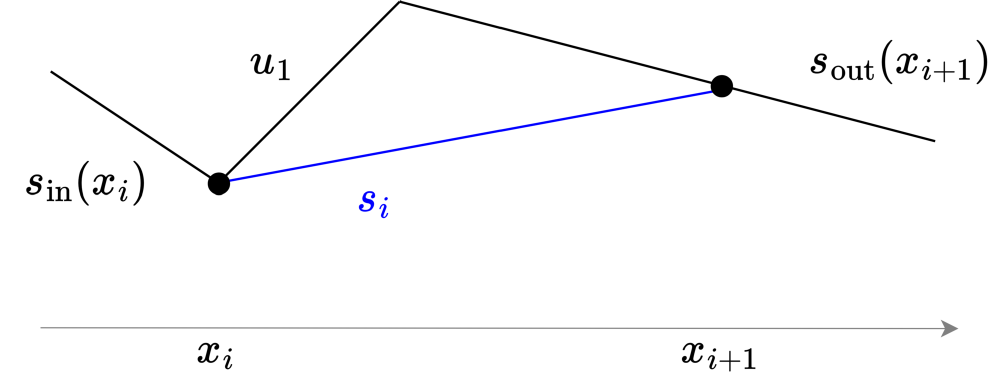}
          \caption{$u_1$ and $\so(f, x_{i+1})$ can be connected, reducing $V_p(f)$.}
          \label{fig:opp_sign_opp_s_i_red_2}
    \end{subfigure}
    \caption{General case of \cref{lemma:opp_sign}, where the outgoing line segment at $x_i$ and the incoming line segment at $x_{i+1}$ lie on opposite sides of the straight line between $(x_i, y_i)$ and $(x_{i+1}, y_{i+1})$. We can apply the argument in the proof of \cref{lemma:same_sign} to connect the segments $u_1$ and $u_{j_0}$ and $u_{j_0}$ and $u_J$, resulting in a function with two knots inside $(x_i, x_{i+1})$ and strictly reducing $V_p(f)$. By the same argument, we can further reduce $V_p(f)$ by connecting $u_1$ and $\so(f, x_{i+1})$, resulting in a single knot inside $(x_i, x_{i+1})$.}
    \label{fig:opp_sign_opp_side}
\end{figure}

\subsubsection{Proof of \cref{th:geom_char}} \label{appendix:proof_geom_char}
\begin{proof}
    We first use  \cref{th:geom_char} and \cref{lemma:same_sign,lemma:opp_sign} to show that any $f \in S_p^*$ for $0 < p < 1$ must obey the description in \cref{th:geom_char}, and that there is always some $f \in S_0^*$ which fits this description. Using this result, we argue non-emptiness of $S_p^*$. We break the proof into the following sections.
    \paragraph{Linearity before $x_2$ and after $x_{N-1}$.} We will prove the statement for $(-\infty, x_2]$; the proof for $[x_{N-1}, \infty)$ is analogous. No $f \in S_p^*$ for $0 \leq p \leq 1$ can have a knot at or before $x_1$ as this would strictly increase the cost $V_p(f)$ without affecting the ability of $f$ to interpolate the data points. In the case $0 < p < 1$, assume by contradiction that some $f \in S_p^*$ has a knot at some $x \in (x_1, x_2)$. By \cref{lemma:opp_sign}, it must be the case that $\sgn(s_1 - \si(f, x_1)) = \sgn(\so(f, x_2) - s_1)$, and by \cref{lemma:same_sign}, this knot is the only one inside $(x_1, x_2)$, with $\si(f, x_1) = \so(f, x_1)$ and $\si(f, x_2) = \so(f, x_2)$. (See \cref{fig:first_orig}.) Assuming without loss of generality that $\sgn(s_1 - \si(f, x_1)) = \sgn(\so(f, x_2) - s_1) = -1$, we have $\si(f, x_1) > s_1 > \so(f, x_2)$, and therefore $|\so(f, x_2) - \si(f, x_1)| > |\so(f, x_2) - s_1|$. But this shows that $V_p(f) > V_p (g)$, where $g = \ell_1$ on $(-\infty, x_2]$ and is otherwise identical to $f$. (See \cref{fig:first_red}.) This contradicts $f \in S_p^*$.

    In the case $p = 0$, fix some $f \in S_0^*$. As argued above, $f$ has no knots on $(-\infty, x_1]$. If $\sgn(s_1 - \si(f, x_1)) \neq \sgn(\so(f, x_2) - s_1)$, then by \cref{lemma:opp_sign}, either $f = \ell_1$ on $[x_1, x_2]$ (hence it also must agree with $\ell_1$ on $(-\infty, x_1]$), or there is some $g \in S_0^*$ which agrees with $\ell_1$ on $[x_1, x_2]$ (hence also on $(-\infty, x_1]$, since $g$ must also not have any knots on $(-\infty, x_1]$). If $\sgn(s_1 - \si(f, x_1)) =  \sgn(\so(f, x_2) - s_1) = 0$, then by \cref{lemma:same_sign}, $f = \ell_1$ on $[x_1, x_2]$ and thus also on $(-\infty, x_1]$. If $\sgn(s_1 - \si(f, x_1)) =  \sgn(\so(f, x_2) - s_1)$ are both nonzero, then by \cref{lemma:same_sign}, $f$ has a single knot inside $(x_1, x_2)$ with $\si(f, x_1) = \so(f, x_1)$ and $\si(f, x_2) = \so(f, x_2)$, as in \cref{fig:first_orig}. Then function depicted in \cref{fig:first_orig_red}, which agrees with $\ell_1$ on $(-\infty, x_2]$ and with $f$ on $[x_2, \infty)$, has the same number of knots as $f$, so $g \in S_0^*$.
    \begin{figure}
    \centering
    \begin{subfigure}{0.5\textwidth}
          \centering
          \includegraphics[width=0.9\linewidth]{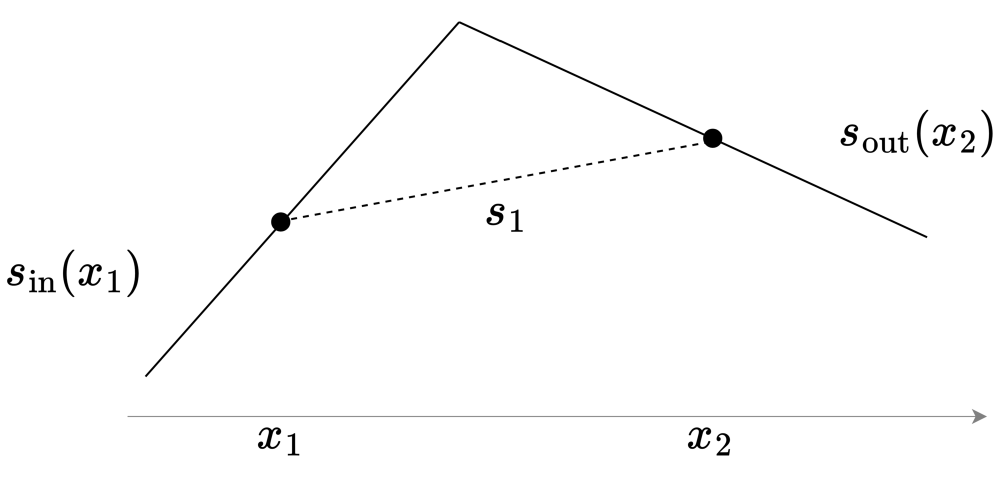}
          \caption{A function with a knot inside $(x_1, x_2)$.}
          \label{fig:first_orig}
    \end{subfigure}%
    \begin{subfigure}{0.5\textwidth}
          \centering
          \includegraphics[width=0.8\linewidth]{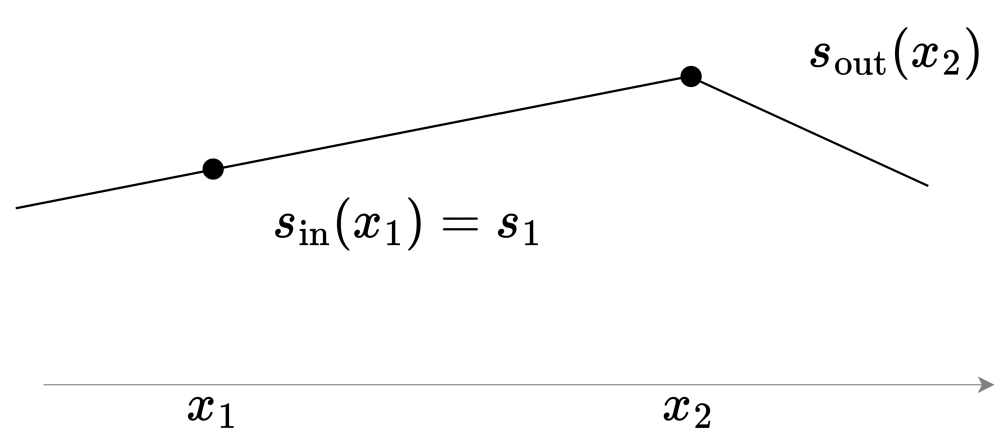}
          \caption{A function which agrees with $\ell_1$ on $(-\infty, x_2]$.}
          \label{fig:first_red}
    \end{subfigure}
    \caption{Behavior of $f \in S_p^*$ before $x_2$ and after $x_N$. A knot inside $(x_1, x_2)$ can be moved to $x_2$, maintaining the same outgoing slope at $x_2$, which strictly decreases the magnitude of the slope change at the knot.}
    \label{fig:first_orig_red}
\end{figure}
    \paragraph{Linearity between data points of opposite curvature.} For $0 < p < 1$, assume by contradiction that some $f \in S_p^*$ does not agree with $\ell_i$ on an interval $[x_i, x_{i+1}]$ where $\sgn(s_i - s_{i-1}) \neq \sgn(s_{i+1} - s_i)$. By \cref{lemma:same_sign,lemma:opp_sign}, it must be the case that $\sgn(s_i - \si(f, x_i)) = \sgn(\so(f, x_{i+1}) - s_i)$ are both nonzero, and that $\si(f, x_i) = \so(f, x_i)$ and $\si(f, x_{i+1}) = \so(f, x_{i+1})$ and $f$ has a single knot inside $(x_i, x_{i+1})$ where the incoming line at $x_i$ and the outgoing line at $x_{i+1}$ meet. It must be the case that $\sgn(s_i - s_{i-1}) \neq \sgn(s_i - \si(f, x_i))$ and/or that $\sgn(s_{i+1} - s_i) \neq \sgn(\so(f, x_{i+1}) - s_i)$. Assume without loss of generality that $\sgn(s_{i+1} - s_i) \neq \sgn(\so(f, x_{i+1}) - s_i) = 1$, so that $s_{i+1} \leq s_i < \so(f, x_{i+1}) = \si(f, x_{i+1})$. Then clearly $s_{i+1} \neq  \so(f, x_{i+1})$ (in other words, $f$ does not agree with $\ell_{i+1}$ on all of $[x_{i+1}, x_{i+2}]$), so by \cref{lemma:same_sign} and \cref{lemma:opp_sign}, it must be the case that $-1= \sgn(s_{i+1} - \si(f, x_{i+1})) = \sgn(\so(f, x_{i+2}) - s_{i+1})$, that $f$ has a single knot inside $(x_{i+1}, x_{i+2})$, and that $\si(f, x_{i+2}) = \so(f, x_{i+2})$. (See \cref{fig:geom_char_opp_sign_orig}.) Therefore, $\si(f, x_{i+2}) = \so(f, x_{i+2}) < s_{i+1} \leq s_i < \so(f, x_{i+1}) = \si(f, x_{i+1})$. Furthermore, because $1 = \sgn(\so(f, x_{i+1}) - s_i) = \sgn(s_i - \si(f, x_i))$, we have $\si(f, x_i) < s_i < \so(x_{i+1})$. On $I := [x_{i-1}-\epsilon, x_{i+2} + \epsilon]$ for small $\epsilon > 0$, we thus have
    \begin{align}
        V_p (f \big\rvert_I ) &= |\so(f, x_{i+1}) - \si(f, x_i)|^p + |\so(f, x_{i+2}) - \so(f, x_{i+1})|^p \\
        &> |s_i - \si(f, x_i)|^p + |\so(f, x_{i+2}) - s_i|^p  = V_p(g \big\rvert_I)
    \end{align}
    where $g$ agrees with $f$ outside of $[x_i, x_{i+2}]$, agrees with $\ell_i$ on $[x_i, x_{i+1}]$, and has a single knot inside $[x_{i+1}, x_{i+2}]$ with $\so(g, x_{i+1}) = s_i$ and $\si(g, x_{i+2}) = \so(g, x_{i+2}) = \so(f, x_{i+2})$. (See \cref{fig:geom_char_opp_sign_red}.) This contradicts $f \in S_p^*$. 

    For $p = 0$, consider some $f \in S_0^*$. If $\sgn(s_i - \si(f, x_i)) \neq \sgn(\so(f, x_{i+1}) - s_i)$, then by \cref{lemma:opp_sign}, there is some $g \in S_0^*$ which agrees with $f$ outside of $[x_i, x_{i+1}]$ and agrees with $\ell_i$ on $[x_i, x_{i+1}]$. By \cref{lemma:same_sign}, if $\sgn(s_i - \si(f, x_i)) = \sgn(\so(f, x_{i+1}) - s_i) = 0$, then $f = \ell_i$ on $[x_i, x_{i+1}]$. If $\sgn(s_i - \si(f, x_i)) = \sgn(\so(f, x_{i+1}) - s_i)$ are both nonzero, then by \cref{lemma:same_sign}, $\si(f, x_i) = \so(f, x_i)$ and $\si(f, x_{i+1}) = \so(f, x_{i+1})$, and $f$ has a single knot inside $(x_i, x_{i+1})$ where the incoming line at $x_i$ and the outgoing line at $x_{i+1}$ meet. As before, it must be the case that $\sgn(s_i - s_{i-1}) \neq \sgn(s_i - \si(f, x_i))$ and/or that $\sgn(s_{i+1} - s_i) \neq \sgn(\so(f, x_{i+1}) - s_i)$. Assume without loss of generality that $\sgn(s_{i+1} - s_i) \neq \sgn(\so(f, x_{i+1}) - s_i) = 1$, so that $s_{i+1} \leq s_i < \so(f, x_{i+1}) = \si(f, x_{i+1})$. Because $1 = \sgn(\so(f, x_{i+1}) - s_i) = \sgn(s_i - \si(f, x_i))$, we also have $\si(f, x_i) < s_i < \so(f, x_{i+1})$. If $\sgn(\so(f, x_{i+2}) - s_{i+1}) \neq \sgn(s_{i+1} - \si(f, x_{i+1})) = -1$, then by \cref{lemma:opp_sign}, there is some $g \in S_0^*$ which agrees with $f$ outside $[x_{i+1}, x_{i+2}]$ and agrees with $\ell_{i+1}$ on $[x_{i+1}, x_{i+2}]$. Then this $g$ has $\so(g, x_{i+1}) = s_i$ and $\si(g, x_i) = \si(f, x_i)$, so $\sgn(\so(x_{i+1}) - s_i) \in \{-1, 0\}$, and $\sgn(s_i - \si(g, x_i)) = 1$; hence by \cref{lemma:opp_sign}, there is some $h \in S_0^*$ which agrees with $g$ outside of $[x_i, x_{i+1}]$ and agrees with $\ell_i$ on $[x_i, x_{i+1}]$. On the other hand, if $\sgn(\so(f, x_{i+2}) - s_{i+1}) = \sgn(s_{i+1} - \si(f, x_{i+1})) = -1$, then by \cref{lemma:same_sign}, $f$ has a single knot inside $(x_{i+1}, x_{i+2})$, and $\si(f, x_{i+2}) = \so(f, x_{i+2})$, as in \cref{fig:geom_char_opp_sign_orig}. This function has two knots on $I := [x_{i-1} - \epsilon, x_{i+2} + \epsilon]$ (for small $\epsilon > 0$). The function $g$ depicted in \cref{fig:geom_char_opp_sign_red}, which agrees with $f$ outside of $[x_i, x_{i+2}]$, agrees with $\ell_i$ on $[x_i, x_{i+1}]$, and has a single knot inside $[x_{i+1}, x_{i+2}]$ with $\so(g, x_{i+1}) = s_i$ and $\si(g, x_{i+2}) = \so(g, x_{i+2}) = \so(f, x_{i+2})$, also has two knots on $I$. Therefore $g \in S_0^*$. 
    \begin{figure}
    \centering
    \begin{subfigure}{0.5\textwidth}
          \centering
          \includegraphics[width=0.9\linewidth]{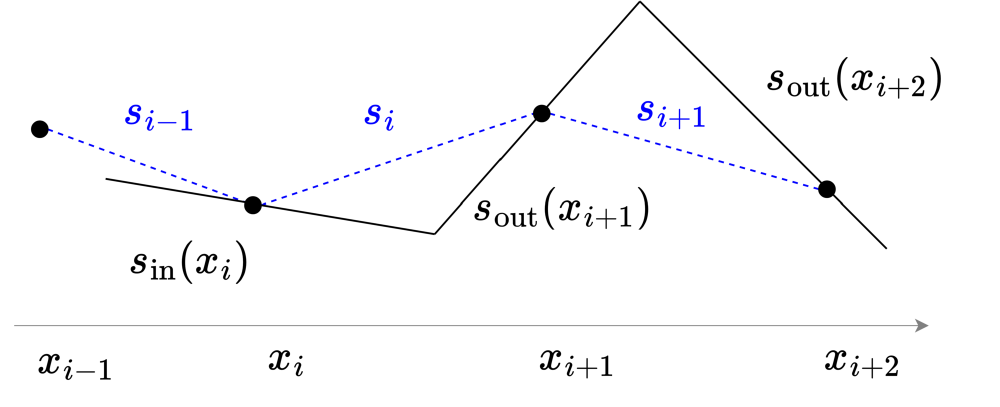}
          \caption{A function with knots inside $(x_i, x_{i+1})$ and $(x_{i+1}, x_{i+2})$.}
          \label{fig:geom_char_opp_sign_orig}
    \end{subfigure}%
    \begin{subfigure}{0.5\textwidth}
          \centering
          \includegraphics[width=0.8\linewidth]{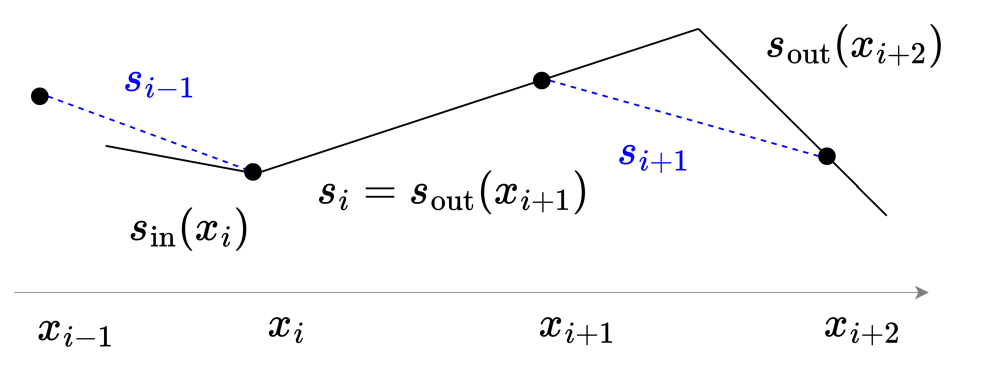}
          \caption{A function which agrees with $\ell_i$ on $[x_i, x_{i+1}]$.}
          \label{fig:geom_char_opp_sign_red}
    \end{subfigure}
    \caption{Behavior of $f \in S_p^*$ between data points of opposite curvature. The knot inside $(x_i, x_{i+1})$ on the left can be moved to $x_i$, and the knot inside $(x_{i+1}, x_{i+2})$ can be adjusted accordingly (right); this reduces the magnitudes of the slope changes of both knots.}
    \label{fig:geom_char_opp_sign}
\end{figure}
\paragraph{Linearity between collinear data points.} 
For $0 < p < 1$, fix $f \in S_p^*$. If $\si(f, x_i) = s_i = s_{i+1} = \dots = s_{i+m-1} = \so(f, x_{i+m})$, then $f$ must agree with $\ell_i = \dots = \ell_{i+m-1}$ on $[x_i, x_{i+m}]$, since any other function $g$ would have $V_p(g \big\rvert_I) > 0 = V_p (f \big\rvert_I )$ on $I := [x_i - \epsilon, x_{i+m} + \epsilon]$ for small $\epsilon > 0$. If $\sgn(s_i - \si(f, x_i)) \neq \sgn(\so(f, x_{i+m}) - s_i)$, then the argument in the proof of \cref{lemma:opp_sign} shows that $f$ must agree with $\ell_i = \dots = \ell_{i+m-1}$ on $[x_i, x_{i+m}]$. So we need only consider the case where $\sgn(s_i - \si(f, x_i)) = \sgn(\so(f, x_{i+m}) - s_i)$ are both nonzero; say without loss of generality that they both equal 1, so that $\si(f, x_i) < s_i < \so(f, x_{i+m})$. If $f = \ell_i$ on both $[x_i, x_{i+1}]$ and $[x_{i+m-1}, x_{i+m}]$, then it also must agree with $\ell_i$ on $[x_{i+1}, x_{i+m-1}]$ (otherwise it would have $V_p (f \big\rvert_{[x_i, x_{i+m}]}) > 0$), so assume by contradiction that $f \neq \ell_i$ on at least one of these intervals, say without loss of generality on $[x_i, x_{i+1}]$. Then by \cref{lemma:same_sign,lemma:opp_sign}, it must be the case that $f$ has a single knot inside $(x_i, x_{i+1})$ and that $\si(f, x_i) = \so(f, x_i) < s_i < \si(f, x_{i+1}) = \so(f, x_{i+1})$. This implies that $f$ also disagrees with $\ell_i$ on $[x_i, x_{i+1}]$, so again by \cref{lemma:same_sign,lemma:opp_sign}, $f$ must have a single knot inside $(x_{i+1}, x_{i+2})$ with $\si(f, x_{i+1}) = \so(f, x_{i+1}) > s_{i+1} > \si(f, x_{i+2}) = \so(f, x_{i+2})$. The same logic applies on the remaining intervals up to and including $[x_{i+m-1}, x_{i+m}]$ (see \cref{fig:geom_char_collinear}). Note that if $m$ is even, we will have $\si(f, x_{i+m-1}) = \so(f, x_{i+m-1}) > s_{i+m-1} = s_i > \si(f,x_{i+m}) = \so(f, x_{i+m})$, contradicting the assumption that $\sgn(\so(f, x_{i+m}) - s_i) = 1$ (see \cref{fig:geom_char_collinear_ex1}). If $m$ is odd, as in \cref{fig:geom_char_collinear_ex2}, we have
\begin{align}
    V_p (f \big\rvert_I ) &= |\so(f, x_{i+1}) - \si(f, x_i)|^p + |\so(f, x_{i+2}) - \so(f, x_{i+1})|^p \\
    &\ \  + \dots + |\so(f, x_{i+m}) - \so(f, x_{i+m-1})|^p \\
    &> |s_i - \si(f, x_i)|^p + |\so(f, x_{i+m}) - s_{i+m-1}|^p = V_p(g \big\rvert_I)
\end{align}
where $g$ is the function which agrees with $f$ outside of $[x_i, x_{i+m}]$ and agrees with $\ell_i = \dots = \ell_{i+m-1}$ on $[x_i, x_{i+m}]$; this contradicts $f \in S_p^*$.
\begin{figure}
    \centering
    \begin{subfigure}{0.45\textwidth}
          \centering
          \includegraphics[width=\linewidth]{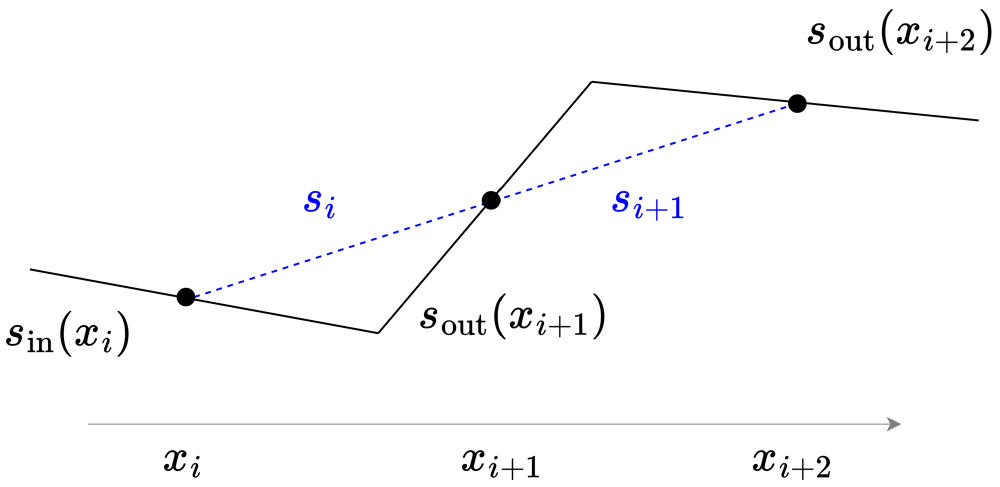}
          \caption{A nonlinear function between $m+1$ collinear points, $m$-even.}
          \label{fig:geom_char_collinear_ex1}
    \end{subfigure}\hspace{7mm}%
    \begin{subfigure}{0.45\textwidth}
          \centering
          \includegraphics[width=\linewidth]{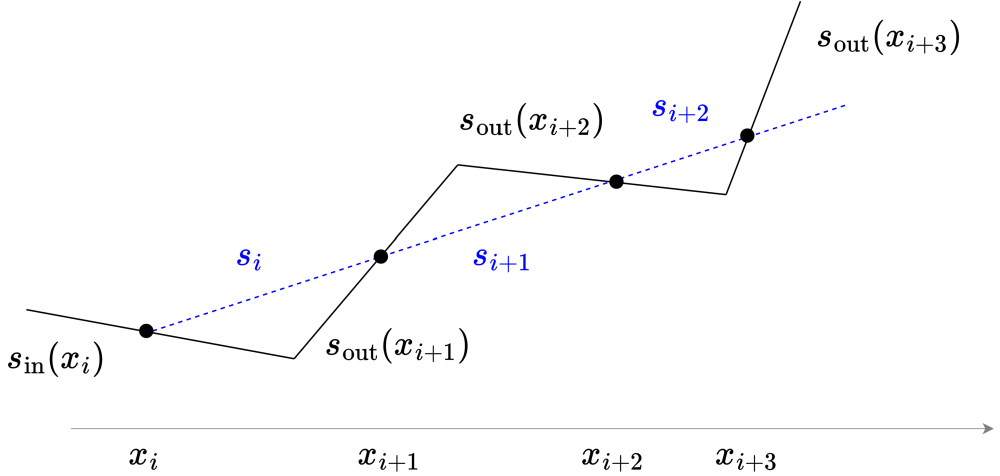}
          \caption{A nonlinear function between $m+1$ collinear points, $m$-odd.}
          \label{fig:geom_char_collinear_ex2}
    \end{subfigure}
    \caption{Behavior of $f \in S_p^*$ between collinear points. If $f \in S_p^*$ is \textit{not} a straight line between collinear points $(x_i, y_i), \dots, (x_{i+m}, y_{i+m})$, it must look like \cref{fig:geom_char_collinear_ex1} (if $m$ is even) or \cref{fig:geom_char_collinear_ex2} (if $m$ is odd). In both cases, the sum of absolute slope changes of these functions is greater than the sum of absolute slope changes of the function $g$ which agrees with $f$ outside of $[x_i, x_{i+m}]$ and connects $(x_i, y_i), \dots, (x_{i+m}, y_{i+m})$ with a straight line. Such a $g$ has two knots, whereas functions of the form $f$ depicted here have $m \geq 2$ knots.}
    \label{fig:geom_char_collinear}
\end{figure}

In the case $p = 0$, fix $f \in S_0^*$. If $\si(f, x_i) = s_i = \dots = s_{i+m-1} = \so(f, x_{i+m})$, then $f$ must agree with $\ell_i = \dots = \ell_{i+m-1}$ on $[x_i, x_{i+m}]$ and if $\sgn(s_i - \si(f, x_i)) \neq \sgn(\so(f, x_{i+m}) - s_i)$, then the proof of \cref{lemma:opp_sign} shows that there is some $g \in S_0^*$ which agrees with $f$ outside of $[x_i, x_{i+m}]$ and agrees with $\ell_i$ on $[x_i, x_{i+m}]$. If $\sgn(s_i - \si(f, x_i)) = \sgn(\so(f, x_{i+m}) - s_i)$ are both nonzero, then there must be at least one knot on $[x_i, x_{i+m}]$ in order for the slope to change from $\si(f, x_i)$ to $\so(f, x_{i+m})$. It is impossible for $f$ to interpolate the data with a single knot on $[x_i, x_{i+m}]$ where the slope changes from $\si(f, x_i)$ to $\so(f, x_{i+m})$, since this would require at least two of the points $(x_i, y_i), \dots, (x_{i+m}, y_{i+m})$ to both lie on either the incoming line at $x_i$ or the outgoing point at $x_{i+m}$, but this is impossible because $s_i \neq \si(f, x_i)$ and $s_i \neq \so(f, x_{i+m})$. Therefore, $f$ must have at least two knots on $[x_i, x_{i+m}]$. The function $g$ which agrees with $\ell_i$ on $[x_i, x_{i+m}]$ and has $\si(g, x_i) = \si(f, x_i)$ and $\so(g, x_{i+m}) = \so(f, x_{i+m})$ interpolates the points $(x_i, y_i), \dots, (x_{i+m}, y_{i+m})$ with exactly two knots on $[x_i - \epsilon, x_{i+m} + \epsilon]$, and thus $g \in S_0^*$.
\paragraph{Single knot between two data points with the same curvature.}  For $0 < p < 1$, fix $f \in S_p^*$. If $i = 2$, then $f = \ell_1$ on $(-\infty, x_2]$ by \cref{th:geom_char},\ref{th:geom_char_1}. If $i > 2$, then by assumption, $\sgn(s_{i-1}-s_{i-2}) \neq \sgn(s_i - s_{i-1})$, so by \cref{th:geom_char},\ref{th:geom_char_1}, $f = \ell_{i-1}$ on $[x_{i-1}, x_i]$. In either case, we have $\si(f, x_i) = s_{i-1}$. An analogous argument shows that $\so(f, x_{i+1}) = s_{i+1}$. Similarly, \cref{th:geom_char},\ref{th:geom_char_1} says that there is some $g \in S_0^*$ for which $\si(g, x_i) = s_{i-1}$ and $\so(g, x_{i+1}) = s_{i+1}$. In both cases, the conclusion then follows from \cref{lemma:opp_sign}. 
\paragraph{Characterization around $\geq 2$ points with the same curvature.}  For $0 < p < 1$, fix some $f \in S_p^*$. As in the proof of \cref{th:geom_char},\ref{th:geom_char_2a} above, the assumptions guarantee that $s_{i-1} = \si(f, x_i)$ and $s_{i+m} = \so(f, x_{i+m})$. Using this fact, we will proceed by (strong) induction, assuming without loss of generality that $\sgn(s_i - s_{i-1}) = \sgn(s_{i+1} - s_i) = \dots = \sgn(s_{i+m}-s_{i+m-1})=1$.

In the base case $m = 2$, first suppose that $\sgn(s_i - \si(f, x_i)) \neq \sgn(\so(f, x_{i+1}) - s_i)$. Since $\si(f, x_i) = s_{i-1} < s_i$ by assumption, it must be the case that $\sgn(\so(f, x_{i+1}) - s_i) \in \{0, -1\}$. If $\sgn(\so(f, x_{i+1}) - s_i) = -1$, \cref{lemma:opp_sign} implies that $f = \ell_i$ on $[x_i, x_{i+1}]$, and thus $\si(f, x_{i+1}) = s_i$. But then we have $\si(f, x_{i+1}) = s_i < s_{i+1} < \so(f, x_{i+2}) = s_{i+2}$, so by \cref{lemma:same_sign}, it must be the case that $\si(f, x_{i+1}) = \so(f, x_{i+1})$, contradicting $\sgn(\so(f, x_{i+1}) - s_i) = -1$ (see \cref{fig:4b_base_imp_2}). If $\sgn(\so(f, x_{i+1}) - s_i) = 0$, then \cref{lemma:opp_sign} implies that $f = \ell_i$ on $[x_i, x_{i+1}]$, and therefore $\si(f, x_{i+1}) = \so(f, x_{i+1}) = s_i$. Then $\si(f, x_{i+1}) = s_i < s_{i+1} < \so(f, x_{i+2}) = s_{i+2}$, so by \cref{lemma:same_sign}, $f$ has a single knot inside $[x_{i+1}, x_{i+2}]$, with $\si(f, x_{i+1}) = \so(f, x_{i+1}) = s_i$ (as we already know) and $\si(f, x_{i+2}) = \so(f, x_{i+2}) = s_{i+2}$. The conclusion then holds with $u_1 := s_i$ (see \cref{fig:4b_base_ui_eq_si}).

\begin{figure}
    \centering
    \begin{subfigure}{0.5\textwidth}
          \centering
          \includegraphics[width=0.9\linewidth]{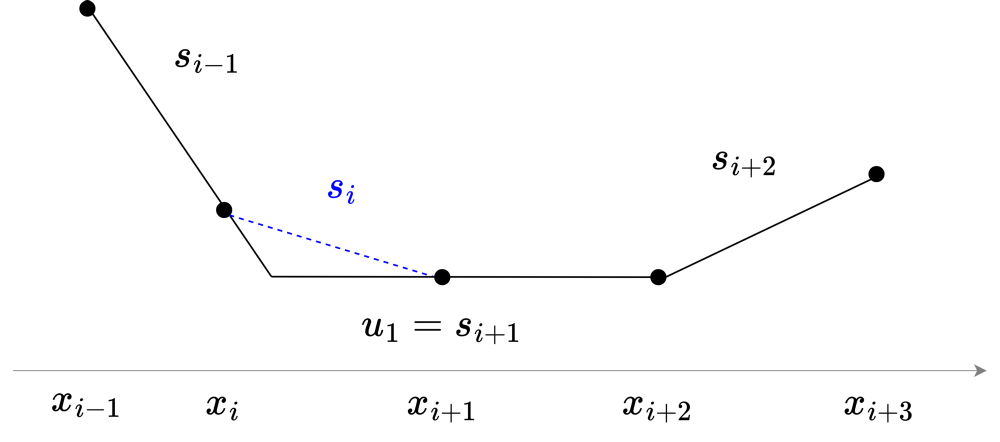}
          \caption{$u_1 = s_{i+1}$}
          \label{fig:4b_base_ui_eq_sip1}
    \end{subfigure}%
    \begin{subfigure}{0.5\textwidth}
          \centering
          \includegraphics[width=0.9\linewidth]{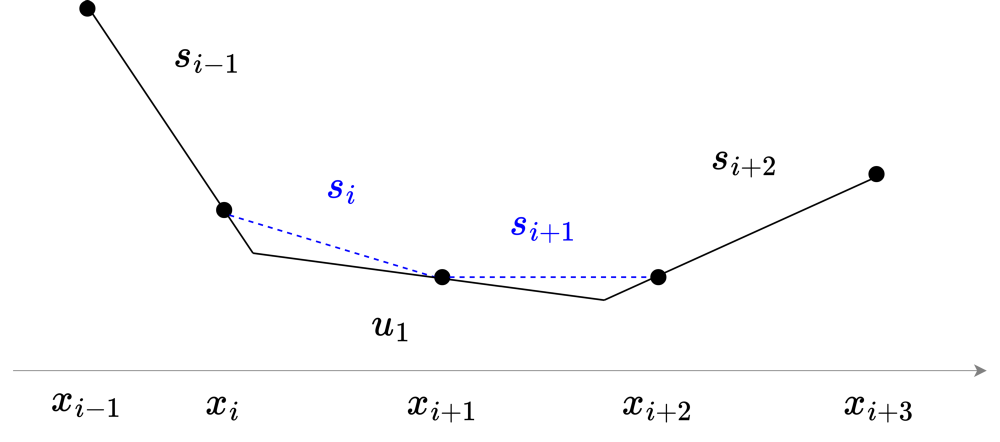}
          \caption{$s_i < u_1 < s_{i+1}$}
          \label{fig:4b_base_ui_betw_si_sip1}
    \end{subfigure}
    \par\bigskip
    \begin{subfigure}{0.5\textwidth}
          \centering
          \includegraphics[width=0.9\linewidth]{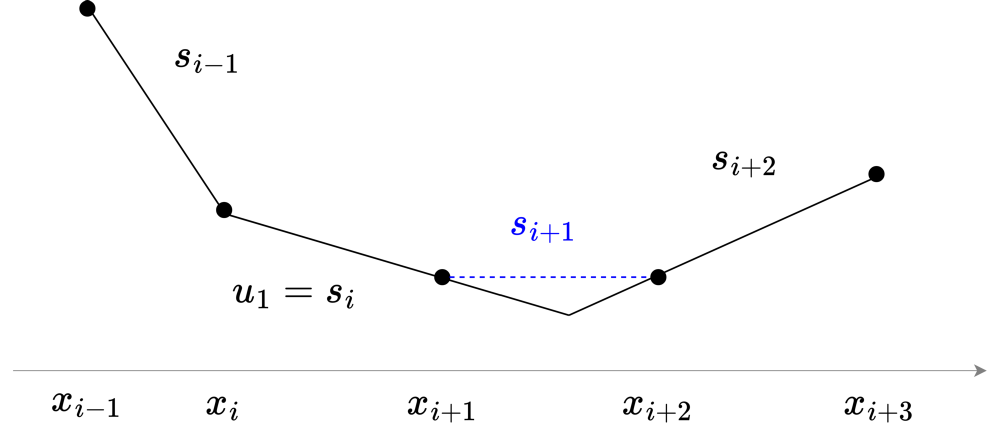}
          \caption{$u_i = s_i$}
          \label{fig:4b_base_ui_eq_si}
    \end{subfigure}
    \caption{Possible behavior of $f \in S_p^*$ between three consecutive data points of the same discrete curvature. All possibilities satisfy $s_i \leq u_1 \leq s_{i+1}$.}
    \label{fig:4b_base_possible}
\end{figure}

\begin{figure}
    \centering
    \begin{subfigure}{0.5\textwidth}
          \centering
          \includegraphics[width=0.9\linewidth]{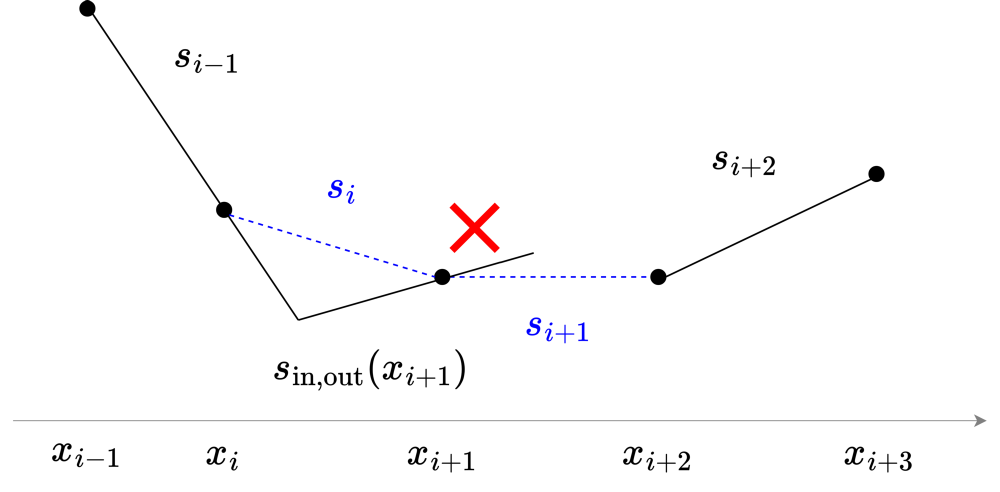}
          \caption{$\si(x_{i+1}) = \so(x_{i+1}) > s_{i+1}$}
          \label{fig:4b_base_imp_1}
    \end{subfigure}%
    \begin{subfigure}{0.5\textwidth}
          \centering
          \includegraphics[width=0.9\linewidth]{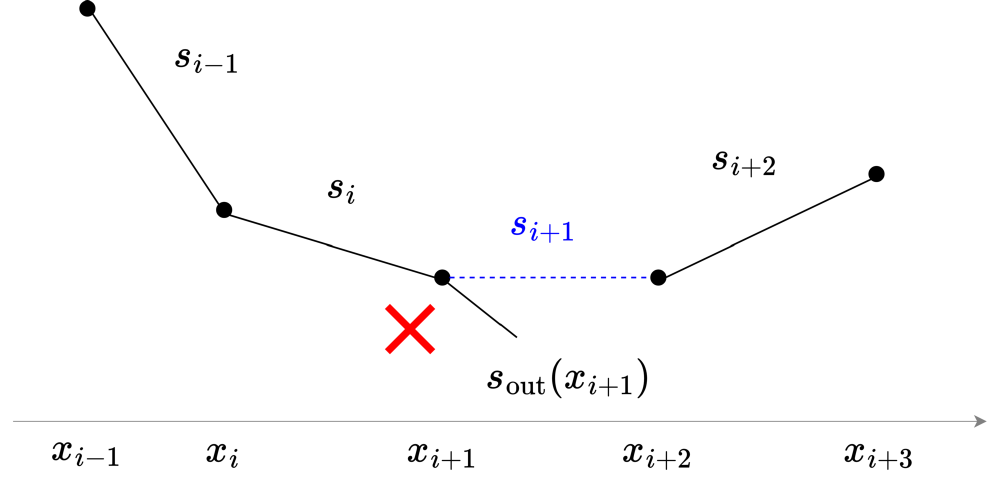}
          \caption{$\so(x_{i+1}) < \si(x_{i+1}) = s_i$}
          \label{fig:4b_base_imp_2}
    \end{subfigure}
    \caption{Behaviors which $f \in S_p^*$ for $0 < p < 1$ \textit{cannot} exhibit around three consecutive points of the same discrete curvature. The case on the left violates \cref{lemma:opp_sign}, and the case on the right violates \cref{lemma:same_sign}.}
    \label{fig:4b_base_impossible}
\end{figure}

On the other hand, still for the base case $m=2$, suppose that $\sgn(s_i - \si(f, x_i)) = \sgn(\so(f, x_{i+1}) - s_i)$. Then by \cref{lemma:same_sign}, there is a single knot inside $[x_i, x_{i+1}]$, with $s_{i-1} = \si(f, x_i) = \so(f, x_i)$ and $\si(f, x_{i+1}) = \so(f, x_{i+1})$. It cannot be the case that $\so(f, x_{i+1}) > s_{i+1}$, because if this were true, we would have $-1 = \sgn(s_{i+1} - \si(f, x_{i+1})) \neq \sgn(\so(f, x_{i+2})-s_{i+1}) = 1$, and that would imply by \cref{lemma:opp_sign} that $f = \ell_{i+1}$ on $[x_{i+1}, x_{i+2}]$, contradicting $\so(f, x_{i+1}) > s_{i+1}$ (see \cref{fig:4b_base_imp_1}). Therefore, we must have $\so(f, x_{i+1}) \leq s_{i+1}$. If $\so(f, x_{i+1}) < s_{i+1}$, then by \cref{lemma:same_sign}, there is a single knot on $[x_{i+1}, x_{i+2}]$, with $\si(f, x_{i+1}) = \so(f, x_{i+1})$ (as we already knew) and $\si(f, x_{i+2}) = \so(f, x_{i+2}) = s_{i+2}$. The conclusion then holds with $u_1 := \si(f, x_{i+1}) = \so(f, x_{i+1})$ (see \cref{fig:4b_base_ui_betw_si_sip1}). If $\so(f, x_{i+1}) = s_{i+1}$, then $0 = \sgn(s_{i+1} - \si(f, x_{i+1}) \neq \sgn(\so(f, x_{i+2}) - s_{i+1}) = 1$, so by \cref{lemma:opp_sign}, $f = \ell_{i+1}$ on $[x_{i+1}, x_{i+2}]$. The conclusion then holds with $u_1 := s_{i+1}$ (see \cref{fig:4b_base_ui_eq_sip1}).

Next, for the (strong) inductive step, fix some integer $m \geq 4$ and assume the conclusion holds for all integers $2, \dots, m-1$. First suppose that $\so(f, x_{i+m-1}) > s_{i+m-2}$. Then by the inductive hypothesis, $f$ has slopes $u_1, \dots, u_{m-2}$---some of which may be equal to each other, but all of which are distinct from $\si(f, x_i) = s_{i-1}$ and $\so(f, x_{i+m-1})$---on $[x_i, x_{i+m-1}]$ satisfying $s_{i+j-1} \leq u_j \leq s_{i+j}$ for all $j = 1, \dots, m-2$. It cannot be the case that $\so(f, x_{i+m-1}) > s_{i+m-1}$, because if this were true, we would have $-1 = \sgn(s_{i+m-1} - \si(f, x_{i+m-1})) \neq \sgn(\so(f, x_{i+m}) - s_{i+m-1}) = 1$, and thus \cref{lemma:opp_sign} would imply that $f = \ell_{i+m-1}$ on $[x_{i+m-1}, x_{i+m}]$, contradicting $\so(f, x_{i+m-1}) > s_{i+m-1}$ (see \cref{fig:4b_base_imp_2}). Therefore, we must have $\so(f, x_{i+m-1}) \leq s_{i+m-1}$. If $\so(f, x_{i+m-1}) < s_{i+m-1}$, then by \cref{lemma:same_sign}, there is a single knot inside $[x_{i+m-1}, x_{i+m}]$ and $\si(f, x_{i+m-1}) = \so(f, x_{i+m-1})$ and $\si(f, x_{i+m}) = \so(f, x_{i+m}) = s_{i+m}$. The conclusion then holds for $m$ with $u_{m-1} := \so(f, x_{i+m-1})$ (see \cref{fig:4b_ind_umm1_betw_simm2+simm1}). If $\so(f, x_{i+m-1}) = s_{i+m-1}$, then by \cref{lemma:same_sign,lemma:opp_sign}, it must be the case that $\{0, -1\} \ni \sgn(s_{i+m-1}-\si(f, x_{i+m-1})) \neq \sgn(\so(f, x_{i+m}) - s_{i+m-1})) = 1$. It is impossible that $\sgn(s_{i+m-1}-\si(f, x_{i+m-1})) = -1$ because by \cref{lemma:same_sign,lemma:opp_sign}, for $f$ to disagree with $\ell_{i+m-2}$ on $[x_{i+m-2}, x_{i+m-1}]$, it must be the case that $\si(f, x_{i+m-1}) = \so(f, x_{i+m-1})$, contradicting $\si(f, x_{i+m-1}) < \so(f, x_{i+m-1}) = s_{i+m-1}$ (see \cref{fig:4b_ind_imp_1}, red). Therefore, in this case we have $\si(f, x_{i+m-1}) = \so(f, x_{i+m-1}) = s_{i+m-1}$, and the conclusion holds for $m$ with $u_{m-1} := s_{i+m-1}$ (see \cref{fig:4b_ind_imp_1}, green).

\begin{figure}
    \centering
    \begin{subfigure}{0.5\textwidth}
          \centering
          \includegraphics[width=0.95\linewidth]{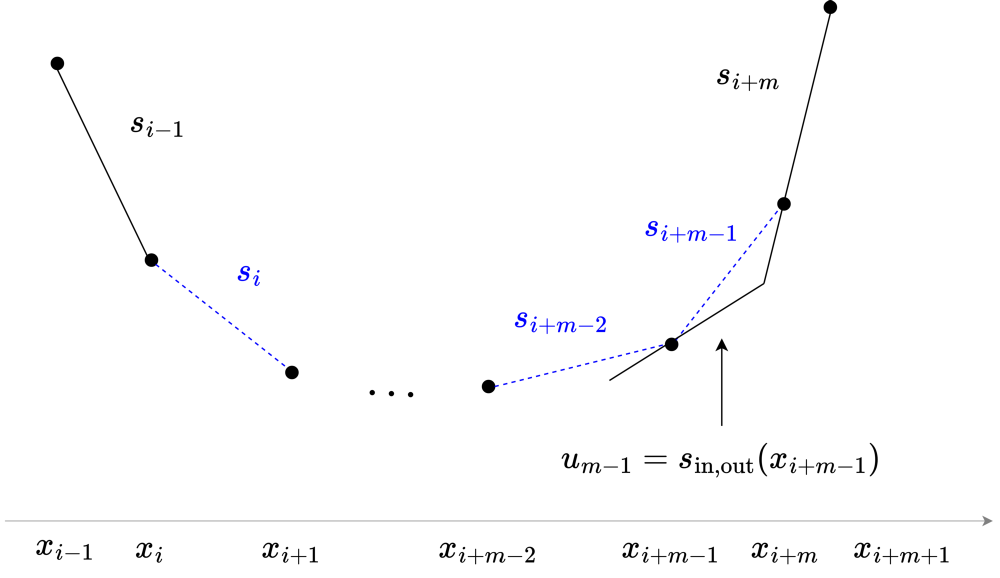}
          \caption{$s_{i+m-2} < u_{m-1} < s_{i+m-1}$}
          \label{fig:4b_ind_umm1_betw_simm2+simm1}
    \end{subfigure}%
    \begin{subfigure}{0.5\textwidth}
          \centering
          \includegraphics[width=0.95\linewidth]{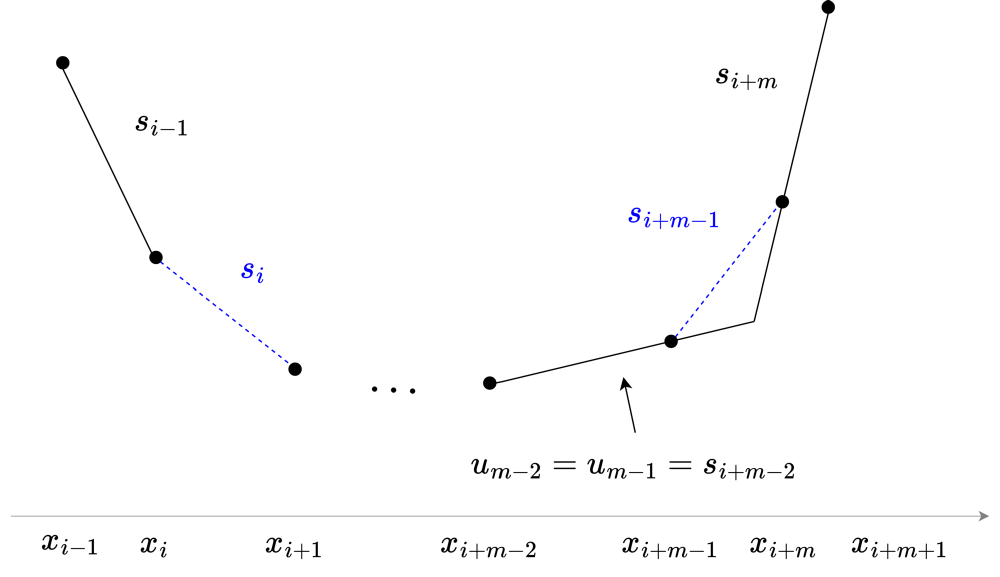}
          \caption{$u_{m-2} = u_{m-1} = s_{i+m-2}$}
          \label{fig:4b_ind_umm2_umm1_eq_simm2}
    \end{subfigure}
    \caption{Possible behavior of $f \in S_p^*$ around $m$ consecutive data points of the same discrete curvature. Assuming inductively that \cref{th:geom_char},\ref{th:geom_char_2b} holds for $2, \dots, m-1$, both satisfy $s_{i+j-1} \leq u_j \leq s_{i+j}$ for $j = 1, \dots, m-1$.}
    \label{fig:4b_ind_possible}
\end{figure}

\begin{figure}
    \centering
    \begin{subfigure}{0.5\textwidth}
          \centering
          \includegraphics[width=0.95\linewidth]{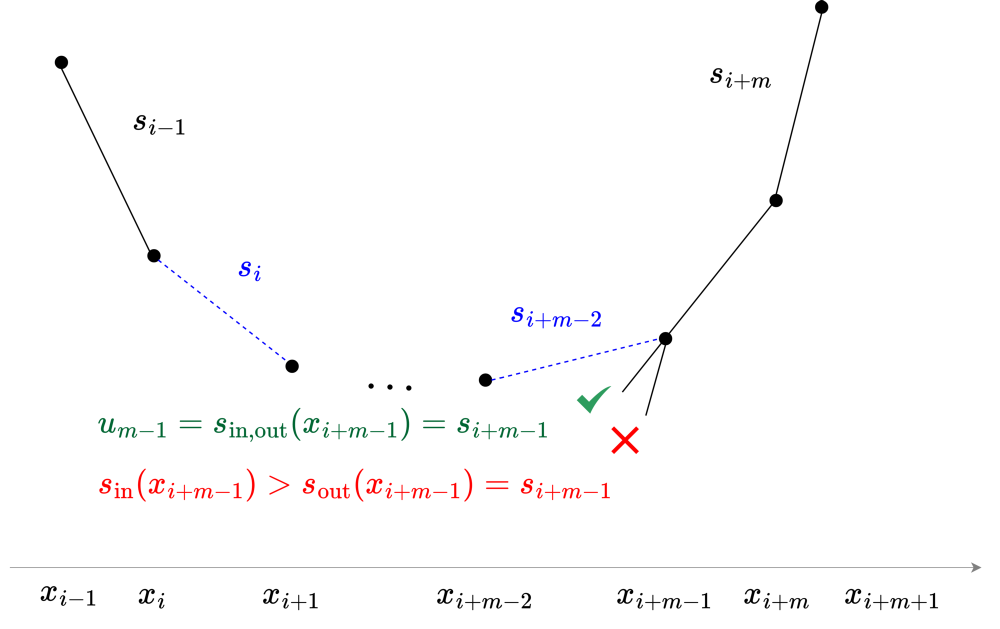}
          \caption{$s_{i+m-1} = \so(x_{i+m-1}) \leq \si(x_{i+m-1})$}
          \label{fig:4b_ind_imp_1}
    \end{subfigure}%
    \begin{subfigure}{0.5\textwidth}
          \centering
          \includegraphics[width=0.95\linewidth]{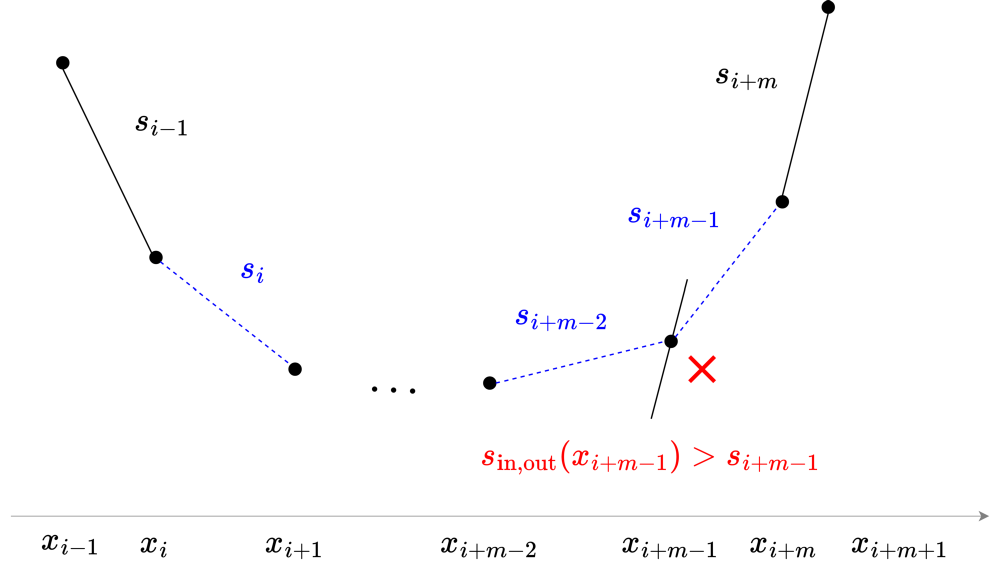}
          \caption{$\si(x_{i+m-1}) = \so(x_{i+m-1}) > s_{i+m-1}$}
          \label{fig:4b_ind_imp_2}
    \end{subfigure}
    \caption{Behaviors which $f \in S_p^*$ can and cannot exhibit between $m$ consecutive points of the same discrete curvature. Assuming inductively that \cref{th:geom_char},\ref{th:geom_char_2b} holds for $2, \dots, m-1$, the case with the green check mark on the left satisfies $s_{i+j-1} \leq u_j \leq s_{i+j}$ for $j = 1, \dots, m-1$. The case with the red \textit{x} on the left violates \cref{lemma:same_sign}, and the case on the right violates \cref{lemma:opp_sign}.}
    \label{fig:4b_ind_impossible}
\end{figure}

\begin{figure}
    \centering
    \begin{subfigure}{0.5\textwidth}
          \centering
          \includegraphics[width=0.95\linewidth]{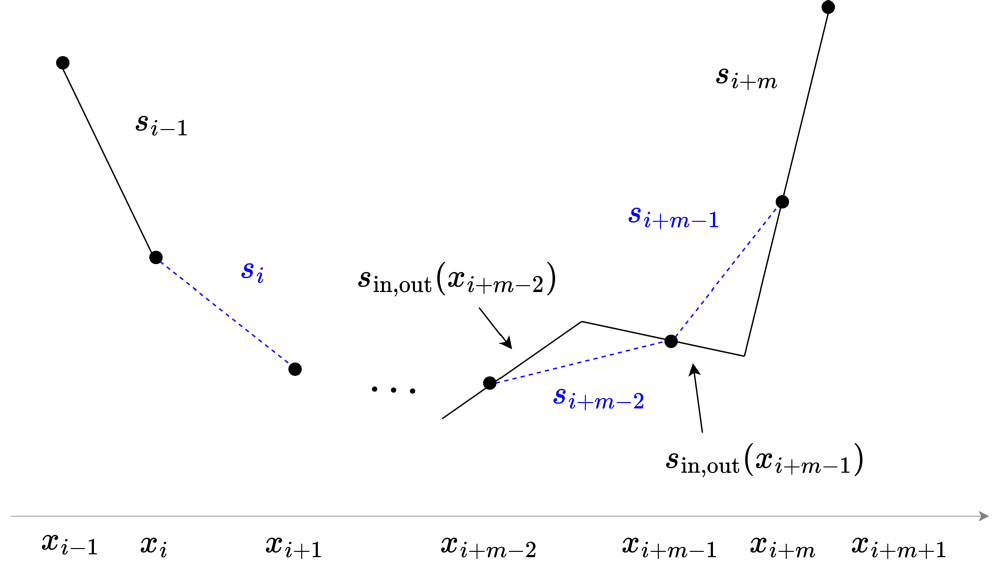}
          \caption{A function with $\so(x_{i+m-1}) < s_{i+m-2}$.}
          \label{fig:4b_ind_contra_orig}
    \end{subfigure}%
    \begin{subfigure}{0.5\textwidth}
          \centering
          \includegraphics[width=0.95\linewidth]{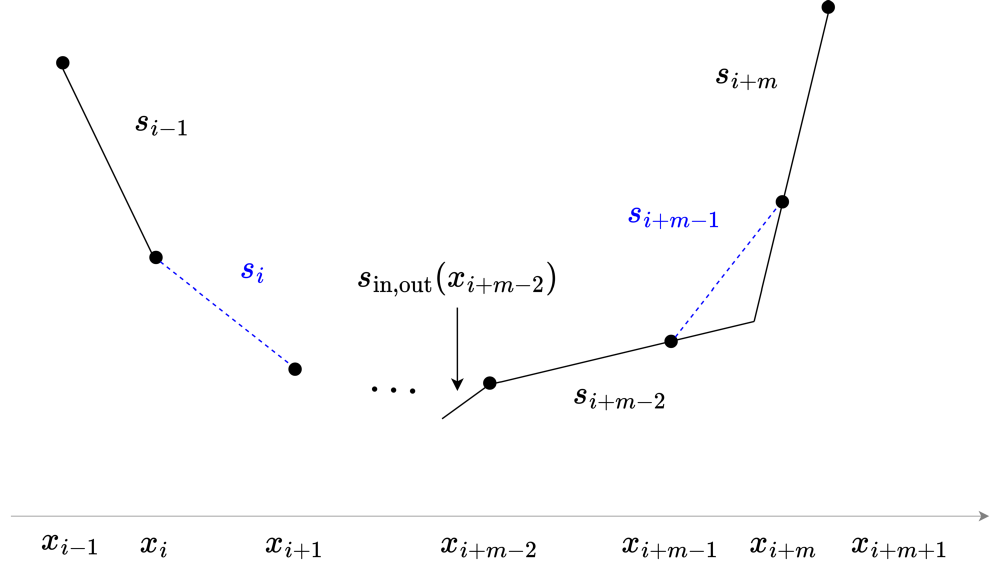}
          \caption{A function with $\si(x_{i+m-1}) = \so(x_{i+m-1}) = s_{i+m-2}$.}
          \label{fig:4b_ind_contra_red}
    \end{subfigure}
    \caption{Possible behavior of $f \in S_p^*$ around $m$ consecutive slope changes of the same discrete curvature. The magnitude of slope change at each knot of the function $f$ on the left, which has $\so(f, x_{i+m-1}) < s_{i+m-2}$, is greater than that of the corresponding knot in the function $g$ on the right, which has $\si(x_{i+m-1}) = \so(x_{i+m-1}) = s_{i+m-2}$.}
    \label{fig:4b_ind_contra}
\end{figure}

On the other hand, still for the (strong) inductive step, suppose that $\so(f, x_{i+m-1}) \leq s_{i+m-2}$. If $\so(f, x_{i+m-1}) = s_{i+m-2}$, then by \cref{lemma:same_sign,lemma:opp_sign}, $f$ has a single knot inside $[x_{i+m-1}, x_{i+m}]$ with $\si(f, x_{i+m-1}) = \so(f, x_{i+m-1}) = s_{i+m-2}$ and $\si(f, x_{i+m}) = \so(f, x_{i+m}) = s_{i+m}$. This implies, again by \cref{lemma:same_sign,lemma:opp_sign}, that $f = \ell_{i+m-2}$ on $[x_{i+m-2}, x_{i+m-1}]$. By the (strong) inductive hypothesis, $f$ has slopes $u_1, \dots, u_{m-3}$ on $[x_i, x_{i+m-2}]$, all distinct from $\si(f, x_i) = s_{i-1}$ and $\so(f, x_{i+m-2}) = s_{i+m-2}$, which satisfy $s_{i+j-1} \leq u_j \leq s_{i+j}$ for $j = 2, \dots, m-3$. The conclusion then holds for $m$ with $u_{m-2} = u_{m-1} := s_{i+m-2}$ (see \cref{fig:4b_ind_umm2_umm1_eq_simm2}). It remains only to consider the case $\so(f, x_{i+m-1}) < s_{i+m-2}$, and show that this is impossible for $f \in S_p^*$. If $\so(f, x_{i+m-1}) < s_{i+m-2}$, then by \cref{lemma:same_sign,lemma:opp_sign}, there is a single knot inside $[x_{i+m-1}-x_{i+m}]$ and $\si(f, x_{i+m-1}) = \so(f, x_{i+m-1})$ and $\si(f, x_{i+m}) = \so(f, x_{i+m}) = s_{i+m}$. This in turn implies, again by \cref{lemma:same_sign,lemma:opp_sign}, that there is a single knot inside $[x_{i+m-2}, x_{i+m-1}]$ and $\si(f, x_{i+m-2}) = \so(f, x_{i+m-2})$. (See \cref{fig:4b_ind_contra_orig}.) On the interval $I := [x_{i+m-2} - \epsilon, x_{i+m} + \epsilon]$ for small $\epsilon > 0$, we thus have
\begin{align}
    V_p (f \big\rvert_I ) &= |\so(f, x_{i+m-1}) - \si(f, x_{i+m-2})|^p + |s_{i+m}-\so(f, x_{i+m-1})|^p \\
    &> |s_{i+m-2} - \si(f, x_{i+m-2})|^p + |s_{i+m}-s_{i+m-2}|^p 
\end{align}
where the inequality holds because $\so(f, x_{i+m-1}) < s_{i+m-2} < s_{i+m}$ and $\si(f, x_{i+m-2}) > s_{i+m-2} > \so(f, x_{i+m-1})$. The latter is exactly $V_p(g \big\rvert_I)$, where $g$ is the function which agrees with $f$ outside of $[x_{i+m-2}, x_{i+m}]$, agrees with $\ell_{i+m-2}$ on $[x_{i+m-2}, x_{i+m-1}]$, and has a single knot in $[x_{i+m-1}, x_{i+m}]$ with $\si(f, x_{i+m-1}) = \so(f, x_{i+m-1}) = s_{i+m-2}$ and $\si(f, x_{i+m}) = \so(f, x_{i+m}) = s_{i+m}$. (See \cref{fig:4b_ind_contra_red}.) This contradicts $f \in S_p^*$.

For the case $p = 0$: again, as in the proof of \cref{th:geom_char},\ref{th:geom_char_2a}, the assumptions guarantee that there is some $f \in S_0^*$ for which $\si(f, x_i) = s_{i-1}$ and $\so(f, x_{i+1}) = s_{i+1}$. The inductive argument above for $0 < p < 1$ also shows the desired result in the $p = 0$ case, with each reference to \cref{lemma:opp_sign} as well as the last portion of the inductive step instead justifying the existence of some $g \in S_0^*$ which exhibits the desired local behavior and agrees with $f$ elsewhere. 
\paragraph{Non-emptiness of $S_p^*$ for $0 < p < 1$.} As noted in \cref{appendix:proof_prop_existence_and_variational}, restricting the input weights to $|w_k| = 1$ in optimization \eqref{opt:min_p_NN} recovers the same set of optimal functions $S_p^*$. The geometric characterization proved above shows that any solution to this modified \eqref{opt:min_p_NN} must have no knots outside of $[x_2, x_{N-1}]$, and thus its biases satisfy $|b_k| \leq B := \max\{ |x_2|, |x_{N-1}| \}$. Additionally, any such solution has slopes absolutely bounded by $ C := \max_{i=1, \dots, N-1} |s_i|$, so that each $|v_k w_k| = |v_k| \leq 2C$, and thus its skip connection parameters can be bounded as
\begin{align}
    |a| - \left| \sum_{w_k > 0} v_k \right| \leq  \left| a + \sum_{w_k > 0} v_k \right| = |f'(x_N + 1)| \leq C \implies |a| \leq A := C + \sum_{w_k > 0} |v| \leq C + 2KC 
\end{align}
and
\begin{align}
    c = y_1 - \sum_{k=1}^K v_k (w_k x_1 - b_k)_+ - a x_1 \implies |c| &\leq |y_1| + \sum_{k=1}^K |v_k|(|x_1| + |b_k|) + |a x_1| \\
    &\leq C_0 := |y_1| +  2KC(|x_1| + B) + |x_1| (C + 2KC)
\end{align}
Therefore, any $f \in S_p^*$ is recovered by a restricted version of \eqref{opt:min_p_NN} which requires that $|w_k| = 1, |b_k| \leq B, |v_k| \leq 2C, |a| \leq A, |c| \leq C_0$. For any fixed choice of $w_1, \dots, w_K \in \{-1,1\}^K$, this modified optimization (in the remaining variables) constitutes a minimization of a continuous function over a compact set, so by the Weierstrass extreme value theorem, a solution exists. Taking the minimum over all such solutions for all possible choices of  $w_1, \dots, w_K \in \{-1,1\}^K$ proves the result.
\end{proof}

\subsubsection{Proof of \cref{th:main}} \label{appendix:proof_main_th}
\begin{proof}
    If the data contain no more than two consecutive points with the same discrete curvature, there is only one interpolant $f$ which fits the description in \cref{th:geom_char}. By Theorem 4 in \cite{debarre2022sparsest}, this $f \in S_0^*$. Otherwise, if the data do contain some $x_i, \dots, x_{i+m}$ with the same discrete curvature for $m \geq 2$, the slopes $u_1, \dots, u_{m-1}$ of any interpolant satisfying the description in \cref{th:geom_char},\ref{th:geom_char_2b} have $s_{i+j-1} \leq u_j \leq s_{i+j}$ for each $j = 1, \dots, m-1$. Indeed, any choice of $u_1, \dots, u_{m-1}$ satisfying  $s_{i+j-1} \leq u_j \leq s_{i+j}$ for each $j$ defines an CPWL interpolant of the data, given by the pointwise maximum of $\ell_{i-1}$, $\ell_{i+m}$, and the lines $L_j$, each  of which has slope $u_j$ and passes through $(x_{i+j}, y_{i+j})$. Therefore, the set $S$ of functions described by \cref{th:geom_char},\ref{th:geom_char_2b} on any such $x_i, \dots, x_{i+m}$ can be fully associated with the set of numbers $u_1, \dots, u_m$ satisfying  $s_{i+j-1} \leq u_j \leq s_{i+j}$ for each $j$. Since any such $u_j = (1-\alpha_j) s_{i+j-1} + \alpha_j s_{i+j}$ for a unique $\alpha_j \in [0,1]$, we can equivalently identify $S$ with the unit cube $[0,1]^{m-1}$. 
    
    Viewed as a function of its corresponding $\valpha = [\alpha_1, \dots, \alpha_{m-1}]^\top \in [0,1]^{m-1}$, the regularization cost $V_p(f \rvert_I)$ (for $0 < p < 1$) of any $f \in S$ on $I := [x_{i-1}-\delta, x_{i+m+1} + \delta]$ for small $\delta > 0$ is
    \begin{align}
        V_p(\valpha) = |u_1 - s_{i-1}|^p + \sum_{j=2}^{m-1} |u_j - u_{j-1}|^p + |s_{i+m}-u_{m-1}|^p = \| \mA \valpha + \vc \|_p^p
    \end{align}
    where the rows $\va_1, \dots, \va_m$ of $\mA \in \R^{m \times (m-1)}$ and entries $c_1, \dots, c_m$ of $\vc \in \R^m$ are
    \begin{align}
        \va_1 &= [s_{i+1}-s_i, 0, \dots, 0]^\top, \qquad \qquad \ \ c_1 = s_i - s_{i-1} \\
        \va_m &= [0, \dots, 0, s_{i+m-1}-s_{i+m}]^\top, \qquad c_1 = s_{i+m} - s_{i+m-1}
    \end{align}
    and
    \begin{align}
        \va_j = [0, \dots, 0, -(s_{i+j-1} - s_{i+j-2}), s_{i+j}-s_{i+j-1}, 0, \dots, 0]^\top, \qquad c_j = s_{i+j-1}-s_{i+j-2}
    \end{align}
    for $j = 2, \dots, m-1$, with the nonzero entries of $\va_j$ in positions $j-1$ and $j$. By the assumption that $\epsilon_i = \dots = \epsilon_{i+m}$ are all nonzero, the rows $\va_1, \dots, \va_m$ of $\mA$ span $\R^{m-1}$, and thus $\valpha \mapsto \mA \valpha + \vc$ is injective. For any distinct $\valpha_1, \valpha_2 \in [0,1]^{m-1}$, we thus have $\mA \valpha_1 + \vc \neq \mA \valpha_2 + \vc$, and therefore
    \begin{align}
        V_p(t \valpha_1 + (1-t) \valpha_2) = \| t (\mA \valpha_1 + \vc)  + (1-t) (\mA \valpha_2 + \vc) \|_p^p > t \| \mA \valpha_1 + \vc \|_p^p + (1-t) \| \mA \valpha_2 + \vc \|_p^p
    \end{align}
    for any $t \in (0,1)$ by strict concavity of $\| \cdot \|_p^p$ on $[0,1]^{m-1}$. This shows that $V_p$ is strictly concave on $[0,1]^{m-1}$. By the Bauer maximum principle (\cite{aliprantis2006infinite}, Theorem 4.104), $V_p(\valpha)$ thus attains a minimum on $[0,1]^{m-1}$ at an extreme point of $[0,1]^{m-1}$. Moreover, by strict concavity of $V_p(\valpha)$, \textit{any} minimum of $V_p(\valpha)$ over $[0,1]^{m-1}$ must occur at an extreme point. Therefore, when searching for an $f \in S$ with minimal $V_p$, we may restrict our attention to those $f$ corresponding to the $2^{m-1}$ vertices $\{0,1\}^{m-1}$ of the cube $[0,1]^{m-1}$. 

    Among these $2^{m-1}$ vertices, there is at least one corresponding to a sparsest solution $f \in S_0^* \cap S$. This is because, by Theorem 4 in \cite{debarre2022sparsest}, any $f \in S_0^* \cap S$ has $\lceil \frac{m+1}{2} \rceil$ knots on $I$, and there is one such $f$ if $m$ is odd, or uncountably many if $m$ is even. If $m$ is odd, this unique $f$ corresponds to the vertex $[1,0,\dots,1,0]^\top \in \{0,1\}^{m-1}$; i.e., this $f$ has $u_j = s_{i+j}$ for odd $j$ and $u_j = s_{i+j-1}$ for even $j$. If $m$ is even, there are multiple vertices $\valpha \in \{0,1\}^{m-1}$ which attain the minimal number $\lceil \frac{m+1}{2} \rceil$ of knots on $I$: two examples are $[1,0,\dots,1,0,1]^\top \in \{0,1 \}^{m-1}$ (see \cref{fig:4b_m_4_sp2}) and $[0,1,\dots,0,1,0]^\top \in \{0,1\}^{m-1}$  (see \cref{fig:4b_m_4_sp1}).

    For each of the $2^{m-1}$ functions $f \in S$ corresponding to the vertices $\valpha \in \{0,1\}^{m-1}$, consider the associated ``cost curves'' $C_f(p) := V_p(f \rvert_I)$, which is simply the regularization cost $V_p(f \rvert_I )$ for that individual $f$ over $I$, viewed as a function of the variable $p \in [0,1]$. Each $C_f(p)$ is a \textit{generalized Dirichlet polynomial}\footnote{\textit{Generalized Dirichlet polynomials} are functions of the form $f(x) = \sum_{i=1}^n a_i b_i^x$, where $a_i, x \in \R$ and $b_1 \geq \dots \geq b_n > 0$.} of the variable $p$. By the generalized Descartes rule of signs for Dirichlet polynomials (\cite{jameson2006counting}, Theorem 3.1), any two cost curves $C_f(p), C_g(p)$ for distinct $f, g$ can only intersect at finitely many $p \in [0,1]$. Therefore, for any given $p \in [0,1]$ outside of that finite set (which has Lebesgue measure zero), a unique one of these $2^{m-1}$ candidate solutions $f$ has smaller cost $C_f(p) = V_p(f \rvert_I)$ than the others. Furthermore, the sparsest of these $2^{m-1}$ functions (i.e., the ones in $S \cap S_0^*$) will necessarily have smaller $C_f(0) = V_0(f \rvert_I)$ than the rest, and because all of the cost curves $C_f(p)$ are continuous, a unique one of these sparsest solutions will have smaller cost $C_f(p)$ than the others for all $p$ between 0 and $p^*$, which is the location of the first intersection of any two of these $2^{m-1}$ candidate solutions' cost curves. 
\end{proof}

\begin{figure}
    \centering
    \begin{subfigure}{0.45\textwidth}
          \centering
          \includegraphics[width=0.95\linewidth]{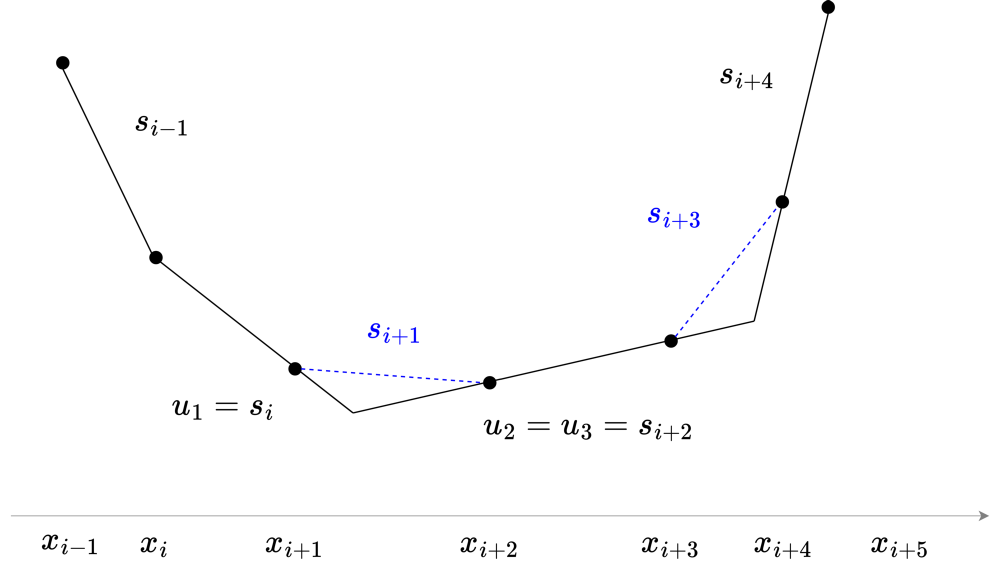}
          \caption{One sparsest interpolant, corresponding to $\valpha = [0,1,0]$.}
          \label{fig:4b_m_4_sp1}
    \end{subfigure}\hspace{5mm}%
    \begin{subfigure}{0.45\textwidth}
          \centering
          \includegraphics[width=0.95\linewidth]{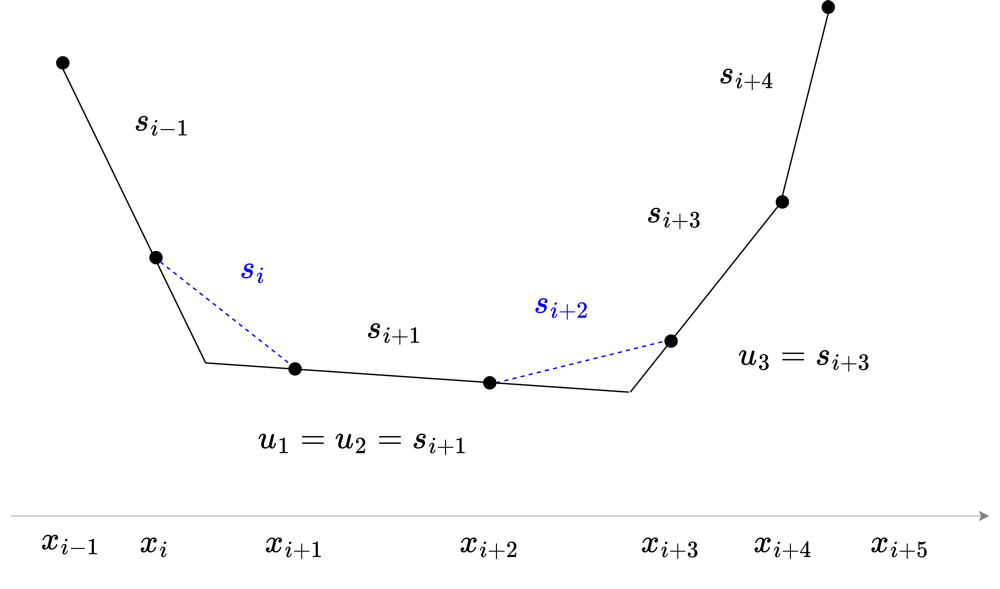}
          \caption{Another sparsest interpolant, corresponding to $\valpha = [1,0,1]$.}
          \label{fig:4b_m_4_sp2}
    \end{subfigure}
    \caption{Illustration of two sparsest interpolants in the scenario of \cref{th:geom_char},\ref{th:geom_char_2b} with $m = 4$. Both have $\lceil \frac{m+1}{2} \rceil = 3$ knots on $[x_i-1, x_{i+m}+1]$, consistent with Theorem 4 of \cite{debarre2022sparsest}.}
    \label{fig:4b_m_4_sp}
\end{figure}

\subsection{Multivariate results}
\subsubsection{Proof of \cref{prop:width_invariance_multivar}} \label{appendix:proof_width_invariance_multivar}
\begin{proof}
We address the statements individually.
    \paragraph{Existence of solutions to \eqref{opt:min_p_NN_multi} and \eqref{opt:min_0_NN_multi}.} Existence of solutions to \eqref{opt:min_0_NN_multi} is a simple consequence of the fact that interpolation is possible whenever $K \geq N$, so the feasible set of \eqref{opt:min_0_NN_multi} is non-empty, and objective values of \eqref{opt:min_0_NN_multi} lie in $\{1, \dots, K(d+1) \}$ on which a minimum is necessarily achieved.

    To show existence of solutions to \eqref{opt:min_p_NN_multi}, recall that by homogeneity of the ReLU, we can rescale the input and output weights of any neural network as $\vw_k \mapsto \alpha_k \vw_k$ and $v_k \mapsto \alpha_k^{-1} v_k$ for any $\alpha_k > 0$ without changing the network's represented function or its $\ell^p$ or $\ell^0$ path norms. Therefore, the optimal value of \eqref{opt:min_p_NN_multi} is equal to that of 
    \begin{align} \label{opt:min_p_NN_multi_w_constrained}
     \argmin_{\vtheta}  \sum_{k=1}^{K} |v_k|^p  \ , \ \mbox{subject to } f_{\vtheta}(\vx_i)=y_i, \,i=1,\dots,N, \ \| \vw_k \|_p = 1, \ k = 1, \dots, K
    \end{align}
    Picking an arbitrary feasible $\vtheta'$ for \eqref{opt:min_p_NN_multi_w_constrained}, any solution to \eqref{opt:min_p_NN_multi_w_constrained} must have
    \begin{align}
        \max_{k=1, \dots, K} |v_k|^p \leq \sum_{k=1}^K |v_k|^p \leq \sum_{k=1}^K |v_k'|^p \implies \max_{k=1, \dots, K} |v_k| \leq C := \left( \sum_{k=1}^K |v_k'|^p \right)^{1/p}
    \end{align}
    so we can further recast \eqref{opt:min_p_NN_multi_w_constrained} as
    \begin{align} \label{opt:min_p_NN_multi_w_v_constrained}
     \argmin_{\vtheta}  \sum_{k=1}^{K} |v_k|^p  \ , \ \mbox{subject to } f_{\vtheta}(\vx_i)=y_i, \,i=1,\dots,N, \ \| \vw_k \|_p = 1, \ |v_k| \leq C, \ k = 1, \dots, K
    \end{align}
    Because the sets $\{ \vw: \| \vw \|_p = 1 \}$ and $\{ v: |v| \leq C \}$ are compact, so is their Cartesian product $\{ \vtheta: \| \vw_k \|_p = 1, |v_k| \leq C, k = 1, \dots, K \}$. Since each $\vtheta \mapsto f_\vtheta(\vx_i)$ is continuous, the preimage of the singleton sets $\{ y_i \}$ under those maps are closed, and so is their finite intersection. As the intersection of a closed set with a compact set, the feasible set of \eqref{opt:min_p_NN_multi_w_v_constrained} is compact. Problem \eqref{opt:min_p_NN_multi_w_v_constrained} is therefore a minimization of a continuous function over a compact set, so it attains a solution by the Weierstrass extreme value theorem.

    \paragraph{Solutions to \eqref{opt:min_p_NN_multi} and \eqref{opt:min_0_NN_multi} have no more than $N$ active neurons.} Assume by contradiction that a solution $\{v_k, \vw_k\}_{k=1}^K$ to \eqref{opt:min_p_NN_multi} for $K > N$ has $K_0 > N$ active neurons $\{v_k, \vw_k\}_{k=1}^{K_0}$. Because $K_0 > N$, the vectors $\va_k := [(\vw_k^\top \overline{\vx}_1)_+, \dots, (\vw_k^\top \overline{\vx}_N)_+]^\top$, where $\overline{\vx}_i := [\vx_i, 1]$, are linearly dependent, meaning that there are constants $c_1, \dots, c_{K_0}$ (not all zero) for which $\sum_{k=1}^{K_0} c_k \va_k = \bm{0}$. Then for any real $t$:
    $$
    \sum_{k=1}^{K_0} (v_k + t c_k) \va_k = \sum_{k=1}^{K_0} v_k \va_k + t \sum_{k=1}^{K_0} c_k \va_k = \sum_{k=1}^{K_0} v_k \va_k = \vy
    $$
    where $\vy := [y_1, \dots, y_N]^\top$. In other words, the network with parameters $\{ v_k + t c_k, \vw_k \}_{k=1}^{K_0}$ interpolates the data, for any real $t$.

    In the case of \eqref{opt:min_p_NN_multi} for any $0 < p < 1$, choose $t > 0$ small enough that $\sgn(v_k + t c_k) = \sgn(v_k)$ for each $k$, and thus $\sgn((v_k + t c_k) w_{k,i}) = \sgn(v_k w_{k,i})$ for each $k,i$. Then by strict concavity of $t \mapsto |t|^p$ on $t \in (-\infty, 0)$ and $t \in (0, \infty)$ we have
    \begin{equation}
        |v_k w_{k,i}|^p = \left| \frac{(v_k + t c_k) w_{k,i} + (v_k - t c_k) w_{k,i}}{2} \right|^p > \frac{|(v_k + t c_k) w_{k,i}|^p + |(v_k - t c_k) w_{k,i}|^p}{2}
    \end{equation}
    for each $i$ and each $k$ with $c_k \neq 0$ (if $c_k = 0$ the above holds with equality). Since $c_k \neq 0$ for at least one $k$, this implies that
    \begin{equation}
        \sum_{k=1}^{K_0} \| v_k \vw_k \|_p^p > \frac{1}{2} \left( \sum_{k=1}^{K_0}  \| (v_k + t c_k) \vw_k \|_p^p + \sum_{k=1}^{K_0}  \| (v_k - t c_k) \vw_k \|_p^p   \right)
    \end{equation}
    but then at least one of $\sum_{k=1}^{K_0}  \| (v_k + t c_k) \vw_k \|_p^p$ or $\sum_{k=1}^{K_0}  \| (v_k - t c_k) \vw_k \|_p^p$ must be strictly less than $\sum_{k=1}^{K_0}  \| v_k \vw_k \|_p^p$. This contradicts optimality of $\{ v_k, \vw_k \}_{k=1}^K$.

    In the case of \eqref{opt:min_0_NN_multi}, choose $t = -v_{k'}/c_{k'}$ for one of the $c_{k'} \neq 0$. We then have
    \begin{equation}
        \sum_{k=1}^{K_0} \| (v_k + t c_k) \vw_k \|_0 < \sum_{k=1}^{K_0} \| v_k \vw_k \|_0
    \end{equation}
    strictly, because all of the $v_k \vw_k $ are nonzero, whereas at least one of the $(v_k + t c_k) \vw_k$ on the left is zero (for $k = k'$), and $\| v_k \vw_k \|_0 = \| (v_k + t c_k) \vw_k \|_0$ whenever both $v_k \vw_k $ and $(v_k + tc_k) \vw_k $ are nonzero. This again contradicts optimality of $\{ v_k, \vw_k \}_{k=1}^K$.

    \paragraph{Sparsity bound on solutions to \eqref{opt:min_0_NN_multi}.} If the data are in general position and $N \geq d+1$, then \cite{bubeck2020network} show that there exists an interpolating single-hidden-layer ReLU network with $4 \lceil N/d \rceil$ neurons. Any such network clearly has at most $4(d+1) \lceil N/d \rceil \leq 4(N+1) + 4(N+1)/d \leq 8 (N+1) = O(N)$ nonzero input weight/bias parameters across those $4 \lceil N/d \rceil$ neurons. 

     If the data are in general position and $N \leq d+1$, the points $\vx_1, \dots, \vx_N$ must be affinely independent, meaning that
    \begin{align}
        \sum_{i=1}^N \alpha_i \vx_i = \bm{0} \ \textrm{and}  \sum_{i=1}^N \alpha_i = \bm{0} \implies \alpha_1 = \dots = \alpha_N = 0
    \end{align}
    Because this condition is equivalent to linear independence of the vectors $\overline{\vx}_i := [\vx_i^\top, 1]^\top$, the general position assumption ensures that augmented data matrix $\overline{\mX} = [\overline{\vx}_1, \dots, \overline{\vx}_N]^\top \in \R^{N \times (d+1)}$ has full rank $N$. Therefore, there exists a solution $\vw \in \R^{d+1}$ to the system 
    \begin{align} \label{eq:hyperplane_interp_coeff}
        \overline{\mX} \vw = \vy := [y_1, \dots, y_N]^\top
    \end{align}
    with $\| \vw \|_0 = N$. (To see this, choose $N$ linearly independent columns of $\overline{\mX}$, express $\vy$ as a linear combination with respect to this basis, and let $\vw$ be the vector of coefficients of this linear combination.) For any such $\vw$, \eqref{eq:hyperplane_interp_coeff} says that the affine function
    \begin{align} \label{eq:affine_interpolant}
        f(\vx) = \vw^\top \overline{\vx} = (\vw^\top \overline{\vx})_+ - (-\vw^\top \overline{\vx})_+
    \end{align}
    interpolates the data (recall $\overline{\vx} := [\vx^\top, 1]^\top$). The term on the right is a two-neuron ReLU network with $\ell^0$ path norm of $2 \| \vw \|_0 = 2N$. Also note that if the labels $y_i$ are all nonnegative (resp. nonpositive), we may discard the second (resp. first) ReLU term in \eqref{eq:affine_interpolant}, achieving interpolation with $\ell^0$ path norm of $\| \vw \|_0 = N$. Thus in all cases, the $\ell^0$ path norm of any solution to \eqref{opt:min_0_NN_multi} is $O(N)$.
 \end{proof}
\subsubsection{Proof of \cref{lemma:multivar_reparam}} \label{appendix:proof_multivar_concave_convex}
\begin{proof}
We break the proof into the following steps.
\paragraph{Network output on data as a sum over activation patterns.} Note that the data-fitting constraint in problems \eqref{opt:min_p_NN_multi} and \eqref{opt:min_0_NN_multi} can be expressed in matrix form as 
\begin{equation}
    \sum_{k=1}^K v_k \left( \overline{\mX} \vw_k \right)_+ = \vy
\end{equation}
recalling that $\overline{\mX} = [\overline{\vx}_1, \dots, \overline{\vx}_N]^\top \in \R^{N \times (d+1)}$ is the matrix of augmented data points $\overline{\vx}_i = [\vx_i^\top, 1]^\top$, $\vy = [y_1, \dots, y_N]^\top \in \R^N$ is the vector of labels, and the ReLU $(\cdot)_+$ is applied element-wise. Also recalling that $\mD_1, \dots, \mD_J$ is the set of all possible $N \times N$ binary ``activation pattern'' matrices of the form $\diag(\mathbbm{1}[\overline{\mX} \vu \geq \bm{0}])$ for $\vu \in \R^{d+1}$, it must be the case that the matrix $\diag(\mathbbm{1}[\overline{\mX} \vw_k \geq \bm{0}])$ is among the $\mD_1, \dots, \mD_J$. For any $\vw_k$ whose corresponding activation pattern is $\mD_{\textrm{pattern}(k)}$, we have
$$
(\overline{\mX} \vw_k)_+  = \mD_{\textrm{pattern}(k)} \overline{\mX}  \vw_k \implies (\overline{\mX} \vw_k)_+ v_k = \mD_{\textrm{pattern}(k)} \overline{\mX} \tilde{\vw}_k
$$
where $\tilde{\vw}_k := v_k \vw_k$. 

For any $j = 1, \dots, J$, let $K_j = \{k: \textrm{pattern}(k) = j \}$ be the set of neuron indices which share the same pattern $\mD_j$. Then the sum of those neurons can be rewritten as

$$
\sum_{k \in K_j} (\overline{\mX} \vw_k)_+ v_k = \sum_{k \in K_j} \mD_j \overline{\mX} \tilde{\vw}_k = \mD_j \overline{\mX} \sum_{k \in K_j} \tilde{\vw}_k = \mD_j \overline{\mX} (\vnu_j -\vomega_j)
$$
where $\vnu_j$ and $\vomega_j$ represent the positive and negative parts of the aggregate vector $ \sum_{k \in K_j} \tilde{\vw}_k$, respectively, i.e.
$$
\vnu_j = \sum_{k \in K_j^+} v_k \vw_k, \qquad \vomega_j = -\sum_{k \in K_j^-} v_k \vw_k
$$
where $K_j^+ := \{ k \in K_j, v_k > 0 \}$ and $K_j^- := \{ k \in K_j, v_k < 0 \}$, so that
$$
\vnu_j - \vomega_j = \sum_{k \in K_j^+} v_k \vw_k + \sum_{k \in K_j^-} v_k \vw_k = \sum_{k \in K_j}  \tilde{\vw}_k
$$ 
Therefore, the entire network output can be written as
$$
\sum_{k=1}^K (\overline{\mX} \vw_k)_+ v_k = \sum_{j=1}^J \mD_j \overline{\mX} (\vnu_j - \vomega_j)
$$
with the understanding that, if the set $K_j$ is empty for some $j$, the vector $\vnu_j - \vomega_j := \sum_{k \in K_j} v_k \vw_k$ is the zero vector. 

\paragraph{Objective achieves its lower bound with two neurons per activation pattern.} Following the notation above, the objectives of \eqref{opt:min_p_NN_multi} and \eqref{opt:min_0_NN_multi} can be rewritten as:
$$
\sum_{k=1}^K \| v_k \vw_k \|_p^p = \sum_{k=1}^K \| \tilde{\vw}_{k} \|_p^p = \sum_{j=1}^J \left( \sum_{k \in K_j^+} \| \tilde{\vw}_{k} \|_p^p  + \sum_{k \in K_j^-} \| \tilde{\vw}_{k} \|_p^p \right)
$$
$$
\sum_{k=1}^K \| v_k \vw_k \|_0 = \sum_{k=1}^K \| \tilde{\vw}_{k} \|_0 = \sum_{j=1}^J \left( \sum_{k \in K_j^+} \| \tilde{\vw}_{k} \|_0  + \sum_{k \in K_j^-} \| \tilde{\vw}_{k} \|_0 \right)
$$
Observe that:
\begin{align} \label{eq:reparam_ineq}
\sum_{k \in K_j^+} \| \tilde{\vw}_{k} \|_p^p \geq \bigg\| \sum_{k \in K_j^+}  \tilde{\vw}_{k} \bigg\|_p^p = \| \vnu_{j} \|_p^p, \qquad \sum_{k \in K_j^-} \| \tilde{\vw}_{k} \|_p^p \geq \bigg\| \sum_{k \in K_j^-}  \tilde{\vw}_{k} \bigg\|_p^p = \| \vomega_{j} \|_p^p \\
\sum_{k \in K_j^+} \| \tilde{\vw}_{k} \|_0 \geq \bigg\| \sum_{k \in K_j^+}  \tilde{\vw}_{k} \bigg\|_0 = \| \vnu_{j} \|_0, \qquad \sum_{k \in K_j^-} \| \tilde{\vw}_{k} \|_0 \geq \bigg\| \sum_{k \in K_j^-}  \tilde{\vw}_{k} \bigg\|_0 = \| \vomega_{j} \|_0 
\end{align}
where in all cases, equality holds if and only if the supports of each vector in the sum (i.e., the set of indices at which each vector is nonzero) are disjoint. This follows from applying the inequality $(a+b)^p \leq a^p + b^p$---which holds for any $a, b \geq 0$ if $0 < p < 1$ and for any $a, b \in \R$ if $p = 0$ (defining $0^0 = 0$), and in both cases is strict unless $a = 0$ or $b = 0$---coordinate wise.

At a global minimizer of either \eqref{opt:min_p_NN_multi} or \eqref{opt:min_0_NN_multi}, this lower bound will be achieved. To see this, note that it is always possible to replace a single one of the vectors $\tilde{\vw}_k$ in each group $K_j^+$ (resp. $K_j^-$) with the vector $\vnu_j$ (resp. $-\vomega_j$), and set the remaining vectors in each group to zero. By definition $\vnu_j = \sum_{k \in K_j^+} \tilde{\vw}_k$ and $\vomega_j = -\sum_{k \in K_j^-} \tilde{\vw}_k$, so clearly the network output  $\sum_{j=1}^J \mD_j \overline{\mX} \left( \sum_{k \in K_j^+} \tilde{\vw}_k + \sum_{k \in K_j^-} \tilde{\vw}_k \right) = \sum_{j=1}^J \mD_j \overline{\mX}(\vnu_j - \vomega_j)$ on the data $\overline{\mX}$ remains unchanged by this modification. And with this modification, all inequalities in \eqref{eq:reparam_ineq} will clearly hold with equality. This shows that, for any solution to \eqref{opt:min_p_NN_multi} or \eqref{opt:min_0_NN_multi}, all input weight vectors $\vw_k$ in any individual activation pattern group $K_j^+$ or $K_j^-$ will have disjoint supports (which is the only circumstance under which the lower bounds in \eqref{eq:reparam_ineq} are achieved). In any such case, the neurons in each individual positive/negative activation pattern groups can be merged into a single nonzero neuron containing their sum, without affecting either the network's ability to interpolate the data or the value of the sums $\sum_{k \in K_j^+} \| \tilde{\vw}_k \|_0$ or $\sum_{k \in K_j^+} \| \tilde{\vw}_k \|_q^q$ for any $0 < q < 1$. Note that, although this merging may alter the function represented by the neural network, it will preserve the values of $\sum_{k=1}^K \| v_k \vw_k \|_0$ and $\sum_{k=1}^K \| v_k \vw_k \|_q^q$ for any $0 < q < 1$, which is the only thing required for the statement of the lemma and its subsequent use in proving \cref{th:multivar}. Therefore, we may enforce that there is at most one positively-weighted neuron $v_j^+ \vw_j^+ = \vnu_j$ and at most one negatively-weighted neuron $v_j^- \vw_j^- = \vomega_j$ corresponding to any possible activation pattern $j$ on the data. 

\paragraph{Constrain the variables $\vnu_j$ and $\vomega_j$ to correspond to ReLU activation patterns.} In order for a particular binary pattern $\mD_j$ to actually correspond to an input weight/bias $\vw_k$, it must be the case that $(\overline{\mX} \vw_k)_i \geq 0$ wherever $(\mD_j)_{ii} = 1$ and $(\overline{\mX} \vw_k)_i \leq 0$ wherever $(\mD_j)_{ii} = 0$. This is exactly the requirement that every entry of the vector $(2 \mD_j - \mI) \overline{\mX} \vw_k \in \mathbb{R}^N$ is nonnegative, since
$$
((2 \mD_j - \mI) \overline{\mX} \vw_k)_{i} = \begin{cases} (\overline{\mX} \vw_k)_i, &\textrm{if $(\mD_j)_{ii} = 1$} \\ -(\overline{\mX}\vw_k)_i, &\textrm{if $(\mD_j)_{ii} = 0$} \end{cases}
$$
When we re-parameterize as $\tilde{\vw}_k = v_k \vw_k$ and split the neuron indices $K_j$ correponding to activation pattern $\mD_j$ into the groups $K_j^+$ and $K_j^-$, the requirement that $(2 \mD_j - \mI) \overline{\mX} \vw_k \geq \bm{0}$ is equivalent to requiring that $(2 \mD_j - \mI) \overline{\mX} \tilde{\vw}_k \geq \bm{0}$ if $k \in K_j^+$ and $(2 \mD_j - \mI) \overline{\mX}\tilde{\vw}_k \leq \bm{0}$ if $k \in K_j^-$. Because we enforce that there is at most one nonzero neuron $\tilde{\vw}_k = \vnu_j$ (resp. $\tilde{\vw}_k= -\vomega_j$) in each activation pattern group $K_j^+$ (resp. $K_j^-$), this condition is also clearly equivalent to $(2 \mD_j - \mI) \overline{\mX} \vnu_j \geq \bm{0}$ and $(2 \mD_j - \mI) \overline{\mX} \vomega_j \geq \bm{0}$. 

\paragraph{Reconstruction of solutions to \eqref{opt:min_p_NN_multi} and \eqref{opt:min_0_NN_multi} from solutions to \eqref{opt:min_p_NN_reparam}.} By incorporating the above constraints, we have fully reparameterized the neural network problems \eqref{opt:min_p_NN_multi} and \eqref{opt:min_0_NN_multi} as claimed in the lemma.
Because we enforce that there is at most one nonzero neuron $\tilde{\vw}_k = \vnu_j$ (resp. $\tilde{\vw}_k= -\vomega_j$) in each activation pattern group $K_j^+$ (resp. $K_j^-$), solutions to problem \eqref{opt:min_p_NN_multi} can be recovered from solutions to \eqref{opt:min_p_NN_reparam} as
\begin{align}
    \{ \vw_k \}_{k=1}^K &= \left\{ \frac{\vnu_j}{\alpha_j}, \vnu_j \neq 0 \right\} \cup \left\{ \frac{\vomega_j}{\beta_j}, \vomega_j \neq 0 \right\} \\
    \{ v_k \}_{k=1}^K &= \left\{ \alpha_j, \vnu_j \neq 0 \right\} \cup \left\{ -\beta_j, \vomega_j \neq 0 \right\}
\end{align}
for any constants $\alpha_1, \beta_1 \dots, \alpha_J, \beta_J > 0$, the choice of which affects neither the network's represented function, nor its value of $\sum_{k=1}^K \| v_k \vw_k \|_0$ or $\sum_{k=1}^K  \| v_k \vw_k \|_q^q$ for any $0  < q < 1$. Note that, if there were a solution to \eqref{opt:min_p_NN_reparam} with $|\{ j: \vnu_j \neq 0 \}| + |\{ j: \vomega_j \neq 0 \}| > N$, this would yield a solution to \eqref{opt:min_p_NN_multi} or \eqref{opt:min_0_NN_multi} with $K = |\{ j: \vnu_j \neq 0 \}| + |\{ j: \vomega_j \neq 0 \}| > N$ active neurons, contradicting \cref{prop:width_invariance_multivar}. 
\end{proof}

\subsubsection{Proof of \cref{th:multivar}} \label{appendix:proof_multivar_th}
\begin{proof}

Problem \eqref{opt:min_p_NN_reparam} can be expressed more compactly in matrix form as
\begin{align} \label{opt:min_p_NN_reparam_mat}
    \argmin_{\vz \in \R^{2J
    (d+1)}} \| \vz \|_p^p\ , \ \mbox{subject to } \mA \vz = \vy, \ \mG \vz \geq \bm{0}
\end{align}
in the case $0 < p < 1$, or as
\begin{align} \label{opt:min_0_NN_reparam_mat}
    \argmin_{\vz \in \R^{2J
    (d+1)}} \| \vz \|_0\ , \ \mbox{subject to } \mA \vz = \vy, \ \mG \vz \geq \bm{0}
\end{align}
in the case $p = 0$, where
\begin{align}
    \vz &:= [\vnu_1^\top, \vomega_1^\top, \dots, \vnu_J^\top, \vomega_J^\top]^\top \in \R^{2J(d+1)} \\
    \mA &:= [\mD_1 \overline{\mX}, -\mD_1 \overline{\mX}, \dots,  \mD_J \overline{\mX}, -\mD_J \overline{\mX}] \in \R^{N \times 2J(d+1)} \\
    \mG &:= \diag \left( (2 \mD_1 - \mI) \overline{\mX}, (2 \mD_1 - \mI) \overline{\mX}, \dots, (2 \mD_J - \mI) \overline{\mX}, (2 \mD_J - \mI) \overline{\mX} \right) \in \R^{2JN \times 2J(d+1)}
\end{align}
We proceed in the following steps, which employ arguments similar to those of \cite{yang2022sparse} (Theorem 2.1) and \cite{peng2015np} (Theorem 1), with minor modifications to account for the inequality constraint in \eqref{opt:min_p_NN_reparam_mat}. We note that the justification of $p$-independent $\ell^\infty$ boundedness of solutions given in \cite{peng2015np} appears to be incorrect, with \cite{yang2022sparse} presenting the correct justification that we follow here. 

\paragraph{Solutions to \eqref{opt:min_p_NN_reparam_mat} for any $0 < p < 1$ are contained in an $\ell^\infty$ ball of $p$-independent radius $C$.}

We let $\supp(\vu)$ denote the set of nonzero indices of a vector $\vu$. $\mM_S$ denotes the submatrix formed by restricting its columns to an index set $S$, and $\mM_{I,S}$ denotes restriction of the rows to an index set $I$ and columns to an index set $S$. 

Let $\vz^*$ be a solution to \eqref{opt:min_p_NN_reparam_mat} for arbitrary $0 < p < 1$. Let $S = \supp(\vz^*)$. Let $I = \{i: (\mG \vz^*)_i = 0 \}$. We begin by showing that the matrix
\begin{align}
    \widetilde{\mA}_{I,S} := \begin{bmatrix}
    \mA_S \\ \mG_{I,S} 
    \end{bmatrix} \in \R^{(N+|I|) \times |S|}
\end{align}
has full column rank $|S|$. Assume by contradiction that $\rank(\widetilde{\mA}_{I,S}) < |S|$, and therefore $\widetilde{\mA}_{I,S} \vc_S = \bm{0}$ for some nonzero $\vc_S \in \R^{|S|}$. Extending $\vc_S$ to a vector $\vc \in \R^{2J(d+1)}$ by zero-padding, we thus have $\mA \vc = \widetilde{\mA}_{I,S} \vc_S = \bm{0}$ and therefore $\mA (\vz^* \pm t \vc) = \mA \vz^* = \vy$ for any $t \in \R$. Similarly, $\mG_I \vc = \mG_{I,S} \vc_S = \bm{0}$, and therefore $\mG_I (\vz^* \pm t \vc) = \mG_I \vz^* = \bm{0}$ for any $t \in \R$. If $t > 0$ is chosen small enough that $\sgn((\mG(\vz^* \pm t \vc))_i ) = \sgn((\mG \vz^*)_i)$ for $i \notin I$, we will thus have $\mG(\vz^* \pm t \vc) \geq \bm{0}$, so that $\vz^* \pm t \vc$ are both feasible for \eqref{opt:min_p_NN_reparam_mat}. 
    
Now choose $t$ small enough that, in addition to the previous sign requirement involving $\mG$, we also have $\sgn(z^*_i \pm t  c_i) = \sgn(z_i^*)$ for each $i \in S$. By strict concavity of $t \mapsto |t|^p$ on $t \in (-\infty, 0)$ and $t \in (0, \infty)$ we have
\begin{equation}
    |z_i^*|^p > \frac{|z_i^* + t c_i|^p + |z_i^* - t c_i|^p}{2}
\end{equation}
for each $i \in S$ with $c_i \neq 0$ (if $c_i = 0$ the above holds with equality). Since at least one of the $c_i \neq 0$, this implies that
\begin{equation}
    \| \vz^* \|_p^p > \frac{\| \vz^* + t \vc \|_p^p + \| \vz^* - t \vc \|_p^p}{2}
\end{equation}
strictly. But then at least one of $\| \vz^* + \vc \|_p^p < \| \vz^* \|_p^p$ or $\| \vz^* - \vc \|_p^p < \| \vz^* \|_p^p$ holds strictly. Because $\vz^* \pm t \vc$ are both feasible for \eqref{opt:min_p_NN_reparam_mat}, this contradicts optimality of $\vz^*$.

Having shown that $\widetilde{\mA}_{I,S}$ is full column rank, the rank-nullity theorem implies that $\ker(\widetilde{\mA}_{I,S}) = \{ \bm{0} \}$. Therefore $\widetilde{\mA}_{I,S}$ is injective, and thus 
$\vz_S^*$ is the \textit{unique} solution to
\begin{equation}
    \widetilde{\mA}_{I,S} \vz = \begin{bmatrix}
        \vy \\ \bm{0} 
    \end{bmatrix} =: \widetilde{\vy}
\end{equation}
This implies that $\vz^*$ lies in the finite set
\begin{equation}
    Z = \left\{ \vz \ \big\rvert \ I \subset \{1, \dots, 2JN \}, S \subset \{1, \dots,  2J(d+1) \}, \rank(\widetilde{\mA}_{I,S}) = |S|, \widetilde{\mA}_{I,S} \vz_S = \widetilde{\vy}, \vz_{S^c} = \bm{0} \right\}
\end{equation}
which clearly depends only on $\mA, \mG, \vy$ and not on $p$. Therefore $\| \vz^* \|_\infty \leq C := \max_{\vz \in Z} \| \vz \|_\infty < \infty$, where $C$ is independent of $p$.

\paragraph{Projection of the feasible set into the positive orthant.} 

Define $R_0 := \| \vz_0 \|_\infty$ for an arbitrary solution $\vz_0$ to \eqref{opt:min_0_NN_reparam_mat} with $p = 0$. Let $R := \max\{C, R_0 \}$ for the $C$ defined above. The set
\begin{equation}
    \Omega := \{ \vz \in \R^{2J(d+1)} \ \rvert \ \mA \vz = \vy, \mG \vz \geq \bm{0}, \| z \|_\infty \leq R \}
\end{equation}
is a polytope. As shown above, any solution to \eqref{opt:min_p_NN_reparam_mat} for any $0 < p < 1$ is attained on $\Omega$, and by definition, at least one solution to \eqref{opt:min_0_NN_reparam_mat} for $p = 0$ is attained on $\Omega$.

The map $\vz \mapsto \| \vz \|_p^p$ is not concave on all of $\R^{2J(d+1)}$, but it is strictly concave on each individual orthant, so to apply the Bauer maximum principle as in the proof of \cref{th:main}, we will relate \eqref{opt:min_p_NN_reparam_mat} to an optimization over a projection of the polytope $\Omega$ to the nonnegative orthant $\R_+^{2J(d+1)}$. To do so, note that the set
\begin{align}
    \Psi := \left\{ (\vz, \vz') \in \R^{2J(d+1)} \times \R^{2J(d+1)}_{+} \  \big \rvert \  \vz \in \Omega, \| \vz' \|_{\infty} \leq R, |\vz| \leq \vz' \right\},
\end{align}
is a polytope in the product space $\R^{2J(d+1)} \times \R^{2J(d+1)}_{+}$. (Here the \textit{module vector} $|\vz|$ is the vector of absolute values of entries of $\vz$.) Because the coordinate projection of a polytope is a polytope (\cite{goemans20093}), the set
\begin{align}
    \Omega' := \left\{ \vz' \in \R^{2J(d+1)}_{+} \ \big\rvert \ \| \vz' \|_{\infty} \leq R, \ \exists \  \vz \in \Omega \ \textrm{s.t.} \  |\vz| \leq \vz' \right\},
\end{align}
which is given by the coordinate projection of $\Psi$ onto the $\vz'$ coordinate, is a polytope in $\R^{2J(d+1)}_{+}$. Furthermore, $\min_{\vz \in \Omega} \| \vz \|_p^p = \min_{\vz' \in \Omega'} \| \vz' \|_p^p$. To see this, note that for any $\vz \in \Omega$, its module vector $|\vz| \in \Omega'$, so $\min_{\vz \in \Omega} \| \vz \|_p^p \geq \min_{\vz' \in \Omega'} \| \vz' \|_p^p$. If that inequality were strict, then there would be some $\vz \in \Omega$ with $|\vz| < \vz_*' \ni  \argmin_{\vz' \in \Omega'} \| \vz' \|_p^p$, but this would imply that $\min_{\vz \in \Omega} \| \vz \|_p^p < \min_{\vz' \in \Omega} \| \vz' \|_p^p$.

As a polytope, $\Omega'$ is compact, convex, and has finitely many extreme points, the set of which we denote $\textrm{Ext}(\Omega')$. Let
\begin{align}
    r := \min \{z_i' > 0 \ \rvert \  \vz' = [z_1', \dots, z_{2J(d+1)}']^\top \in \textrm{Ext}(\Omega') \}
\end{align}
be the smallest nonzero coordinate in any of the extreme points of $\Omega'$.

Next, note that for $0 < p < 1$, the objective $\vz \mapsto \| \vz \|_p^p$ is continuous and strictly concave on the nonnegative orthant $\R_{+}^{2J(d+1)}$, and thus on $\Omega'$. Therefore, by the Bauer maximum principle (\cite{aliprantis2006infinite}, Theorem 4.104), a solution to $\argmin_{\vz' \in \Omega'} \| \vz' \|_p^p$ exists at an extreme point of $\Omega'$. In particular, by strict concavity of $\vz \mapsto \| \vz \|_p^p$, \textit{any} solution to $\argmin_{\vz' \in \Omega'} \| \vz' \|_p^p$ must be at an extreme point of $\Omega'$. (Otherwise, if such a solution had $\vz' = t \va' + (1-t) \vb'$ for distinct $\va', \vb' \in \Omega'$ and $t \in (0,1)$, then $\| \vz' \|_p^p > t \| \va' \|_p^p + (1-t) \| \vb' \|_p^p \geq t \| \vz' \|_p^p + (1-t) \| \vz' \|_p^p = \| \vz' \|_p^p$ which is impossible.) 

\paragraph{Sparse recovery result.} Putting everything together, fix an arbitrary $0 < p < 1$ and let $\vz_p$ be a solution to \eqref{opt:min_p_NN_reparam_mat} for that $p$. The previous paragraph shows that $|\vz_p|$ is a solution to $\argmin_{\vz' \in \Omega'} \| \vz' \|_p^p$, and therefore $|\vz_p| \in \textrm{Ext}(\Omega')$. Then: 
\begin{align}
    \| \vz_p \|_0 &= \| r^{-1} |\vz_p| \|_0 = \lim_{q \downarrow 0} \sum_{i=1}^J \left( \frac{|z_{p,i}|}{r} \right)^q \\
    &\leq \sum_{i=1}^J \left( \frac{|z_{p,i}|}{r} \right)^p = r^{-p} \min_{\vz' \in \Omega'} \| \vz' \|_p^p = r^{-p} \min_{\vz \in \Omega} \| \vz \|_p^p = \left(\frac{R}{r} \right)^p \min_{\vz \in \Omega} \| R^{-1} \vz \|_p^p \\
    &\leq \left(\frac{R}{r} \right)^p \min_{\vz \in \Omega} \| R^{-1} \vz \|_0 = \left(\frac{R}{r} \right)^p \min_{\vz \in \Omega} \|  \vz \|_0
\end{align}
where the inequalities come from the fact that $p \mapsto x^p$ is decreasing for $x \in (0,1)$ and increasing for $x > 1$. Because $\| \vz \|_0$ is a positive integer for any $\vz$, and \eqref{opt:min_0_NN_reparam_mat} attains at least one solution on $\Omega$, the above shows that $\vz_p$ solves \eqref{opt:min_0_NN_reparam_mat} for any $p$ satisfying
\begin{align} \label{eq:R_r_ineq}
    \left(\frac{R}{r} \right)^p \min_{\vz \in \Omega} \|  \vz \|_0 &< \min_{\vz \in \Omega} \|  \vz \|_0 + 1 \\
    \iff p &<  \frac{\log (\min_{\vz \in \Omega} \| \vz \|_0 + 1) - \log(\min_{\vz \in\Omega} \| \vz \|_0)}{\log R - \log r} \label{eq:p_ineq}
\end{align}
if $r < R$, or for any $0 < p < 1$ if $r = R$. (Note that by definition of $\Omega'$, $r \leq R$ always.)

Let $\vtheta_0$ be a solution to \eqref{opt:min_0_NN_multi} and $\vtheta_p$ be a solution to \eqref{opt:min_p_NN_multi} for any $p$ which obeys the inequality in \eqref{eq:p_ineq}, and let $\vtheta_0'$ and $\vtheta_p'$ be the corresponding solutions---constructed from solutions $\vz_p$ and $\vz_0$ to \eqref{opt:min_p_NN_reparam_mat} and \eqref{opt:min_0_NN_reparam_mat}, respectively---as stated in \cref{lemma:multivar_reparam}. We have shown that
\begin{align}
    \| \vtheta_p \|_0 = \| \vtheta_p' \|_0 = \| \vz_p \|_0 = \| \vz_0 \|_0 = \| \vtheta_0' \|_0 = \| \vtheta_0 \|_0
\end{align}
which proves the result.
\end{proof}

\subsection{Experiments} 
All code for the experiments can be found at \url{https://github.com/julianakhleh/sparse_nns_lp}.

\label{appendix:experiments}

\subsubsection{Reweighted $\ell^1$ algorithm} \label{appendix:rw_l1}
To implement our proposed $\ell^p$ path norm regularizer, we use the iteratively reweighted $\ell^1$ algorithm of \cite{candes2008enhancing,figueiredo2007majorization}, which we summarize informally here. The principal motivation is the inequality
\begin{equation}
    |x|^p \leq |x|p |y|^{p-1} + (1-p) |y|^p
\end{equation}
which holds for all $x \in \R$, all $y \in \R \setminus \{0\}$, and all $0 < p \leq 1$, with equality when $p = 1$ and/or when $x = y$. Applied to $x = |v_k w_{k,i}|$, we have
\begin{align} \label{eq:lp_path_norm_upper_bound}
    \sum_{k=1}^K \| v_k \vw_k \|_p^p = \sum_{k=1}^K \sum_{i=1}^d |v_k w_{k,i}|^p \leq \sum_{k=1}^K \sum_{i=1}^d \left( |v_k w_{k,i}| p |y_{k,i}|^{p-1} + (1-p) |y_{k,i}|^p \right)
\end{align}
for any choice of constant $y_{k,i} \in \R \setminus \{0 \}$. The iteratively reweighted $\ell^1$ algorithm attempts to minimize the $\ell^p$ path norm objective on the left hand side of \eqref{eq:lp_path_norm_upper_bound} by minimizing its upper bound on the right. Because the choice of $v_k, w_{k,i}$ which minimizes this upper bound is invariant to the additive constant $(1-p) |y_{k,i}|^p$ term, we can equivalently choose $v_k w_{k,i}$ at each iteration $t$ to minimize only the first term $C_{k,i} |v_k w_{k,i}|$ where $C_{k,i} := p |y_{k,i}|^{p-1}$. Because the upper bound is tighter when $y_{k,i}$ is closer to the optimal values of $v_k w_{k,i}$ for this iteration $t$, we choose the constants $y_{k,i}$ as $v_k^{(t-1)} w_{k,i}^{(t-1)}$, where $v_k^{(t-1)}, w_{k,i}^{(t-1)}$ are the previous iterates. The regularization penalty thus becomes
\begin{align}
    \sum_{k=1}^K \sum_{i=1}^d C_{k,i} |v_k w_{k,i}|
\end{align}
which is simply a separable weighted $\ell^1$ penalty with weights $C_{k,i}$. This objective lends itself to a standard $\ell^1$ proximal gradient update algorithm, with each soft-thresholding step scaled appropriately according to the individual threshold $C_{k,i}$. The full algorithm is summarized in \cref{alg:rw_l1}. 
\begin{algorithm}
    \caption{Iteratively reweighted $\ell^1$ algorithm for $\ell^p$ path norm minimization} \label{alg:rw_l1}
    \textbf{Input}: loss function $\cL$, sparsity parameter $0 < p \leq 1$, learning rate $\gamma > 0$, regularization parameter $\lambda > 0$, total number of iterations $T$. \\
    \begin{algorithmic}
        \For{$t=1, \dots, T$}
        
        \State Compute thresholds: $C_{k,i} \gets \lambda p |v_k^{(t-1)} w_{k,i}^{(t-1)}|^{p-1}$
        \State Gradient update for input weights: $\tilde{w}_{k,i} \gets w_{k,i}^{(t-1)} - \lambda \frac{\partial \cL(\vtheta)}{\partial w_{k,i}} \big \rvert_{w_{k,i}^{(t-1)}}$
        \State Gradient update for output weights: $\tilde{v}_{k} \gets v_{k}^{(t-1)} - \lambda \frac{\partial \cL(\vtheta)}{\partial v_{k}} \big \rvert_{v_{k}^{(t-1)}} $
        \State Reweighted $\ell^1$ prox update:
        $u_{k,i} \gets \textrm{Prox}_{C_{k,i} | \cdot |} = \sgn(\tilde{v}_k \tilde{w}_{k,i}) (|\tilde{v}_k \tilde{w}_{k,i}| - C_{k,i})_+$
        \State Update input weights: $w_{k,i}^{(t)} \gets \sgn(\tilde{w}_{k,i}^{(t)}) \frac{u_{k,i}}{\sqrt{\| \vu_k \|_2 }}$ 
        \State Update output weights: $v_{k}^{(t)} \gets \sgn(\tilde{v}_{k}^{(t)}) \sqrt{\| \vu_k \|_2}$  \Comment{satisfies $u_k = v_k^{(t)} w_{k,i}^{(t)}$ }
        \EndFor
    \end{algorithmic}
\end{algorithm}

We note that there are infinitely many ways to choose the updated input/output weights $w_{k,i}^{(t)}$ and $v_k^{(t)}$  to satisfy $u_k = v_k^{(t)} w_{k,i}^{(t)}$; due to homogeneity of the ReLU (meaning that $(\alpha x)_+ = \alpha(x)_+$ for any $\alpha \geq 0$), any choice $w_{k,i}^{(t)} \leftarrow \alpha u_{k,i}$ and $v_k^{(t)} \leftarrow 1/\alpha$ for any $\alpha > 0$ would satisfy $u_k = v_k^{(t)} w_{k,i}^{(t)}$ and produce the same neural network function. The particular choice described in \cref{alg:rw_l1} additionally satisfies the \textit{balancing} constraint $\| \vw_k^{(t)} \|_2 = |v_k^{(t)}|$, and we find that this selection tends to perform best in practice. We also note that, for univariate input dimension $d = 1$ and sparsity parameter $p = 1$, \cref{alg:rw_l1} is equivalent to the PathProx algorithm of \cite{yang2022pathprox}.

\subsubsection{Setup and results} \label{appendix:experimental_setup_results}

We test our algorithm on two simple synthetic datasets. The first is a univariate ``peak/plateau'' dataset, which consists of the data/label pairs: 
\begin{align} \label{eq:univariate_peak_plateau}
    (-2, 0), (-1, 0), (0, 1), (1, 1), (2, 0), (3, 0)
\end{align}
For this dataset, the theory of \cite{debarre2022sparsest} shows that the sparsest interpolant $f$ is unique, and is represented using 3 ReLU neurons as
\begin{align} \label{eq:f_sparsest_peak_plateau}
    f(x) = (x+1)_+ - 2 (x-1/2)_+ + (x-2)_+
\end{align}
Our theory in \cref{sec:univariate} also shows that this $f$ is a global $\ell^p$-path norm minimizer for any $0 < p \leq 1$, and is the unique such minimizer for any $0 < p < 1$.

\cref{fig:sparsity_over_time_univariate} shows the sparsity over time of our reweighted $\ell^1$ algorithm for three different values of $p \in \{0.4, 0.7, 1 \}$, implemented in PyTorch using the Adam optimizer, along with that of Adam-only (no regularization) and AdamW weight decay. All networks share the same random initialization and are trained with MSE loss for 100,000 epochs with learning rate $\gamma = 0.01$, regularization parameter $\lambda = 0.003$ (except for unregularized Adam-only, which uses $\lambda = 0$), and hidden layer width $K = 80$. All three values of $p$ in our reweighted $\ell^1$ algorithm produce vastly sparser solutions earlier on in training than both Adam-only and AdamW; however, only $p = 0.4$ eventually recovers the true sparsest solution $f$ with 3 ReLU neurons (see \cref{fig:functions_p1_noreg_WD}).

\begin{figure}
    \centering
    \includegraphics[width=\linewidth]{figs/sparsity_over_time_univariate.pdf}
    \caption{Sparsity over time of five networks trained to interpolation on the univariate peak-plateau dataset \eqref{eq:univariate_peak_plateau}. The reweighted $\ell^1$ algorithm for $\ell^p$ path norm minimization (\cref{alg:rw_l1}) recovers much sparser solutions earlier in training than unregularized Adam or AdamW weight decay regularization, with the smallest value $p = 0.4$ eventually recovering the sparsest possible interpolant \eqref{eq:f_sparsest_peak_plateau}.}
    \label{fig:sparsity_over_time_univariate}
\end{figure}

\begin{figure}
    \centering
    \includegraphics[width=\linewidth]{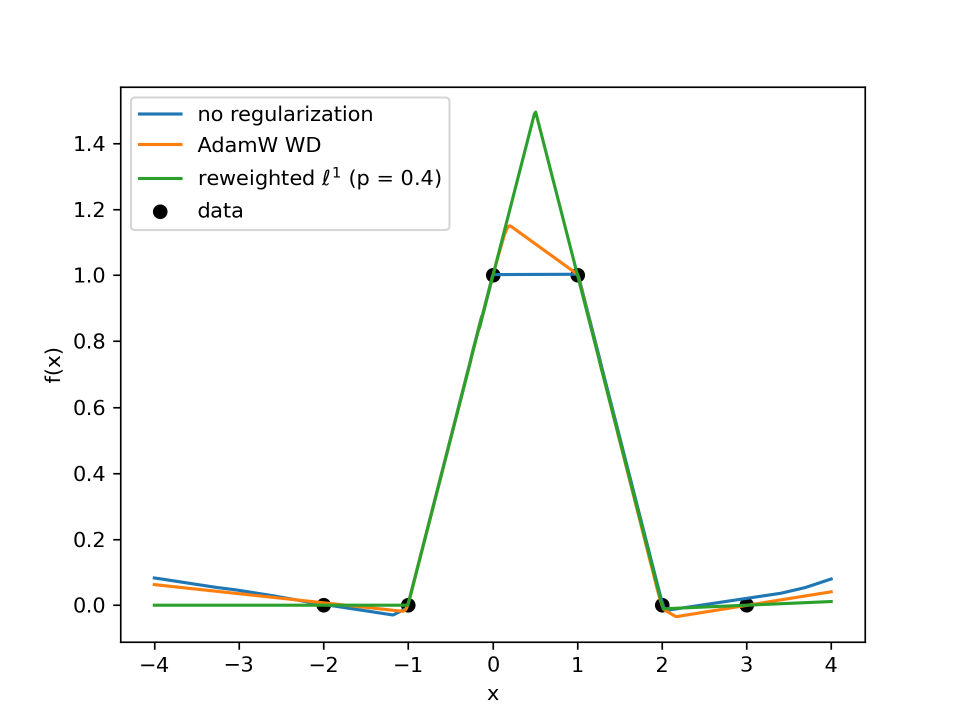}
    \caption{Three interpolants of the peak-plateau dataset, learned after 100,000 epochs using unregularized Adam, AdamW weight decay, and reweighted $\ell^1$ (\cref{alg:rw_l1}) with $p = 0.4$. Only the latter recovers the true sparsest interpolant \eqref{eq:f_sparsest_peak_plateau}.}
    \label{fig:functions_p1_noreg_WD}
\end{figure}

\cref{fig:functions_over_time} shows the functions learned by all five networks throughout the course of training. We see that reweighted $\ell^1$ with $p \in \{0.4, 0.7, 1 \}$ all converge quickly to near-sparsest solutions, and then the small additional kinks inside $[0,1]$ disappear gradually throughout training, with only $p = 0.4$ eliminating them completely (the final solutions for $p \in \{0.7, 1 \}$ have a single extraneous active neuron of small magnitude which activates just before $x = 1/2$).

\begin{figure}
    \centering
    \begin{subfigure}{0.5\textwidth}
          \centering
          \includegraphics[width=\linewidth]{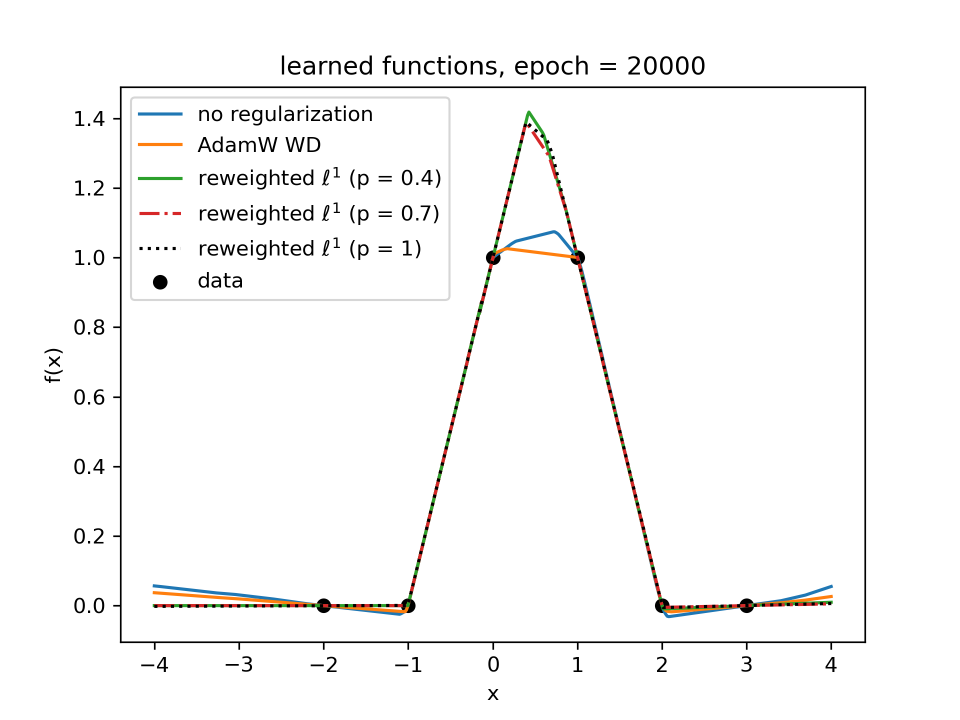}
          \label{fig:functions_over_time_20000}
    \end{subfigure}%
    \begin{subfigure}{0.5\textwidth}
          \centering
          \includegraphics[width=\linewidth]{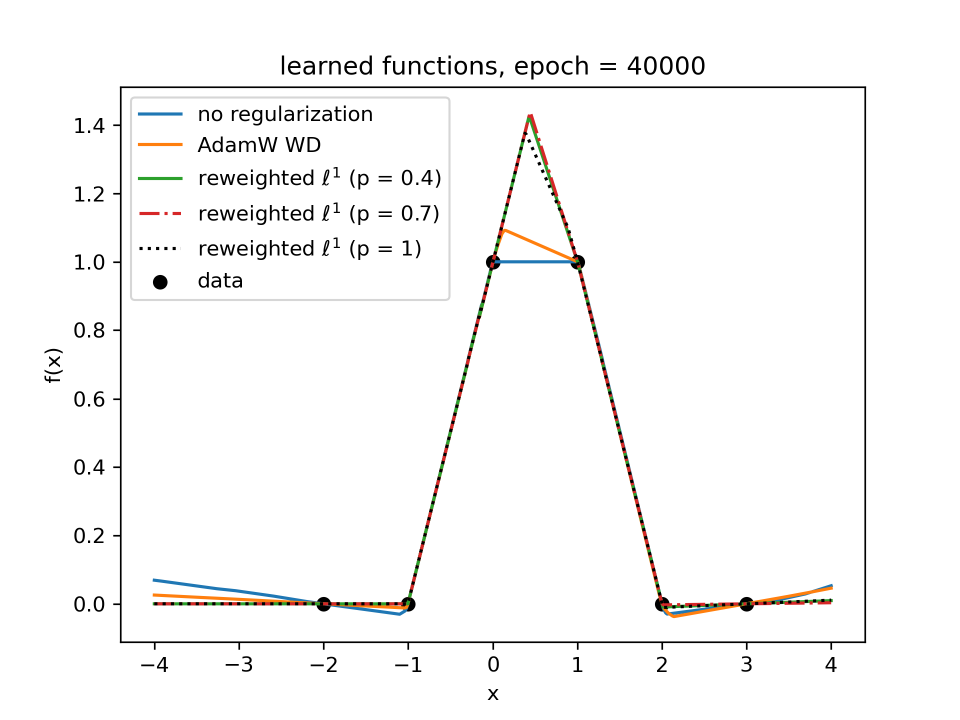}
          \label{fig:function_over_time_40000}
    \end{subfigure}
    \begin{subfigure}{0.5\textwidth}
          \centering
          \includegraphics[width=\linewidth]{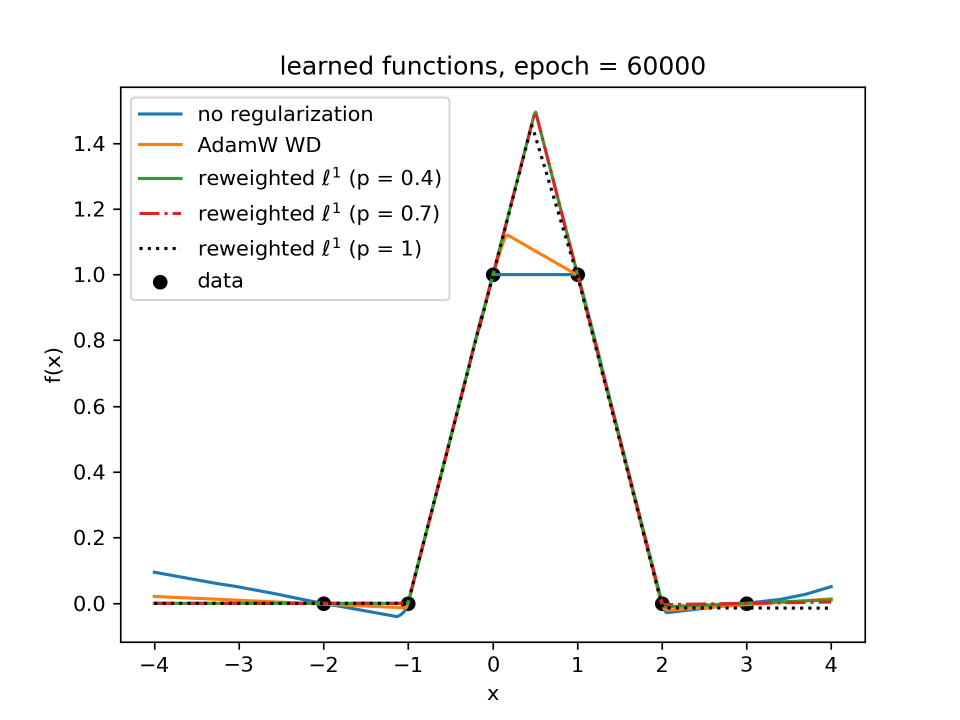}
          \label{fig:functions_over_time_60000}
    \end{subfigure}%
    \begin{subfigure}{0.5\textwidth}
          \centering
          \includegraphics[width=\linewidth]{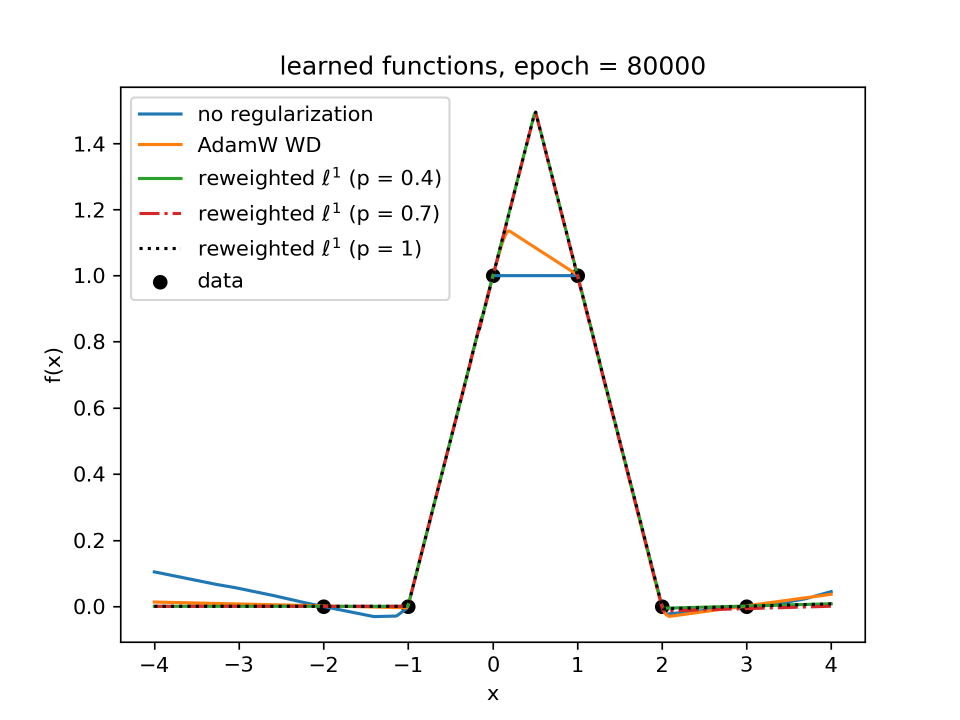}
          \label{fig:functions_over_time_80000}
    \end{subfigure}
    \begin{subfigure}{0.5\textwidth}
          \centering
          \includegraphics[width=\linewidth]{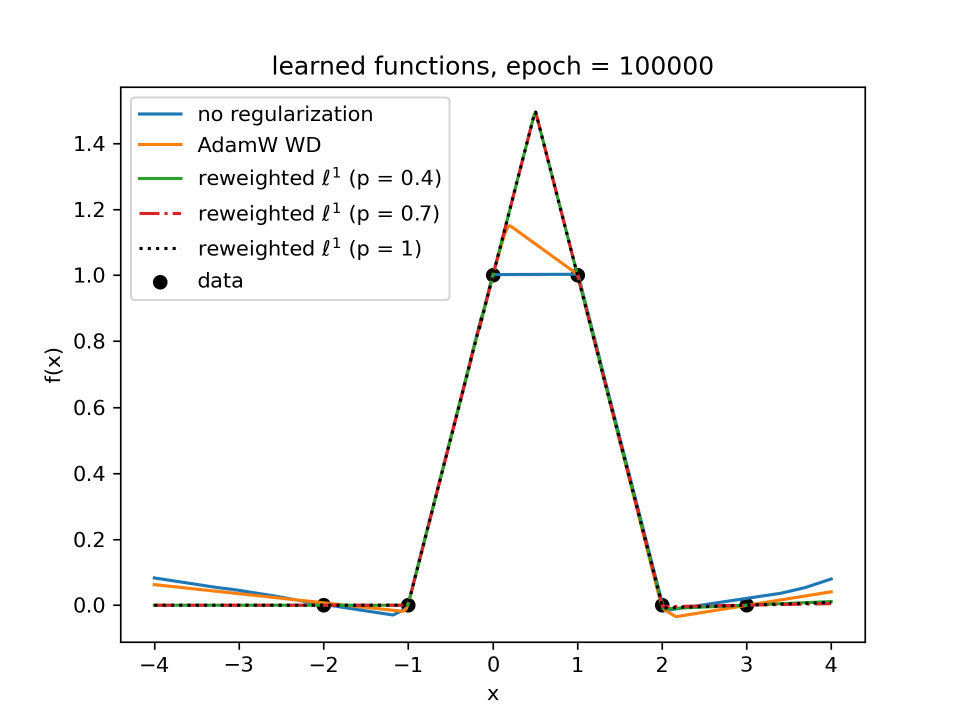}
          \label{fig:functions_over_time_100000}
    \end{subfigure}
    \caption{Learned network functions of five different algorithms throughout the course of training. Reweighted $\ell^1$ with $p \in \{0.4, 0.7, 1 \}$ converge to near-sparsest solutions early on in training, with only $p = 0.4$ eventually eliminating all extraneous neurons to recover the true sparsest solution \eqref{eq:f_sparsest_peak_plateau}.}
    \label{fig:functions_over_time}
\end{figure}

For our second experiment, we consider $N = 10$ data points in $d = 50$ dimensions. The coordinates of each data $\vx_i$ point are drawn i.i.d. from $\textrm{Unif}[-1,1]$, as are the labels $y_i$. As in the univariate case, we compare the sparsity over time of our reweighted $\ell^1$ algorithm for $p \in \{0.4, 0.7, 1 \}$, implemented in PyTorch using the Adam optimizer, against that of Adam-only (no explicit regularization) and AdamW weight decay. All networks are trained using MSE loss for 100,000 epochs with learning rate $\gamma = 0.01$, regularization parameter $\lambda = 0.005$ (except for unregularized Adam-only, which uses $\lambda = 0$), and hidden layer width $K = 100$. \cref{fig:sparsity_over_time_high_d} shows that all values of $p$ produce much sparser solutions than Adam-only and AdamW weight decay, with $p = 0.4$ producing sparser solutions than $p \in \{0.7, 1 \}$. The solutions recovered by $p \in \{0.4, 0.7, 1 \}$ all obey the sparsity upper bound of $2N$ guaranteed by the proof of \cref{prop:width_invariance_multivar}.

\begin{figure}
    \centering
    \includegraphics[width=\linewidth]{figs/sparsity_over_time_high_d.pdf}
    \caption{Sparsity over time of five networks trained to interpolation on $N = 10$ uniform random data points in $d = 50$. The solutions obtained by the $\ell^1$ algorithm (\cref{alg:rw_l1}) for $p \in \{0.4, 0.7, 1 \}$ satisfy the sparsity upper bound of $2N$ guaranteed by the proof of \cref{prop:width_invariance_multivar}.}
    \label{fig:sparsity_over_time_high_d}
\end{figure}

\end{appendices}
\end{document}